\newtheorem{theorem}{Theorem}[section]
\newtheorem{lemma}[theorem]{Lemma}
\title{Offline Goal-Conditioned Reinforcement Learning\\
with Projective Quasimetric Planning
}
\author{
    Anthony Kobanda\textsuperscript{1,3},
    Waris Radji\textsuperscript{3},
    Mathieu Petitbois\textsuperscript{1,2},\\
    \textbf{Odalric-Ambrym Maillard\textsuperscript{3}},
    \textbf{Rémy Portelas\textsuperscript{1}}\\\\
    $^{1}${Ubisoft la Forge, Bordeaux, France},
    $^{2}${University of Angers, France},\\
    $^{3}${Inria, Univ. Lille, CNRS, Centrale Lille, UMR 9198-CRIStAL, F-59000 Lille, France}
}
\begin{document}

    \maketitle


    \begin{abstract}
Offline Goal-Conditioned Reinforcement Learning seeks to train agents to reach specified goals from previously collected trajectories. 
Scaling that promises to long-horizon tasks remains challenging, notably due to compounding value-estimation errors.
Principled geometric
offers a potential solution to address these issues.
Following this insight, we introduce \textbf{Projective Quasimetric Planning} (\texttt{ProQ}),
a compositional framework that learns an asymmetric distance and then repurposes it, firstly as a repulsive energy forcing a sparse set of keypoints to uniformly spread over the learned latent space, and secondly as a structured directional cost guiding towards
proximal sub-goals.
In particular, \texttt{ProQ} couples this geometry with a\linebreak
Lagrangian out-of-distribution detector to ensure the learned keypoints stay within reachable areas.
By unifying metric learning, keypoint coverage, and goal-conditioned control, our approach produces meaningful sub-goals and robustly drives long-horizon goal-reaching on diverse a navigation benchmarks.
\end{abstract}

    \section{Introduction}
\label{sec:introduction}

Humans improve knowledge and master complex skills by iteratively exploring and refining internal notions of proximity ("\textit{How far am I from success ?"}) and progress ("\textit{Is this action helpful ?}") \citep{practice,internal}.
Reinforcement learning (RL) \citep{rl} offers the computational analogue, combining trial-and-error with value function approximation to learn algorithms and agents to solve sequential decision problems,
extended to high-dimensional perception and control with
Deep Reinforcement Learning (DRL) \citep{deeprl}.

To build AI agents in game productions, live exploration is costly ; however, vast logs of diverse human play exist. Offline RL addresses data-efficiency concerns by learning from a fixed set of trajectories collected a priori \citep{offlinerl}. Without the ability to query new actions, the agents must optimize their policy, eventually beyond the dataset behavior distribution \cite{iql,awr}. Nevertheless, accurately estimating long-horizon value functions is notably difficult as bootstrapping errors may occur \citep{h_iql}.

By conditioning an agent's policy on a target outcome, Goal-Conditioned RL (GCRL) repurposes the sequential decision problem as a directed reachability task \citep{gcrlsurvey,gcrl1993}. 
Although techniques like Hindsight Experience Replay (HER) \cite{her} mitigate sparse rewards and Hierarchical Policies (HP) \cite{h_iql,h_gcoff} ease path planning, most methods fall short when tackling complex long-horizon navigation problems, which is our main focus in our research.
Recent benchmarks like D4RL \citep{d4rl} and OGBench \citep{ogbench} highlight these difficulties within diverse environments and multiple dataset distributions.

To tackle these failures, a relevant remedy to consider is to simplify the planning problem by learning a latent representation that captures reachability.
Contrastive and Self-Supervised \citep{curl,ssgcrl} methods cluster \textit{similar} states but lack calibrated distances ; Bisimulation metrics and MDP abstractions\linebreak
\citep{bisim,deepmdp} preserve dynamics but involve variational objectives and offer no out-of-distribution guard ;\linebreak
Current landmark-based mapping methods pre-compute distances to fixed keypoints but may suffer coverage gaps \citep{mapping} ; Recently, HILP \citep{hilp} took an interesting geometric turn by embedding states into a Hilbert space where straight-line length, or Euclidean norm, approximates time-to-reach. However, at test time, it must scan the entire offline dataset for nearest neighbors, and its enforced symmetry overlooks the inherently directional nature of goal reachability.

\vspace{-0.33em}
Directionality matters : descending a cliff is simple ; climbing back is not.
Quasimetric RL (QRL) \citep{iqe,qrl}
embraces this asymmetry,
learning a quasimetric (non symmetric distance) upper-bounding transitions to an unit cost while \textit{pushing apart} non-successor states, and consequently approximates the minimum reach time between any states. Nonetheless, the policy learning is brittle as it \textit{only} relies on model-based rollouts, which fail in long-horizon settings as inaccuracies accumulate.

\vspace{-1em}
\begin{figure}[h!]
    \includegraphics[width=\linewidth]{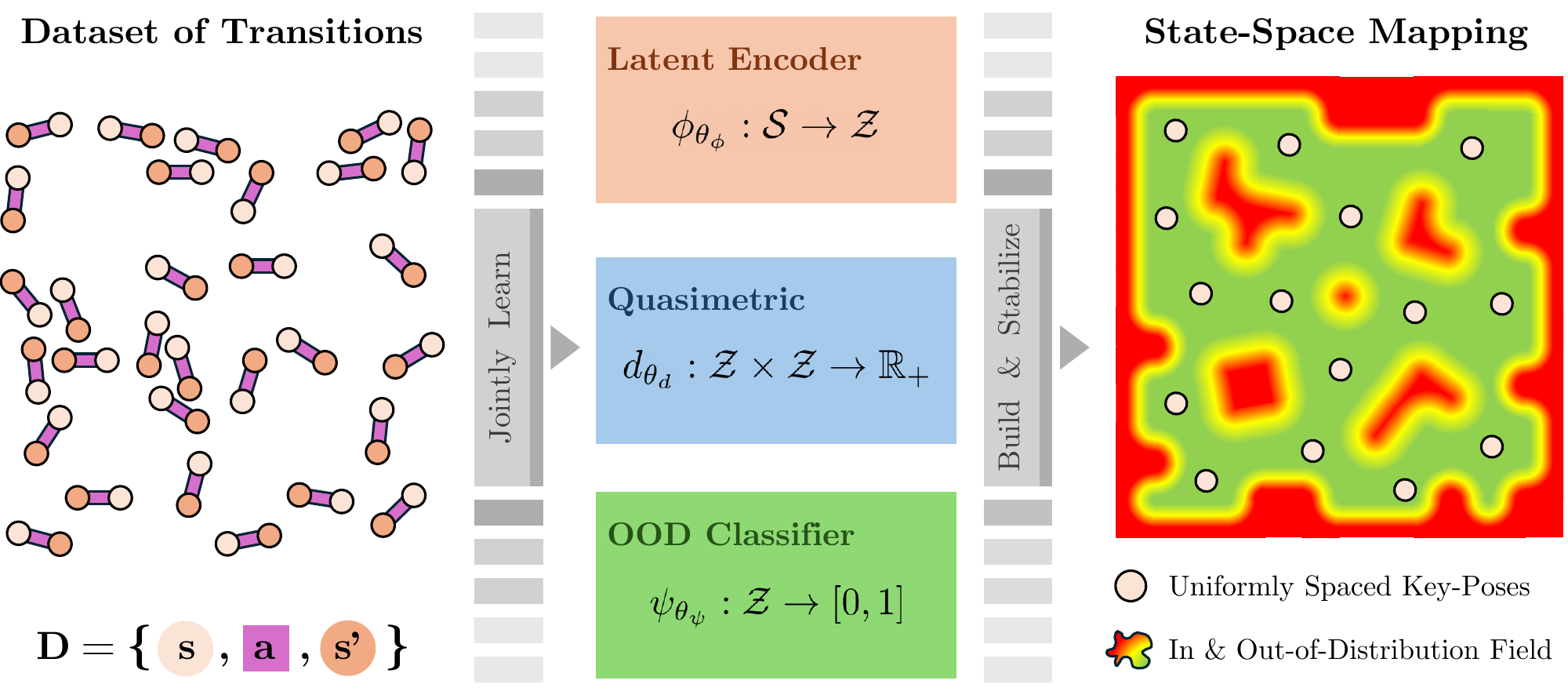}
    
    \vspace{-0.75em}
    \caption{\small\textbf{Projective Quasimetric Planning (\texttt{ProQ}) : Building a Precise State-Space Mapping}.
    From left to right, we show how \texttt{ProQ} turns unlabeled traces into a geometry-aware navigation map:\linebreak 
    \textbf{(a)} We start with a dataset of transitions ;
    \textbf{(b)} We jointly train : 
    $\phi_{\theta_\phi}$, an encoder ;
    $d_{\theta_d}$, a quasimetric ;
    $\psi_{\theta_\psi}$, an out-of-distribution classifier ;
    \textbf{(c)} Using $\phi$, $d$, and $\psi$, we initialize a small set of latent keypoints and let them evolve as identical particles under two  \textit{energy} based forces : 
    a Coulomb repulsion ensuring they uniformly spread across the latent space ;
    an OOD barrier keeping them within the in-distribution manifold ;
    \textbf{(d)}
    To navigate the resulting space, we do path planning using Floyd-Warshall and action selection with an AWR-trained policy.
    }
    \label{fig:intro:main}
\end{figure}

\vspace{-1em}
To address long-horizon navigation, we present \textbf{Projective Quasimetric Planning}, or \texttt{\textbf{ProQ}},
a physics-inspired mapping framework that converts an offline dataset into a finite and uniformly covering set of safe keypoints (see \textit{Figure} \ref{fig:intro:main}).
The latent geometry is shaped in a model-free way by two energy functions : a Coulomb term that pushes keypoints apart and an entropic barrier that keeps them in-distribution.
Since the state encoder, the quasimetric, and the barrier (OOD discriminator) are learned simultaneously, they regularizes the others, yielding a smooth, well-covered manifold.

\vspace{-0.33em}
We use a goal-conditioned policy trained with Advantage Weighted Regression (AWR) \cite{awr} to generate actions toward nearby latent targets, a simpler learning task than direct long-range navigation. For distant goals, we employ efficient Floyd-Warshall lookups across keypoints. This approach avoids distance symmetry constraints of HILP and scales memory independently of dataset size.

\vspace{-0.25em}
Our main contributions are :

\vspace{-0.5em}
\begin{itemize}[leftmargin=*]
    \item \textbf{A unified latent geometry :} As we jointly learn a state-space encoder, a quasimetric, and an out-of-distribution classifier, the resulting space is bounded, directional, and safely calibrated. 
    \item \textbf{A physics-driven latent coverage :} Coulomb repulsion uniformly and maximally spreads keypoints as identical particles, while a continuously repulsive OOD barrier keeps them in-distribution. 
    \item \textbf{A graph-based navigation pipeline :} Planning reduces to simple greedy hops between keypoints, making long-horizon navigation solvable with short-horizon \textit{optimally} selected sub-goals.
\end{itemize}
\vspace{-0.25em}
While \textit{Section} \ref{sec:related-work} reviews the relevant and related literature,
\textit{Section} \ref{sec:preliminaries} presents the necessary theoretical background contextualizing our research. In \textit{Section} \ref{sec:method}, we detail
our approach, going through the learning process and the inference pipeline. \textit{Section} \ref{sec:experiments} presents our experimental protocol and results.
We conclude in \textit{Section} \ref{sec:discussion}, notably with the limitations of our work and the next stages considered.

    \section{Related Work}
\label{sec:related-work}

\vspace{-0.5em}
\paragraph{Offline Goal-Conditioned Reinforcement Learning (GCRL)} 
As the name implies, Offline GCRL lies at the intersection of Offline RL \citep{batchrl,offlinerl} and GCRL \cite{gcrl1993,gcrlsurvey}, and aims to address two main challenges : trajectory stitching and credit assignment \citep{d4rl,ogbench}.
Prior works tackled those issues by leveraging relabeling methods. For instance, \cite{gcsl,wgcsl} apply Hindsight Experiment Replay (HER) \cite{her} with Imitation Learning. Leveraging this basis, \cite{actionablemodels,iql} additionally performs value learning and policy extraction through Offline Q-Learning. Nevertheless, both kinds of methods struggle\linebreak
in long-range scenarios, notably when the value estimates suffer from a low \textit{signal-to-noise} ratio.

\vspace{-1em}
\paragraph{Hierarchical and Graph Methods for Long-Horizon Planning} Decomposing the decision-making by leveraging a hierarchical structure is a long-standing idea \citep{schmidhuber1991learning,sutton1999between}. Recent work showed its interest in solving long-horizon tasks in online settings \citep{feudalhnets, ris, hdrl, lmlhh, dataeff, nearopt, gacshrl} and in offline ones \citep{h_iql, qphil}. 
While many Hierarchical GCRL (HGCRL) methods relate to the offline skill extraction literature \cite{opal, tap, ddco, acrllsp, tacorl, sbmbrl, por, h_iql, hilp, hilow}, which aims to decompose the goal-reaching task into a succession of sub-tasks (as trajectory segments, directions, or precise subgoals), our method relates more to the graph planning strategies. 
Indeed, learning graphs for planning has been broadly used in the online setting to alleviate exploration problems \cite{sorb, sfllggcrl, msplugr, lgsghrl, igbpgcp, pgcp, sptmn, wmglllp, tpe, dhrl, cqm}, unlike \texttt{ProQ}, which targets the offline setting. 
Recent work started to use graph learning as a way to ease long-range planning in the offline settings \citep{vmg, qphil}, however without relying on a quasimetric distance, which allow in our framework the learning of a uniformly spanning set of in-distribution keypoints.

\vspace{-1em}
\paragraph{Representation and Latent Metric Learning}Leveraging temporally meaningful representations
for planning in a complex state space has been a growing research topic in GCRL. 
While several prior works relied on contrastive losses to encode temporal similarity within the representations \citep{crl, qphil, plan2vec, minedojo, vmg, r3m}, they did so without a real metric calibration, as \texttt{ProQ} does. 
Instead of a contrastive approach, our method leverages a distance function within a latent state space to allow for metric-grounded planning. 
Likewise, Foundation Policies with Hilbert Representations (HILP) \cite{hilp} builds a Hilbert representation space to encode temporal similarity of states within its induced metric, but without taking into account asymmetry like our framework does.
Additionally, Quasimetric Reinforcement Learning (QRL) \cite{qrl}, which is closely related to our proposed method, leverages Interval Quasimetric Embeddings (IQE) \cite{iqe} to learn an asymmetric distance-value function. However, while QRL uses it to extract a policy through a valued model-based approach, we leverage it to guide the learning of a uniformly distributed set of keypoints to perform long-range planning.

    \section{Preliminaries}
\label{sec:preliminaries}

\vspace{-0.5em}
We consider a \textbf{Markov Decision Process (MDP)} as tuple $\mathcal{M} = \big(\ \mathcal{S},\ \mathcal{A},\ \mathcal{P}_{\mathcal{S}},\ {\mathcal{P}_\mathcal{S}}^{(0)},\ \mathcal{R},\ \gamma\ \big)$,
which provides a formal framework for RL, where $\mathcal{S}$ is a state space, 
$\mathcal{A}$ an action space, 
$\mathcal{P}_{\mathcal{S}} : \mathcal{S} \times \mathcal{A} \rightarrow \Delta(\mathcal{S})$ a transition function,
${\mathcal{P}_{\mathcal{S}}}^{(0)} \in \Delta(S)$ an initial distribution over the states, 
$\mathcal{R} : \mathcal{S} \times \mathcal{A} \times \mathcal{S} \rightarrow \mathbb{R}$ a deterministic reward function, and $\gamma\in\ ]0,1 ]$ a discount factor.
An agent’s behavior follows a
policy $\pi_\theta : \mathcal{S} \rightarrow \Delta(\mathcal{A})$, parameterized by $\theta\in\Theta$.
In Reinforcement Learning, we aim to learn optimal parameters $\theta^*_\mathcal{M}$ maximizing the expected cumulative reward $J_{\mathcal{M}}(\theta)$ or the success rate $\sigma_{\mathcal{M}}(\theta)$.

\vspace{-0.25em}
\textbf{Offline Goal-Conditioned RL (GCRL)} extends the MDP to include a goal space $\mathcal{G}$, introducing ${\mathcal{P}_{\mathcal{S},\mathcal{G}}}^{(0)}$ an initial state and goal distribution, $\phi:\mathcal{S}\rightarrow\mathcal{G}$ a function mapping each state to the goal it represents, and $d:\mathcal{G}\times\mathcal{G}\rightarrow\mathbb{R}^+$ a distance metric on $\mathcal{G}$. The policy $\pi_\theta:\mathcal{S}\times\mathcal{G}\rightarrow\Delta(\mathcal{A})$ and the reward function $\mathcal{R} : \mathcal{S} \times \mathcal{A} \times \mathcal{S} \times \mathcal{G} \rightarrow \mathbb{R}$ are now conditioned on a goal $g\in\mathcal{G}$. In our experimental setup, we consider sparse rewards allocated when the agent reaches the goal within a range $0 \leq \epsilon$ : $\mathcal{R}(s_t,a_t,s_{t+1},g) = \mathbbm{1}\big(\ d(\phi(s_{t+1}),g)\ \scalebox{0.75}{$\leq$}\ \epsilon\ \big)$. 
Given a dataset $\mathcal{D}=\big\{ (s,a,r,s',g)  \big\}$,
the policy loss is optimized to reach the specified goals. 
This formulation represents the core problem we address.

\vspace{-0.25em}
Unlike traditional metrics, which are symmetric, an optimal value function in GCRL often exhibits asymmetry as an optimal path from a state A to a state B is may not be the same to go from B to A. This property aligns with the concept of a \textbf{Quasimetric} \cite{onquasi}, a function $d:\mathcal{G}\times\mathcal{G}\rightarrow\mathbb{R}^+$ satisfying three rules :
\textit{Non-negativity} : $d(x,y)\geq0$ ;
\textit{Identity of indiscernibles} : $d(x,x)=0$ ;
\textit{Triangle inequality} : $d(x,z)\leq d(x,y)+d(y,z)$ ;
but not necessarily symmetry : $d(x,y)\neq d(y,x)$.

To model such a distance, we consider :
\textbf{Interval Quasimetric Embeddings (IQE)} \citep{iqe}, and its continuation, \textbf{Quasimetric Reinforcement Learning (QRL)} \citep{qrl}. IQE represents the quasimetric as the aggregated length of intervals in a latent space and is a universal approximation for distances.

\vspace{-0.25em}
IQE considers input latents as two-dimensional matrices (via reshaping).
Given an encoder function from state space $\mathcal{S}$ to a latent space $\mathbb{R}^{N\times M}$ $f_{\theta'}:\mathcal{S}\rightarrow\mathbb{R}^{N\times M}$, the distance from $x\in\mathcal{S}$ to $y\in\mathcal{S}$ is formed by components that
capture the total size (i.e., Lebesgue measure) of unions of several intervals on the real line :
\begin{equation}
    \forall~i=1,\ldots,N\ \ \ \ \ \ \ \ \ \
    d_{\theta'}^{(i)}(x,y)=
    \Bigg|
    \bigcup_{j=1}^M
    \Big[
        f_{\theta'}(x)_{ij},
        \max\big(f_{\theta'}(x)_{ij},f_{\theta'}(y)_{ij}\big)
    \Big]
    \Bigg|
\label{eq-iqe-comp}
\end{equation}

\vspace{-0.5em}
IQE components are positive homogeneous and can be arbitrarily scaled, and thus do not require special reparametrization in combining them. Using the \textit{maxmean} reduction from prior work \citep{pitis}, we obtain IQE-\textit{maxmean}
with a single extra trainable parameter $\alpha\in[0,1]$, and considering $\theta=[\theta',\alpha]$ :

\vspace{-1.75em}
\begin{equation}
    d_\theta(x,y)=
    \alpha\cdot
    \max\big(d_{\theta'}^{(1)}(x,y),..,d_{\theta'}^{(N)}(x,y)\big)
    +(1-\alpha)\cdot
    \text{mean}\big(d_{\theta'}^{(1)}(x,y),..,d_{\theta'}^{(N)}(x,y)\big)
\label{eq-iqe}
\end{equation}

\vspace{-0.25em}
\begin{theorem}
\textbf{IQE Universal Approximation General Case} (See \citep{iqe} for the demonstration) \textbf{:}
Consider any quasimetric space $(\mathcal{X},d)$
where $\mathcal{X}$ is compact and $d$ is continuous.
$\forall~\epsilon>0$,- with sufficiently large $N$,
there exists a continuous encoder $f_{\theta'}:\mathcal{X}\rightarrow\mathbb{R}^{N\times M}$ and $\alpha\in[0,1]$ such that :
$$\forall x,y\in\mathcal{X},~\big|d_\theta(x,y)-d(x,y)\big|\le\epsilon~.$$
\end{theorem}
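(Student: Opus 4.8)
The plan is to reduce the general max-mean IQE reduction of \eqref{eq-iqe} to a finite maximum of the \emph{simplest} possible intervals, and then to approximate an exact supremum-representation of $d$ by such a finite maximum. First I would fix the free parameters to their most convenient values: take $\alpha = 1$, so the maxmean combination collapses to $d_\theta = \max_i d_{\theta'}^{(i)}$, and make the encoder output constant across the $M$ columns, i.e. $f_{\theta'}(x)_{i1} = \cdots = f_{\theta'}(x)_{iM} =: u_i(x)$. With identical columns the union in \eqref{eq-iqe-comp} is a single interval $[u_i(x),\max(u_i(x),u_i(y))]$, so each component reduces to the one-sided difference $d_{\theta'}^{(i)}(x,y) = (u_i(y) - u_i(x))^+$. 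This works for any fixed $M \ge 1$ and shows that the relevant target is to realize $\max_i (u_i(y)-u_i(x))^+ \approx d(x,y)$ with continuous scalar coordinates $u_i$.

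The key structural fact I would exploit is the quasimetric identity $d(x,y) = \sup_{z\in\mathcal{X}} (d(z,y) - d(z,x))^+$. This follows directly from the triangle inequality: for every $z$, $d(z,y) \le d(z,x) + d(x,y)$ gives $d(z,y) - d(z,x) \le d(x,y)$, while the choice $z = x$ attains $d(x,y) - d(x,x) = d(x,y) \ge 0$, so the supremum equals $d(x,y)$ and inserting the positive part is harmless. Writing $u_z := d(z,\cdot)$, this exhibits $d$ as a supremum, over the infinite family of continuous functions $\{u_z\}_{z\in\mathcal{X}}$, of exactly the one-sided differences that single intervals produce. The remaining task is to replace this supremum over $\mathcal{X}$ by a maximum over a finite, well-chosen set of encoder coordinates $u_i = u_{z_i} = d(z_i,\cdot)$.

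For the finite approximation I would invoke compactness. Since $d$ is continuous on the compact space $\mathcal{X}\times\mathcal{X}$ it is uniformly continuous, so $z\mapsto u_z$ is a continuous map into $C(\mathcal{X})$ whose image is compact, hence totally bounded in sup-norm. Fix $\delta = \epsilon/2$ and choose a finite $\delta$-net $z_1,\dots,z_N$ so that every $u_z$ lies within $\delta$ of some $u_{z_i}$. Setting $f_i(x) = d(z_i,x)$ yields a continuous encoder, and the upper bound is immediate since each $(u_{z_i}(y)-u_{z_i}(x))^+ \le \sup_z(\cdots)^+ = d(x,y)$. For the matching lower bound, apply the net at the maximizer $z^\star = x$: the nearby net point $z_i$ with $\|u_{z_i}-u_x\|_\infty \le \delta$ satisfies $u_{z_i}(y)-u_{z_i}(x) \ge d(x,y) - 2\delta$, whence $\max_i(\cdots)^+ \ge d(x,y) - \epsilon$. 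Combining the two bounds gives $|d_\theta(x,y) - d(x,y)| \le \epsilon$ uniformly in $(x,y)$.

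I expect the main obstacle to be the \emph{uniformity} of this last step: a single finite net $\{z_i\}$, with $N$ chosen independently of $(x,y)$, must simultaneously supply, for every $x$, a coordinate $u_{z_i}$ that approximates $u_x$ uniformly over all $y$. This is precisely where compactness of $\mathcal{X}$ and the equicontinuity of $\{u_z\}$ coming from the uniform continuity of $d$ are indispensable; without them the supremum could fail to be approximable by any finite maximum of one-sided differences at a rate uniform in the evaluation pair.
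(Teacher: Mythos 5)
Your argument is correct and complete: reducing IQE to single intervals via identical columns and $\alpha=1$ gives $d_\theta(x,y)=\max_i\,(u_i(y)-u_i(x))^+$, the identity $d(x,y)=\sup_{z}(d(z,y)-d(z,x))^+$ follows from the triangle inequality with the supremum attained at $z=x$, and the finite net supplied by compactness yields the two-sided bound $d(x,y)-\epsilon\le d_\theta(x,y)\le d(x,y)$ uniformly. Note that the paper itself offers no proof of this statement --- it defers entirely to the cited IQE reference --- so there is nothing in the text to match your argument against line by line; your write-up is a self-contained substitute. Compared with the cited route, which establishes universality by showing that IQE components realize arbitrary maxes of one-sided differences and then invoking the known embedding of quasimetrics into products of $(\mathbb{R},(a,b)\mapsto(b-a)^+)$, you prove that embedding lemma directly, which is the more elementary and transparent path. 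Two small remarks: the lower bound $(u_{z_i}(y)-u_{z_i}(x))^+\ge d(x,y)-\epsilon$ should be split into the trivial case $d(x,y)\le\epsilon$ (where the positive part is $\ge 0$ suffices) and the case $d(x,y)>\epsilon$ (where the difference is positive so the positive part is the difference itself); and the continuity of $z\mapsto d(z,\cdot)$ in sup-norm that you derive from uniform continuity of $d$ actually follows even more directly from the triangle inequality, since $\lVert u_z-u_{z'}\rVert_\infty\le\max\bigl(d(z,z'),d(z',z)\bigr)$, which tends to $0$ by continuity of $d$ and $d(z,z)=0$. Neither point affects the validity of your proof.
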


\vspace{-0.75em}
IQE formulation captures directionality and satisfies the quasimetric properties, making it suitable for modeling asymmetric relationships in GCRL. QRL focuses on learning a quasimetric from data that aligns with the optimal value function, with the learning objective over an IQE function enforcing :
\textit{Local Consistency}, as for an observed transition $(s,a,s')$ $d(s,s')=1$ ; and \textit{Global Separation}, as for non-successor pairs $(s,s'')$ it ensures  $d(s,s'')$ is maximal. The loss function combines these aspects :
\begin{equation}
\mathcal{L}_{QRL}(\theta) =
\underset{\lambda\geq0}{\texttt{max}}
\ \mathbb{E}_{(s,s',s'')\sim\mathcal{D}}\Big[
\lambda\cdot\big(\texttt{relu}(d_\theta(s,s')-1)^2-\epsilon^2\big)
-
\omega\big(d_\theta(s,s'')\big)\Big]\ ,
\label{eq-qrl}
\end{equation}
where $\epsilon$ relaxes the local constraint and consequently eases the training, while $\omega$ is
a monotonically increasing convex function to speed the convergence and stabilize the learning process.

\begin{theorem}
    \textbf{QRL Function Approximation [General,Formal]} (See \cite{qrl} for the demonstration) \textbf{:}\linebreak
    Consider a compact state space $\mathcal{S}$ and an optimal value function $V^*$. We suppose that the state space is the goal space ($\mathcal{S}=\mathcal{G}$).
    If $\{d_\theta\}_{\theta\in\Theta}$ are universal approximators of quasimetrics over $\mathcal{S}$ (e.g. IQE) :\linebreak
    $$
    \forall\epsilon>0,~\exists\theta^*\in\Theta,
    \forall s,g\in\mathcal{S},~ 
    \mathbb{P}\Big[
    \big|
    d_{\theta^*}(s,g)+(1+\epsilon)V^*(s,g)
    \big|
    \in\big[-\sqrt{\epsilon},0\big]
    \Big]=1-\mathcal{O}\big(-\sqrt{\epsilon}\cdot
    \mathbb{E}[V^*]\big)
    $$
    i.e. $d_{\theta^*}$ recovers $-V^*$ up to a known scale, with probability $1-\mathcal{O}\big(-\sqrt{\epsilon}\cdot
    \mathbb{E}[V^*]\big)$ .
\end{theorem}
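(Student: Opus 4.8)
The plan is to reduce the statement to the IQE Universal Approximation Theorem recalled above by recognizing that, under the sparse step-cost reward of the GCRL setup, the negated optimal value function $-V^*$ is itself a quasimetric that the IQE family can approximate. First I would fix the reward convention so that reaching the goal terminates the episode and every non-terminal step incurs unit cost; then the optimal goal-conditioned value $V^*(s,g)$ equals the negative of the optimal reach time $d^*(s,g)$, i.e. the minimal expected number of steps to drive $s$ to $g$. The crux of the argument is therefore to verify that $d^* := -V^*$ satisfies the three quasimetric axioms stated in the Preliminaries, and then to transport the approximation guarantee through the identity $d^* = -V^*$.

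Next I would check those axioms. Non-negativity and the identity of indiscernibles ($d^*(s,s)=0$) are immediate from the step-cost definition. The triangle inequality $d^*(s,z)\le d^*(s,y)+d^*(y,z)$ is the key structural fact and follows from Bellman optimality: reaching $y$ optimally and then $z$ optimally is an admissible but generally suboptimal way to reach $z$ from $s$, so the optimal cost cannot exceed the concatenated cost. Asymmetry is inherited from the directionality of $\mathcal{P}_{\mathcal{S}}$ and poses no obstruction. I would also record the continuity of $d^*$ on the compact product $\mathcal{S}\times\mathcal{S}$, which is the regularity hypothesis demanded by the IQE theorem; this is where an assumption on the MDP (e.g. Lipschitz dynamics, or full reachability so that $d^*<\infty$) is genuinely required, and I expect this finiteness and continuity check to be the main obstacle.

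With $d^*$ established as a continuous quasimetric on a compact space, I would invoke the IQE Universal Approximation Theorem: for every $\delta>0$ there exist an encoder $f_{\theta'}$ and $\alpha\in[0,1]$ with $\sup_{s,g}\,|d_{\theta^*}(s,g)-d^*(s,g)|\le\delta$. The residual $(1+\epsilon)$ rescaling is then explained by the relaxation built into $\mathcal{L}_{QRL}$, whose softened local constraint only enforces $d_\theta(s,s')\le 1+\epsilon$ on transitions. Telescoping the triangle inequality along an optimal path inflates every distance by at most this per-step factor, giving the upper bound $d_{\theta^*}(s,g)\le(1+\epsilon)\,d^*(s,g)=-(1+\epsilon)V^*(s,g)$, i.e. $d_{\theta^*}(s,g)+(1+\epsilon)V^*(s,g)\le 0$; the global-separation term of $\mathcal{L}_{QRL}$ together with the $\delta$-approximation supplies the matching lower bound $\ge-\sqrt{\epsilon}$ after choosing $\delta\asymp\sqrt{\epsilon}$, which is precisely the interval $[-\sqrt{\epsilon},0]$ in the statement.

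Finally I would handle the probabilistic qualifier. Because the separation term is an expectation over pairs drawn from $\mathcal{D}$, the recovery of $d^*$ is only guaranteed on the support of that sampling distribution, so I would convert the $\epsilon$-slack on the local constraint into a Markov- or Chebyshev-type tail bound on the set of pairs where the gap exceeds $\sqrt{\epsilon}$, which is what produces the $1-\mathcal{O}\big(\sqrt{\epsilon}\cdot\mathbb{E}[V^*]\big)$ high-probability form. Beyond the continuity/finiteness issue flagged above, the other delicate point I anticipate is showing that the \emph{maximizer} of the relaxed QRL objective coincides with the pointwise-maximal local-consistent quasimetric rather than some spurious inflated distance; once that identification is secured, the remaining steps are bookkeeping on the uniform approximation error.
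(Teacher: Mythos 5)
The paper does not actually prove this statement---it is imported (with some typographical distortion of the probability clause) from \cite{qrl}, so there is no in-paper proof to compare against. Your outline follows essentially the same route as the original QRL demonstration: identify $-V^*$ with the minimal hitting time $d^*$ under the unit step-cost convention, verify the quasimetric axioms (triangle inequality by concatenating optimal plans through the intermediate state), invoke the IQE universal-approximation theorem to place a near-copy of $(1+\epsilon)\,d^*$ inside the feasible set, derive the upper bound $d_{\theta^*}(s,g)+(1+\epsilon)V^*(s,g)\le 0$ by telescoping the relaxed local constraint along an optimal path, and obtain the matching lower bound $\ge -\sqrt{\epsilon}$ from the fact that the separation term is maximized subject to that constraint, with a Markov-type tail argument converting expectation-level guarantees into the high-probability pointwise form. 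Two points in your sketch deserve sharpening. First, the local constraint in $\mathcal{L}_{QRL}$ is enforced only in expectation over dataset transitions, so the telescoped inequality $d_{\theta^*}(s,g)\le(1+\epsilon)\,d^*(s,g)$ is itself only a high-probability statement over the pair distribution; you route the tail bound exclusively through the lower bound, but it is needed for the upper bound as well, and this is precisely where the $\mathcal{O}\bigl(\sqrt{\epsilon}\cdot\mathbb{E}[V^*]\bigr)$ failure mass originates. Second, the continuity and finiteness of $d^*$ on $\mathcal{S}\times\mathcal{S}$, which you correctly flag as the regularity hypothesis required by the IQE theorem, is a genuine additional assumption on the MDP (full reachability plus regular dynamics) that neither the theorem statement as transcribed here nor your sketch actually discharges; it should be stated explicitly as a hypothesis rather than left as an anticipated obstacle.
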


\vspace{-0.75em}
Since one of our objectives is to map the goal space $\mathcal{G}$ with a set of keypoints $\{z_k\}_{k=1}^K$, to ensure comprehensive coverage, we draw inspiration from physics, particularly the repulsive forces observed in charged particles.
In electrostatics, according to Coulomb's law, two popositively charged particles located at $x_1,x_2\in\mathbb{R}^3$
experience a potential :
$$V(x_1,x_2)=\frac{q_1 q_2}{4\pi\epsilon_0}\cdot\frac{1}{\lVert x_1-x_2\rVert_2}~,$$
where $q_1,q_1\in\mathbb{R}$ are their charges, $\epsilon_0$ is the vacuum permittivity, and $\lVert\cdot\rVert_2$ is the Euclidean norm.

\vspace{-0.75em}
For K charges, when all charges are equal ($q_i=q_j$),
we often only study the total potential energy :
$$E(x_1,\ldots,x_K)=\sum_{1\leq i<j\leq K}\frac{1}{\lVert x_i-x_j\rVert_2}~.$$
Minimizing this energy $E(x_1,\ldots,x_K)$ on a compact domain (e.g. the unit sphere $\mathbb{S}^2$) yields the classical Thomson Problem \citep{thompsonsphere,thompsonatomic} of finding well-separated, nearly uniform point configurations.

Adapting this concept, we model a set of keypoints $\{z_k\}_{k=1}^K$
that repel each other to achieve uniform distribution. 
As a disclaimer, with further explanation provided in the \textit{Section \ref{sec:method}},
this analogy serves as a guide ; we do not simulate physical dynamics but rather use the concept for our loss design.


    \newpage
\section{Projective Quasimetric Planning}
\label{sec:method}

\vspace{-0.5em}
We now describe Projective Quasimetric Planning (\texttt{ProQ}).
\textit{Section \ref{sec:method:Overview}} offers a summary of the algorithm's structure.
\textit{Section \ref{sec:method:Learning-Latent-Space}} elaborates on how \texttt{ProQ} learns a robust latent representation of the state space. Next, \textit{Section \ref{sec:method:Uniformly-Mapping-Latent-Space}} details the physics-inspired mechanism to learn
keypoints within this learned latent space. Finally, \textit{Section \ref{sec:method:graph}} describes how these keypoints are leveraged for planning.

\subsection{ProQ Learning Algorithm : Overview}
\label{sec:method:Overview}

\vspace{-0.5em}
\texttt{ProQ} is a compositional framework that learns robust state-space representations from offline data and enables efficient goal-conditioned planning through graph-based navigation. The approach jointly trains three neural components: a state encoder $\phi$, an asymmetric quasimetric distance function $d$, and an out-of-distribution classifier $\psi$. To enable long-horizon planning, \texttt{ProQ} also learns a sparse set of latent keypoints that provide uniform coverage of the learned geometry, forming a navigation graph for efficient goal-reaching. The learning pipeline is presented in \textit{Algorithm \ref{alg:proq}}, with a thorough description and explanation in \textit{Section \ref{sec:method:Learning-Latent-Space}}. The inference pipeline is presented in \textit{Algorithm \ref{alg:proq_inference}}

\vspace{-0.5em}
\begin{algorithm}[H]
\caption{\textbf{ProQ -- Projective Quasimetric Planning (Learning Pipeline)}} \label{alg:proq}
\begin{algorithmic}[1]
\Require dataset $\mathcal{D}$ ; batch size $B$, nb. of keypoints $K$ ; nb. of epochs $E$ ;
multipliers $\lambda_{\text{dist}},\lambda_{\text{ood}},\lambda_{\text{kps}}$.\\
\vspace{0.33em}
\textbf{Init:} neural networks $\phi_{\theta_\phi}$ (latent encoder) , $\psi_{\theta_\psi}$ (OOD detector), $d_{\theta_d}$ (quasimetric), $\pi_{\theta_\pi}$ (actor) ;\\
\textbf{Init:} keypoints $\{p_k\}_{k=1}^K\subset\mathcal D$ ;
\vspace{0.4em} 
\For{$t=1$ to $E$}
\vspace{0.33em}
    \State \textbf{Sample a batch of $B$ transitions} and random value-actor goals :
        $\left\{(s_i,a_i,s'_i,g^v_i,g^a_i)\right\}_{i=1}^{B}$
        \vspace{0.15em} 
    \State \textbf{Convex interpolation} (for OOD learning) :
        $\tilde s^c_i = (1-\alpha_i)s_i + \alpha_i g^v_i\ , \ \alpha_i\sim U[0,1]$
        \vspace{0.15em} 
    \State \textbf{Convex extrapolation} (for OOD learning) :
        $\tilde s^e_i = (1+\beta_i)s_i - \beta_i g^v_i\ , \ \beta_i\sim U[0,1]$
    \State \textbf{Encode} :
        $s_i,s'_i,g^v_i,g^a_i,\tilde s^c_i,\tilde s^e_i
        \overset{\phi}{\rightarrow}
        z_i,z'_i,z^v_i,z^a_i,\tilde z^c_i,\tilde z^e_i$
        \vspace{0.66em}
    \State \underline{\textit{(A) Latent–space learning} ( {these losses are back-propagated on all $\phi,\psi,d$} )} : \textit{Section \ref{sec:method:Learning-Latent-Space}}
    \vspace{0.5em} 
    \State \textbf{Representation Loss :}
        $\mathcal{L}_{\text{repr}}(z_i)=\mathcal{L}_{\text{var}}(z_i)+\mathcal{L}_{\text{cov}}(z_i)$
    \ \ \textit{( Eq. \ref{eq-vicreg-std} \& \ref{eq-vicreg-cov} )}
    \vspace{0.33em}
    \State \textbf{Distance loss :}
        $\mathcal{L}_{\text{dist}}(z_i,z_i',z_i^v)=\lambda_{\text{dist}}\cdot\mathcal{L}_{\text{close}}(z_i,z_i')+\mathcal{L}_{\text{far}}(z_i,z_i^v)$
    \ \ \textit{( Eq. \ref{eq-quasi} \& \ref{eq-quasi-far} )}
    \vspace{0.33em}
    \State \textbf{OOD loss :}
      $\mathcal{L}_{\text{ood}}(z_i,\tilde{z}_i^c,\tilde{z}_i^e)=\mathcal{L}_{\text{id}}(z_i)+\mathcal{L}_{\text{inter}}(\tilde{z}_i^c)+\mathcal{L}_{\text{extra}}(\tilde{z}_i^e)$
    \ \ \textit{( Eq. \ref{eq-ood} )}
    \vspace{0.66em}
    \State \underline{\textit{(B) Keypoint coverage} ( where $z_k=\phi(p_k)$ )} : \textit{Section \ref{sec:method:Uniformly-Mapping-Latent-Space}}
    \vspace{0.5em} 
    \State \textbf{Particles loss :}
    $\mathcal L_{\text{kps}}(\{p_k\}_{k=1}^K) =
    \sum_{i\neq j}\mathcal{L}_{\text{repel}}(z_i,z_j)+\sum_k\mathcal{L}_{\text{ood}}(z_k)$
    \ \ \textit{( Eq. \ref{eq-kps-repel} \& \ref{eq-kps-ood} )}
    \vspace{0.5em}
    \State \underline{\textit{(C) Actor (AWR) loss}}
    \vspace{0.5em} 
    \State \textbf{Actor loss :}
    $\mathcal{L}_{\text{awr}}(z_i,z_i',z_i^a)=Adv(z_i,z_i',z_i^a)\cdot\mathcal{L}_{\text{bc}}(z_i,a_i)$
    \ \ \textit{( Eq. \ref{eq-actor-adv} \& \ref{eq-actor-bc})}
    \vspace{0.5em}
\EndFor
\end{algorithmic}
\end{algorithm}

\vspace{-2em}
\begin{algorithm}[H]
\caption{\textbf{PROQ -- Projective Quasimetric Planning (Inference Pipeline)}}
\label{alg:proq_inference}
\begin{algorithmic}[1]
\Require initial state $s_0\in\mathcal{S}$ , goal $g\in\mathcal{G}$ ,
latent keypoints $\{z_k\}_{k=1}^K$ , maximum horizon $T$ .\\
\vspace{0.1em}
\textbf{Init:} Build\  $G=\big\{z_k\big\}_{k=1}^{K+1}$ with
$z_{k+1}=\phi(g)$ .\\
\vspace{0.15em}
\textbf{Init:} Compute $D^*=Distances^*\big(\{z_k\}_{k=1}^{K+1}\big)\leftarrow$ Floyd–Warshall$\big(G\big)$ ;\\
\vspace{0.33em}
\textbf{Init:} $t\leftarrow0$ , $s\leftarrow s_0$ ;
\vspace{0.33em}
\While{$t<T$ \textbf{and} $\epsilon<d\big(\phi(s),\phi(g)\big)$}
\vspace{0.33em}
    \State $z_s\leftarrow\phi(s)$ ;
    \vspace{0.2em}
    \State $k^*=\arg\min_k\bigl[d_\theta(z_s,z_k)+D^*\big(z_k,\phi(g)\big)\bigr]$ ;
    \vspace{0.2em}
    \State $a\sim\pi_{\theta_\pi}\bigl(\cdot\ |\ s,z_{k^*}\bigr)$ ;
    \vspace{0.2em}
    \State $s\!\leftarrow$ \texttt{env.step}$(a)$ ;
    \vspace{0.2em}
    \State $t\leftarrow t+1$ .
\EndWhile
\end{algorithmic}
\end{algorithm}

\subsection{Learning a Robust Latent Space}
\label{sec:method:Learning-Latent-Space}

\paragraph{State-Encoder Learning (Representation Loss)}
To obtain a structured latent space that captures meaningful differences between states, we encode each observation $s$ into a latent vector $z = \phi(s)$ using a neural encoder $\phi$ trained with VICReg~\citep{vicreg}.
VICReg balances three terms : (1) an invariance loss aligning paired embeddings ; (2) a variance regularization ensuring non-collapsed features; and (3) a decorrelation loss to prevent redundancy. We consider the variance and covariance regularizers :

\begin{minipage}{0.6\textwidth}
\begin{equation}
\small
\mathcal{L}_{\text{var}}(X) = \frac{1}{D} \sum_{i=1}^D \max\Big(0, \gamma - \sqrt{\text{Var}(x^i)}\Big)
\label{eq-vicreg-std}
\end{equation}
\end{minipage}
\hfill
\begin{minipage}{0.39\textwidth}
\begin{equation}    
\small
\mathcal{L}_{\text{cov}} = \frac{1}{D} \sum_{i \neq j} C(X)_{ij}^2
\label{eq-vicreg-cov}
\end{equation}
\end{minipage}

where $X = \{X_k\}_{k=1}^N$ are encoded states, $x^i$ is the vector composed of each value at dimension $i$, $C(X)$ is the covariance matrix of $X$, $D$ the latent dimension, and $\gamma$ a target variance threshold.
This encourages each dimension of the representation to capture distinct, informative features in the data.

\paragraph{Quasimetric Learning (Distance Loss)}

Going further, we learn a latent-space distance function $d_{\theta_d}$ that approximates the goal-conditioned value function and satisfies the properties of a quasimetric. For this, we employ IQE (see \textit{Section \ref{sec:preliminaries}} for more precision on the IQE and QRL frameworks), and train following the QRL trategy
using the loss function provided in \textit{Equation \ref{eq-qrl}}, with :
\begin{equation}
\omega(z_i,z_i^v) = \sigma \cdot \texttt{softplus}\left(\frac{\mu - d(z_i, z^v_i)}{\sigma}\right) \quad \text{with} \ \mu=500,\ \sigma=0.1
\label{eq-quasi-far}
\end{equation}
Together, these losses push the quasimetric to match temporal proximity for known transitions while maximizing separation elsewhere, and the learning of this distance helps shape the latent space.

\paragraph{Out-of-Distribution Detection (OOD Loss)}
To keep the learned keypoints inside reachable areas, we train an OOD classifier  
$\psi:\mathcal{Z}\rightarrow[0,1]$. 
The difficulty is that the dataset only supplies \emph{positive} (in-distribution) examples.  
We therefore generate \emph{negative} samples under the mild assumption that the navigation space is closed and bounded,
and any state obtained by interpolation (convex hull) or extrapolation (extended convex hull) should be unreachable, unless it matches a positive state.

\begin{itemize}[leftmargin=*]
\item \textbf{In distribution states :} $z = \phi(s)$, with the target $\psi(z) = 1$ ;
\item \textbf{Interpolated states :} $\tilde z^{c}=\phi \bigl((1-\alpha)\cdot s_1+\alpha\cdot s_2\bigr)$, $\alpha\sim\mathcal{U}([0,1])$ ;
\item \textbf{Extrapolated states :} $\tilde z^{e}=\phi \bigl((1+\beta)\cdot s_1-\beta\cdot s_2\bigr)$, $\beta\sim\mathcal{U}([0,1])$ .
\end{itemize}

Using these generated states, we learn $\psi$ with a binary cross-entropy with a Lagrange
multiplier to enforce high confidence on positives while pushing negatives in-distribution probability towards $0$ :
\begin{equation}
\mathcal{L}_{\text{ood}}(z_i,\tilde z^{c}_{i},\tilde z^{e}_{i})
  =-\lambda_{\text{ood}}\log\psi(z_i)
   -\log\bigl(1-\psi(\tilde z^{c}_{i})\bigr)
   -\log\bigl(1-\psi(\tilde z^{e}_{i})\bigr)\ ,
\label{eq-ood}
\end{equation}
with $\lambda_{\text{ood}}$ updated until satisfaction of a penalty $\Big((1-\delta)-\psi(z)\Big)^{2}$, with $0<\delta<1$ the compliance.
\vspace{1em}

\begin{theorem}
\textbf{In/Out-of-Distribution Generalization Guarantee}
(See \textit{Appendix \ref{app:mapping:ood:analysis}} for proof) : 
\linebreak
Let $\mathcal{D}$ be an $\epsilon$-dense dataset of points within a compact state space $\mathcal{S}\subset\mathbb{R}^n$.
We consider
$\widetilde{\mathcal{D}}$, a set of generated points from $\mathcal{D}$, which is $\epsilon$-dense in the shell
$
\bigl\{\
\tilde s\in\mathbb{R}^n~\big|~
\text{dist}(\tilde s,\mathcal{D})\le
Diam(\mathcal{D})~\bigr\}
$.\linebreak
After convergence of the latent space component $\phi$, $\psi$, and $d$, considering 
the compliance $0<\delta<1$, 
there exist constants
$0<C,0<\eta<\mathrm{Diam}(\mathcal{D}),0<\bar\delta<1$ such that:

$
\begin{array}{@{}l@{\quad}l@{\quad}l@{\quad}l@{}}
\textbf{(ID)} & 
\forall~s\in\mathcal{S} : &
\psi\bigl(\phi(s)\bigr) & 
\ge\ \ \ 1 - \delta - C\cdot\epsilon\\[0.5ex]
\textbf{(OD)}  & 
\forall~x\in\mathbb{R}^n\ \text{ s.t.\ \ } 
\eta\le\text{dist}(x,\mathcal{D})\le Diam(\mathcal{D}) : &
\psi\bigl(\phi(x)\bigr) & 
\le\ \ \  \bar\delta + C\cdot\epsilon
\end{array}
$

\label{thm:ood}
\end{theorem}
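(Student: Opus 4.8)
The plan is to combine two ingredients: the pointwise guarantees that the \emph{converged} loss enforces on the finite sets $\mathcal{D}$ and $\widetilde{\mathcal{D}}$, and a regularity (Lipschitz) property of the composite map $\psi\circ\phi$ that propagates those pointwise guarantees to every point lying within $\epsilon$ of the training data. This is at heart a covering-plus-Lipschitz extension argument. Because both the encoder $\phi$ and the classifier $\psi$ are finite neural networks built from affine layers and Lipschitz activations, their composition is $C$-Lipschitz on any compact set, with $C=L_\psi L_\phi$ fixed by the post-convergence weights. The shell $\{x:\text{dist}(x,\mathcal{D})\le\mathrm{Diam}(\mathcal{D})\}$ is closed and bounded, hence compact, so this single constant $C$ controls the entire estimate and also absorbs $L_\phi$, which is why $\phi$ never needs to be treated separately.

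First I would record what convergence buys on the training points. The Lagrangian update on $\lambda_{\text{ood}}$ drives the compliance penalty $\bigl((1-\delta)-\psi(z)\bigr)^2$ to zero on positives, so at the optimum $\psi(\phi(s_0))\ge 1-\delta$ for every $s_0\in\mathcal{D}$. Symmetrically, the terms $-\log\bigl(1-\psi(\tilde s)\bigr)$ push the generated negatives toward $0$; I would set $\bar\delta:=\sup\psi(\phi(\tilde s))$ over the negatives $\tilde s\in\widetilde{\mathcal{D}}$ lying at distance at least some fixed threshold from $\mathcal{D}$, and argue $\bar\delta<1$ since those points feel no competing in-distribution pressure.

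Next comes the interpolation step, identical in spirit for both claims. For \textbf{(ID)}, any $s\in\mathcal{S}$ has a dataset neighbor $s_0\in\mathcal{D}$ with $\lVert s-s_0\rVert\le\epsilon$ by $\epsilon$-density, and Lipschitzness gives $\psi(\phi(s))\ge\psi(\phi(s_0))-C\epsilon\ge 1-\delta-C\epsilon$. For \textbf{(OD)}, any $x$ in the shell has a generated neighbor $\tilde s\in\widetilde{\mathcal{D}}$ with $\lVert x-\tilde s\rVert\le\epsilon$; the hypothesis $\text{dist}(x,\mathcal{D})\ge\eta$ forces, via the reverse triangle inequality, $\text{dist}(\tilde s,\mathcal{D})\ge\eta-\epsilon$, so $\tilde s$ is one of the genuine negatives bounded by $\bar\delta$, whence $\psi(\phi(x))\le\psi(\phi(\tilde s))+C\epsilon\le\bar\delta+C\epsilon$.

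The hard part is pinning down $\eta$ and justifying the uniform bound $\bar\delta<1$. Near the dataset boundary an interpolated point can coincide with a reachable state, so the classifier must transition from $\sim 1$ to $\sim 0$ across a band of nonzero width, and $\eta$ must be chosen to exceed that width plus the slack $\epsilon$ so that the selected neighbor $\tilde s$ sits comfortably inside the negative region. Making this rigorous means converting the qualitative ``after convergence'' hypothesis into a quantitative one, namely that the minimized loss drives negatives beyond the transition band below a fixed level $\bar\delta$; this is the single step that relies on the optimization assumption rather than on pure geometry, and I would isolate it as the lemma carrying the real content of the theorem.
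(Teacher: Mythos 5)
Your proposal matches the paper's proof essentially verbatim: both split the claim into an ID lemma and an OD lemma, each proved by picking an $\epsilon$-close neighbor in $\mathcal{D}$ (resp. $\widetilde{\mathcal{D}}$) and transporting the converged pointwise bounds $\psi(\phi(s_0))\ge 1-\delta$ and $\psi(\phi(\tilde s))\le\bar\delta$ through the Lipschitz constant $C=L_\psi L_\phi$ of $\psi\circ\phi$. If anything you are slightly more careful than the paper on the one delicate point — the $\epsilon$-close neighbor $\tilde s$ of a shell point $x$ only satisfies $\text{dist}(\tilde s,\mathcal{D})\ge\eta-\epsilon$ rather than $\ge\eta$, so the definition of $\bar\delta$ needs the margin adjustment you flag in your final paragraph, a mismatch the paper's own proof passes over silently.
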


Hence, \textit{Theorem \ref{thm:ood}} guarantees that, after training, the OOD classifier $\psi$ remains uniformly bounded away from ambiguity both inside the reachable region and in the controlled negative shell. In practice, this yields a well-calibrated decision boundary over latent space that reliably guides keypoint placement and downstream planning (see \textit{Appendices \ref{app:mapping:ood:analysis}} and \textit{\ref{app:mapping:ood:assumptions}}).

\subsection{Uniformly Mapping the Latent Space}
\label{sec:method:Uniformly-Mapping-Latent-Space}

The utility of a learned latent space for planning is significantly enhanced by having sparse, uniformly distributed \textit{keypoints}, serving as landmarks in the learned geometry. To achieve this uniform coverage, we draw inspiration from physics, specifically the repulsive forces between like-charged particles.

\textbf{Coulomb-like Repulsion :}
For latent keypoints $\{z_k\}_{k=1}^{K}$ we minimize the repulsion energy :
\begin{equation}
\mathcal{L}_{\text{repel}}
  = \lambda_{\text{repel}}
    \sum_{i\neq j}\frac{1}{d_\theta(z_i,z_j)+\epsilon},
\label{eq-kps-repel}
\end{equation}
where $d_\theta$ is the learned quasimetric, and the $\epsilon$ a term to avoid numerical blow-up.
The form mirrors Coulomb’s energy, which falls off with distance and drives points towards a uniform layout.

\textbf{Entropic OOD Barrier :}
Repulsion alone could push keypoints outside the region covered by the dataset. 
We therefore add a soft barrier derived from the OOD classifier $\psi$, penalizing points that leave the in-distribution set :
\begin{equation}
\mathcal{L}_{\text{ood}}
  = -\lambda_{\text{ood}}\cdot\sum_{k=1}^{K}\log \psi(z_k),
\label{eq-kps-ood}
\end{equation}
with $\lambda_{\text{ood}}$ updated until satisfaction of a penalty $\big((1-\delta)-\sum_{k=1}^K\psi(z_k)\big)^{2}$, with $0<\delta<1$ the compliance. In other words, on average, all latent keypoints must stay in-distribution to some extent.

The energy-based losses add up to $\mathcal{L}_{\text{kps}} = \mathcal{L}_{\text{rep}} + \mathcal{L}_{\text{ood\_kps}}$. At each gradient step, while optimizing this composed loss, $\phi$, $d$, and $\psi$ are frozen to avoid instabilities in the learning of the latent space.

\vspace{0.5em}
\begin{theorem}
    \textbf{Existence and Distinctness of Coulomb Energy Minimizers} (See Appendix \ref{app:mapping:kps:analysis}) \textbf{:}\linebreak
    Let's consider a repulsive energy function $
    E:\mathcal{S}^K\rightarrow\mathbb{R}^+\cup\{+\infty\}
    ~,~
    \{s_k\}_{k=1}^K\rightarrow
    \sum_{i\neq j}\frac{1}{d(s_i,s_j)}
    $. Under the assumptions of compactness of the state space $\mathcal{S}$ and continuity the quasimetric $d$ we have :\linebreak
    
    \vspace{-1.25em}
    (i) $E$
    has a finite minimum, and there exists a configuration
    $\{s^*_k\}_{k=1}^K\subset\mathcal{S}$
    that reaches it.\\
    (ii) For any such minimizer $\{s^*_k\}_{k=1}^K\subset\mathcal{S}~$, if $K > 1$ then all keypoints are distinct.
\end{theorem}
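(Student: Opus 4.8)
The plan is to invoke the direct method of the calculus of variations: the domain $\mathcal{S}^K$ is compact, and if I can show that the (extended-real-valued) energy $E$ is lower semicontinuous and not identically $+\infty$, then the generalized Weierstrass theorem for lower semicontinuous functions guarantees that $E$ attains its infimum at some $\{s_k^*\}_{k=1}^K$, and that this infimum is finite. Distinctness in (ii) then drops out almost for free: any configuration with two coinciding points has a term $1/d(s_i,s_j)=1/0=+\infty$, so it cannot be a finite minimizer.

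Concretely, I would proceed as follows. First, note that $\mathcal{S}^K$ is compact as a finite product of compact spaces. Second---and this is the technical heart---I would establish that the pair potential $g(x,y):=1/d(x,y)$, extended by $g(x,y)=+\infty$ whenever $d(x,y)=0$, is lower semicontinuous on $\mathcal{S}\times\mathcal{S}$. This splits into two cases at a base point $(x_0,y_0)$: where $d(x_0,y_0)>0$, continuity of $d$ makes $g=1/d$ continuous (hence lsc); where $d(x_0,y_0)=0$, continuity of $d$ forces $d(x_n,y_n)\to 0$ along any approaching sequence, so $g(x_n,y_n)\to+\infty=g(x_0,y_0)$, giving $\liminf g\ge g$. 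Since each summand $(s_1,\dots,s_K)\mapsto g(s_i,s_j)$ is $g$ composed with a continuous projection, and finite sums of lsc functions are lsc, $E$ is lower semicontinuous on $\mathcal{S}^K$.

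With these two facts, the extended-value Weierstrass theorem yields a minimizer $\{s_k^*\}$ with $E(\{s_k^*\})=\inf E$. To upgrade this to a \emph{finite} minimum (claim (i)), I would exhibit one configuration of finite energy: picking $K$ pairwise-distinct points of $\mathcal{S}$ and using the separation property of the quasimetric ($d(x,y)>0$ for $x\ne y$) makes every term $1/d(s_i,s_j)$ finite, so $\inf E<+\infty$. Finally, for (ii), let $\{s_k^*\}$ be any minimizer; since the minimum is finite and every term of $E$ is nonnegative, no term can equal $+\infty$, so $d(s_i^*,s_j^*)>0$ for all $i\ne j$, which (by the same separation property) forces $s_i^*\ne s_j^*$---i.e. the keypoints are distinct whenever $K>1$ (the case $K=1$ being vacuous, as the sum is empty).

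The main obstacle, or rather caveat, is that the stated quasimetric axioms list only $d(x,x)=0$, whereas both the finiteness witness in (i) and the distinctness conclusion in (ii) require the full separation property $d(x,y)=0\Rightarrow x=y$ (strong identity of indiscernibles), together with the mild structural assumption that $\mathcal{S}$ contains at least $K$ distinct points; I would make these explicit at the outset. The only genuinely delicate verification is the lower semicontinuity at the points where $d=0$; everything else is a routine application of compactness and the generalized Weierstrass theorem.
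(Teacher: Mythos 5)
Your proof is correct, but it takes a genuinely different route from the paper's. You argue via the direct method: you extend the pair potential $g(x,y)=1/d(x,y)$ to $+\infty$ on the diagonal, verify its lower semicontinuity (splitting into the cases $d(x_0,y_0)>0$ and $d(x_0,y_0)=0$), conclude that $E$ is lsc as a finite sum of lsc functions, and invoke the generalized Weierstrass theorem on the compact domain $\mathcal{S}^K$; finiteness and distinctness then follow from a finite-energy witness configuration. The paper instead never handles the extended-valued function globally: it fixes a witness configuration of $K$ distinct points with energy $M_0<\infty$, restricts attention to the sublevel set $\mathcal{C}=\{E\le M_0\}$, observes that on $\mathcal{C}$ every pairwise distance is bounded below by $1/M_0$ (so that $E$ is an ordinary continuous function there), proves $\mathcal{C}$ is closed hence compact, and applies the classical extreme value theorem. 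The two arguments are logically equivalent in outcome, but each buys something: the paper's sublevel-set route avoids any discussion of lower semicontinuity and yields as a byproduct the quantitative separation $d(s_i^*,s_j^*)\ge 1/M_0$ for every minimizer, which immediately gives (ii); your route is more modular and standard, and sidesteps the closedness verification of $\mathcal{C}$, which is the slightly delicate step in the paper's version. Your caveat is also well taken and worth keeping explicit: the quasimetric axioms stated in the paper's preliminaries only give $d(x,x)=0$, whereas both the finite-energy witness and the distinctness conclusion need the converse implication $d(x,y)=0\Rightarrow x=y$ (the paper secures this in the appendix only through the added assumption that $d$ is uniformly equivalent to the Euclidean distance), together with the requirement that $\mathcal{S}$ contain at least $K$ distinct points.
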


\subsection{Graph-based Navigation}
\label{sec:method:graph}

With the latent geometry shaped and the keypoints frozen, inference becomes a
two–stage process :
\emph{(i)} a discrete planner selects an intermediate
key point that minimises the estimated time-to-goal, and  
\emph{(ii)} a
continuous controller (AWR policy) produces actions to move the agent towards that keypoint.

\paragraph{From Keypoints to a Directed Graph :}
Let's consider $\mathcal{V}(g)=\{z_1,\dots,z_K,z_g\}$ be the latent key-points plus the current goal
embedding.
We keep an edge $(i,j)\in\mathcal{E}(g)$ only if
$d(z_i,z_j)\le \tau$ and set its
weight to that distance. Here, $\tau$ is a stabilizing cut distance to avoid using potentially ill-approximated long distances.
Consequently, we produce a directed graph
$G(g)=(\mathcal{V}(g),\mathcal{E}(g))$, and all-pairs shortest paths are obtained with Floyd–Warshall algorithm \cite{floyd}, yielding a matrix
$D(g)^*$ that stores the distances from every keypoint to the goal within the learned graph.

\paragraph{Learning a Controller for Action Generation :}
From a dataset transition $(s_i,a_i,s'_i)$ we sample a relatively short-range goal $g$ and compute the distance improvement $A_i$ in \textit{Equation \ref{eq-actor-adv}}.\linebreak
Advantage-Weighted Regression \cite{awr} then
fits a policy $\pi_{\theta_\pi}$ with the log-likelihood loss
\eqref{eq-actor-bc}, weighted by
$\exp(\alpha\cdot A_i)$.  
We thus obtain a robust controller used to go from one keypoint to another.
\vspace{-0.5em}

\begin{equation}
\textstyle 
A_i = d\bigl(\phi(s_i),\phi(g_i)\bigr) -
d\bigl(\phi(s'_i),\phi(g_i)\bigr)\ ,
\label{eq-actor-adv}
\end{equation}
\vspace{-2em}

\begin{equation}
\mathcal{L}_{\text{awr}} =
-\mathbb{E}_{(s_i,a_i)}
\bigl[\exp(\alpha\cdot A_i)\cdot\log\pi_{\theta_\pi}(a_i\ |\ s_i,g_i)\bigr]\ ,
\label{eq-actor-bc}
\end{equation}

This regression objective has proven stable and effective for short-horizon offline control \cite{iql,h_iql,qphil}.

    \newpage
\section{Experiments}
\label{sec:experiments}

\vspace{-0.75em}
\subsection{Environments \& Datasets}
\vspace{-0.5em}
We evaluate on the OGBench suite \cite{ogbench}, a recent benchmark tailored to Offline GCRL. We focus on the \textsc{PointMaze} environments, where a point-mass must reach goal locations across mazes of different scales (\textit{Figure \ref{fig:envs}}) and a harder teleport variant with stochastic non-local transitions.
\begin{figure}[h]
\begin{center}
    \begin{subfigure}
    {0.16\textwidth}        
        \centering
        \includegraphics[width=0.9\linewidth]{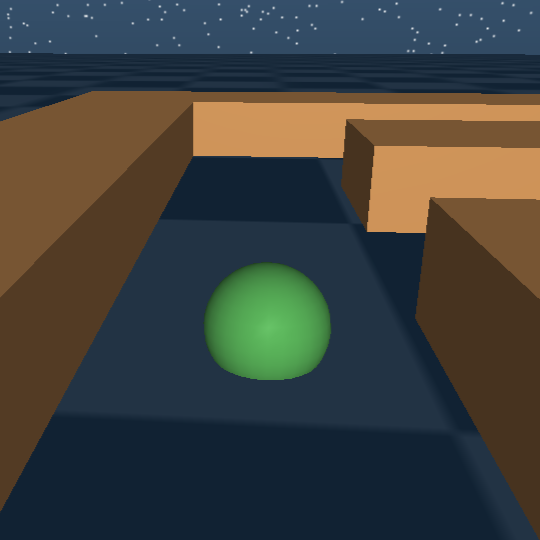}
        \caption{Point Agent.}
        \label{fig:envs-point}
    \end{subfigure}
    \hfill
    \begin{subfigure}
    {0.16\textwidth}        
        \centering
        \includegraphics[width=0.9\linewidth]{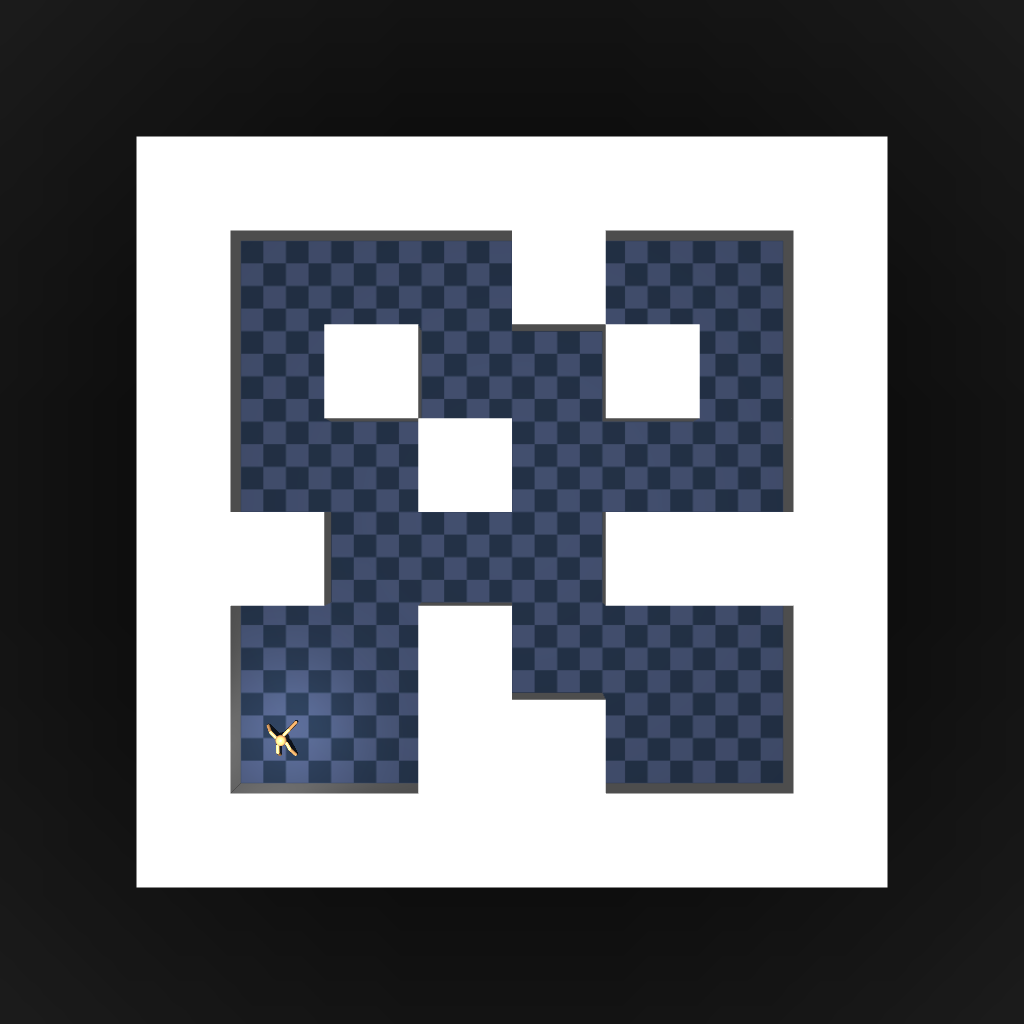}
        \caption{Medium.}
        \label{fig:envs-medium}
    \end{subfigure}
    \hfill
    \begin{subfigure}{0.16\textwidth}        
        \centering
        \includegraphics[width=0.9\linewidth]{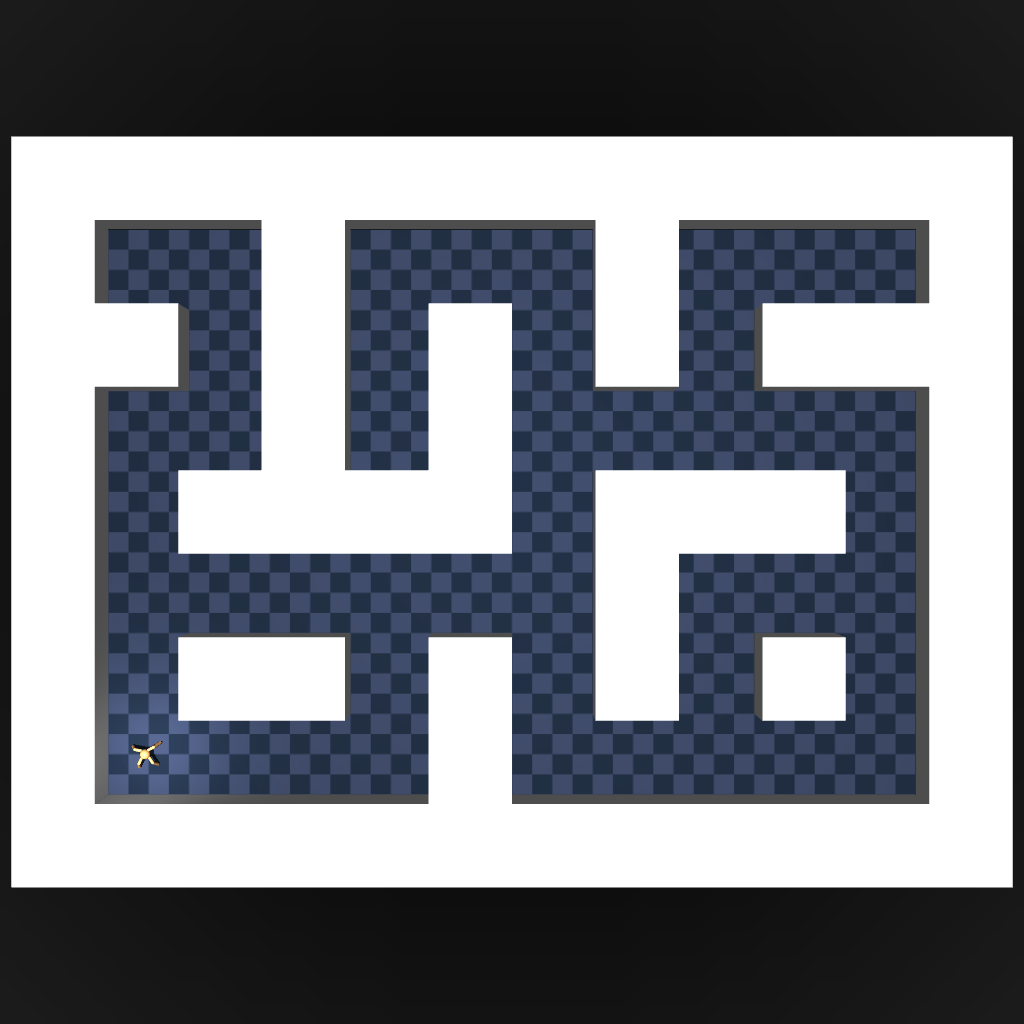}
        \caption{Large.}
        \label{fig:envs-large}
    \end{subfigure}
    \hfill
    \begin{subfigure}{0.16\textwidth}        
        \centering
        \includegraphics[width=0.9\linewidth]{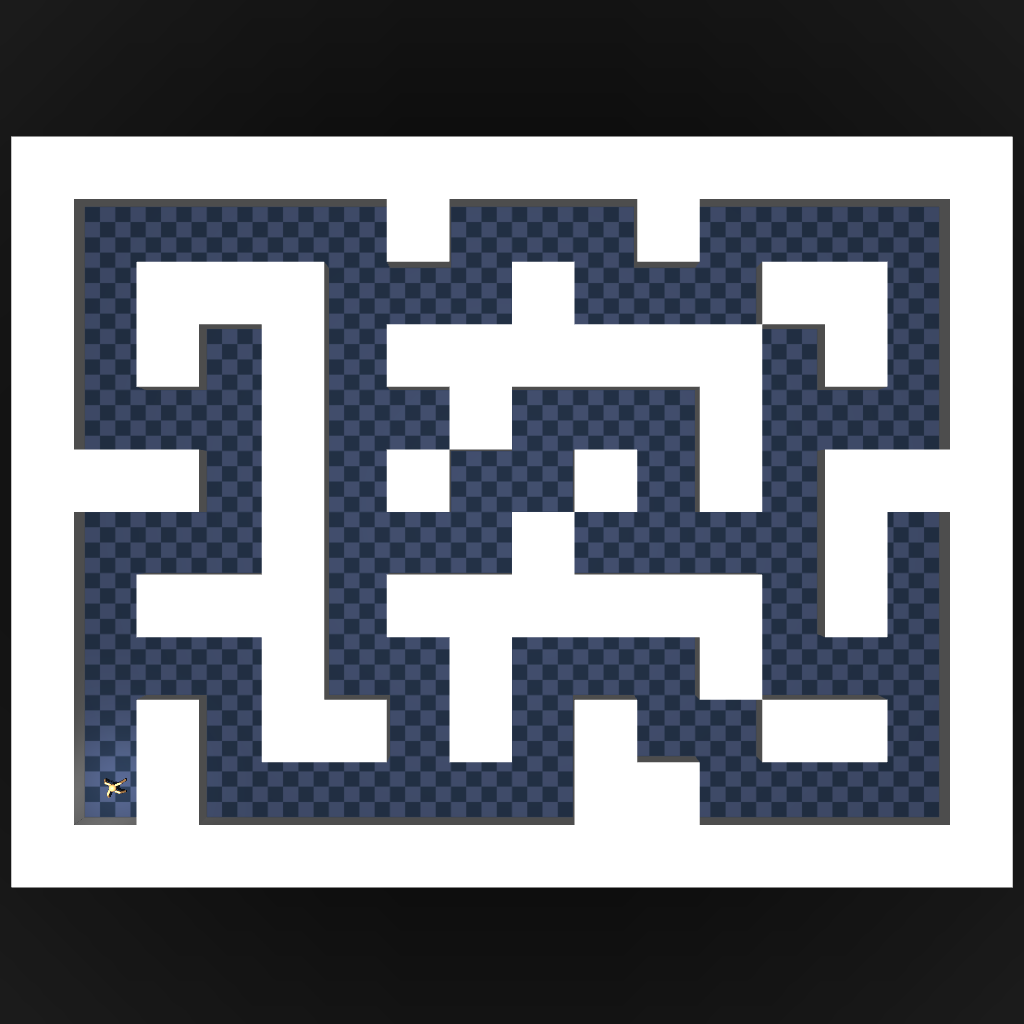}
        \caption{Giant.}
        \label{fig:envs-giant}
    \end{subfigure}
    \hfill
    \begin{subfigure}{0.16\textwidth}        
        \centering
        \includegraphics[width=0.9\linewidth]{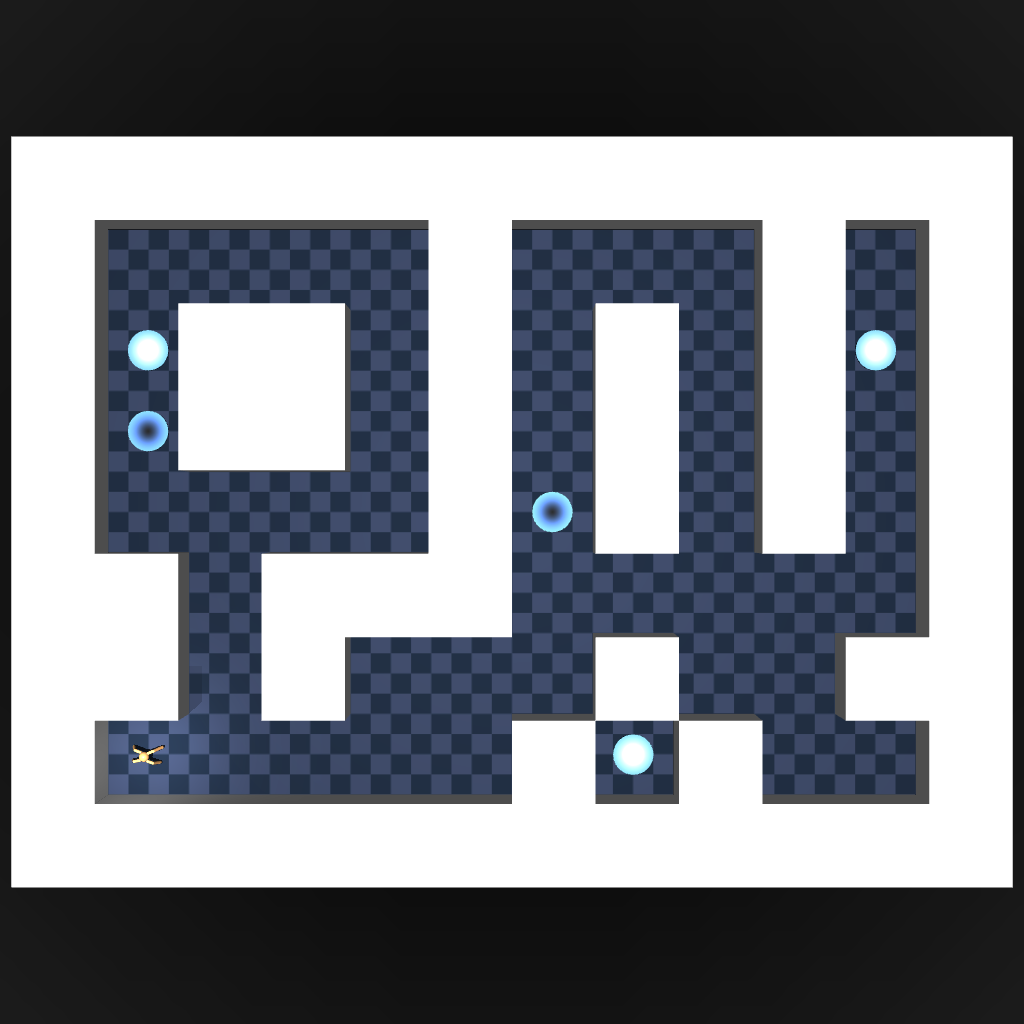}
        \caption{Teleport.}
        \label{fig:envs-teleport}
    \end{subfigure}
\end{center}
\vspace{-0.5em}
\caption{
\small
Layouts of the four \textsc{PointMaze} tasks used in our experiments.
All images are rendered on the same grid resolution
(blue squares have identical side length), so the overall
arena grows from \textit{Medium} to \textit{Giant}.
The \textit{Teleport} task adds blue portal cells that stochastically move the agent to another non-local location.
}
\label{fig:envs}
\end{figure}
\vspace{-1em}

\vspace{-0.5em}
OGBench provides two types of $1M$ transition datasets :
\textit{navigate}, containing demonstrations from an expert going to randomly and regularly sampled locations along long trajectories (1000 steps) ;
\textit{stitch}, containing short trajectories (200 steps) created similarly to the \textit{navigate} ones.

\vspace{-0.33em}
We compare our proposed method to six baselines : 
Goal-Conditioned Behavioral Cloning (GCBC) \cite{gcbc}, 
GC Implicit Value Learning (GCIVL) \cite{h_iql},
GC Implicit Q Learning (GCIQL) \cite{iql}, 
Contrastive RL (CRL) \cite{crl},
Quasimetric RL (QRL) \cite{qrl},
and Hierarchical Implicit Q Learning (HIQL) \cite{h_iql}.

\vspace{-0.5em}
\subsection{Results}

\vspace{-0.5em}
\subsubsection{Performances}
\vspace{-0.75em}

\begin{table}[h]

    \caption{
    \small
    Success rates (\%) over \textsc{PointMaze} tasks. Bold entries mark the top performer in each row. \texttt{ProQ} performs the best in almost all cases, demonstrating its strong coverage and planning across different settings.
    }
    \vspace{0.25em}
    \label{sample-table}
    
    \centering
    \scriptsize
    \setlength{\tabcolsep}{6pt}
    
    \begin{tabular}{lllllllll}
    
    \toprule
    
    \textbf{Environment} & 
    \textbf{Dataset} & 
    \textbf{GCBC} &
    \textbf{GCIVL} &
    \textbf{GCIQL} &
    \textbf{QRL} &
    \textbf{CRL} & 
    \textbf{HIQL} & 
    \textbf{PROQ} \\ 
    
    \midrule
    
      & \scriptsize\texttt{pointmaze-medium-navigate-v0}
          & 9{\tiny\,$\pm$\,6}  
          & 63{\tiny\,$\pm$\,6} 
          & 53{\tiny\,$\pm$\,8} 
          & 82{\tiny\,$\pm$\,5} 
          & 29{\tiny\,$\pm$\,7} 
          & 79{\tiny\,$\pm$\,5}
          & \textbf{100}{\tiny\,$\pm$\,0} \\
      & \scriptsize\texttt{pointmaze-large-navigate-v0}
          & 29{\tiny\,$\pm$\,6} 
          & 45{\tiny\,$\pm$\,5} 
          & 34{\tiny\,$\pm$\,3} 
          & {86}{\tiny\,$\pm$\,9} 
          & 39{\tiny\,$\pm$\,7} 
          & 58{\tiny\,$\pm$\,5}
          & \textbf{99}{\tiny\,$\pm$\,1} \\
    \multirow{4}{*}{\scriptsize\texttt{pointmaze}}
      & \scriptsize\texttt{pointmaze-giant-navigate-v0}
          & 1{\tiny\,$\pm$\,2}  
          & 0{\tiny\,$\pm$\,0}  
          & 0{\tiny\,$\pm$\,0}  
          & {68}{\tiny\,$\pm$\,7} 
          & 27{\tiny\,$\pm$\,10}
          & 46{\tiny\,$\pm$\,9}
          & \textbf{92}{\tiny\,$\pm$\,3} \\
      & \scriptsize\texttt{pointmaze-teleport-navigate-v0}
          & 25{\tiny\,$\pm$\,3} 
          & \textbf{45}{\tiny\,$\pm$\,3} 
          & 24{\tiny\,$\pm$\,7} 
          & 4{\tiny\,$\pm$\,4}   
          & 24{\tiny\,$\pm$\,6} 
          & 18{\tiny\,$\pm$\,4}
          & 43{\tiny\,$\pm$\,0} \\
    \cmidrule(lr){2-9}
      & \scriptsize\texttt{pointmaze-medium-stitch-v0}
          & 23{\tiny\,$\pm$\,18}
          & 70{\tiny\,$\pm$\,14}
          & 21{\tiny\,$\pm$\,9} 
          & {80}{\tiny\,$\pm$\,12}
          & 0{\tiny\,$\pm$\,1}  
          & 74{\tiny\,$\pm$\,6}
          & \textbf{99}{\tiny\,$\pm$\,1} \\
      & \scriptsize\texttt{pointmaze-large-stitch-v0}
          & 7{\tiny\,$\pm$\,5}   
          & 12{\tiny\,$\pm$\,6}  
          & 31{\tiny\,$\pm$\,2}  
          & {84}{\tiny\,$\pm$\,15}
          & 0{\tiny\,$\pm$\,0}  
          & 13{\tiny\,$\pm$\,6}
          & \textbf{99}{\tiny\,$\pm$\,1} \\
      & \scriptsize\texttt{pointmaze-giant-stitch-v0}
          & 0{\tiny\,$\pm$\,0}   
          & 0{\tiny\,$\pm$\,0}   
          & 0{\tiny\,$\pm$\,0}   
          & {50}{\tiny\,$\pm$\,8} 
          & 0{\tiny\,$\pm$\,0}  
          & 0{\tiny\,$\pm$\,0}
          & \textbf{99}{\tiny\,$\pm$\,1} \\
      & \scriptsize\texttt{pointmaze-teleport-stitch-v0}
          & 31{\tiny\,$\pm$\,9}  
          & \textbf{44}{\tiny\,$\pm$\,2}  
          & 25{\tiny\,$\pm$\,3}  
          & 9{\tiny\,$\pm$\,5}   
          & 4{\tiny\,$\pm$\,3}   
          & 34{\tiny\,$\pm$\,4}
          & 35{\tiny\,$\pm$\,7} \\
    \bottomrule
  \end{tabular}
\end{table}
\vspace{-0.75em}

Across all maze sizes and dataset types,
\texttt{ProQ} consistently outperforms prior methods-particularly on the large and giant mazes. Its $92\%$ score on \texttt{giant-navigate} (versus $99\%$ score on \texttt{giant-stitch}) suggests that training on very long expert traces can introduce low-level control noise when learning.
In the stochastic \texttt{teleport} variants \texttt{ProQ}’s is slightly below the best score. Moreover, \texttt{ProQ}’s lower standard deviations reflect a more stable learning pipeline.

\vspace{-0.75em}
\subsubsection{Learned Mapping}
\vspace{-0.75em}
\begin{figure}[h]
\begin{center}
    \begin{subfigure}
        {0.24\textwidth}        
            \centering
            \includegraphics[width=0.9\linewidth]{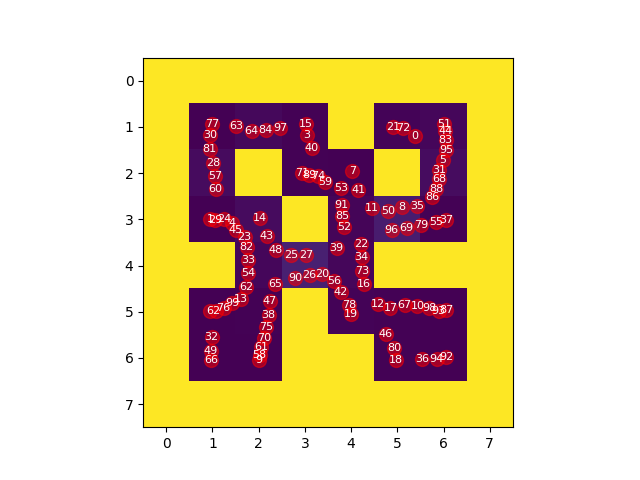}
            \caption{Medium.}
            \label{fig:mapping-medium}
    \end{subfigure}
    \begin{subfigure}
        {0.24\textwidth}        
            \centering
            \includegraphics[width=0.9\linewidth]{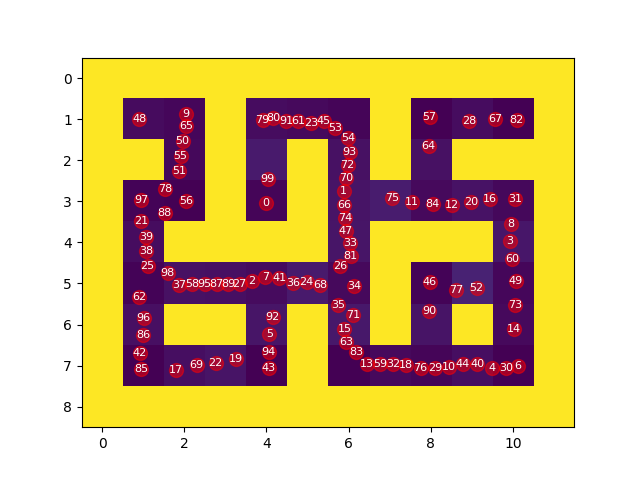}
            \caption{Large.}
            \label{fig:mapping-large}
    \end{subfigure}
    \begin{subfigure}
        {0.24\textwidth}        
            \centering
            \includegraphics[width=0.9\linewidth]{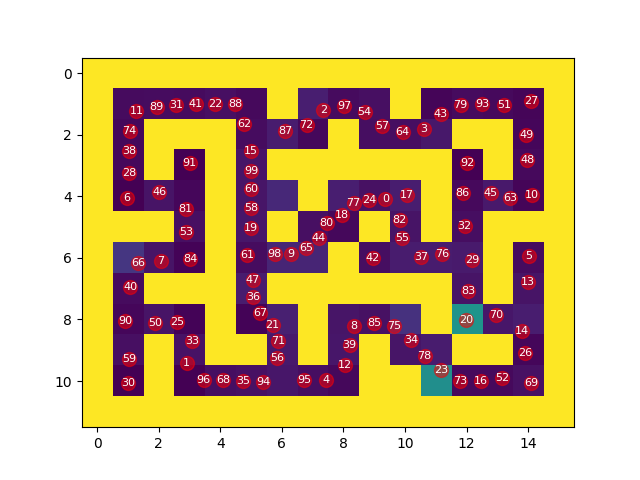}
            \caption{Giant.}
            \label{fig:mapping-giant}
    \end{subfigure}
    \begin{subfigure}
        {0.24\textwidth}        
            \centering
            \includegraphics[width=0.9\linewidth]{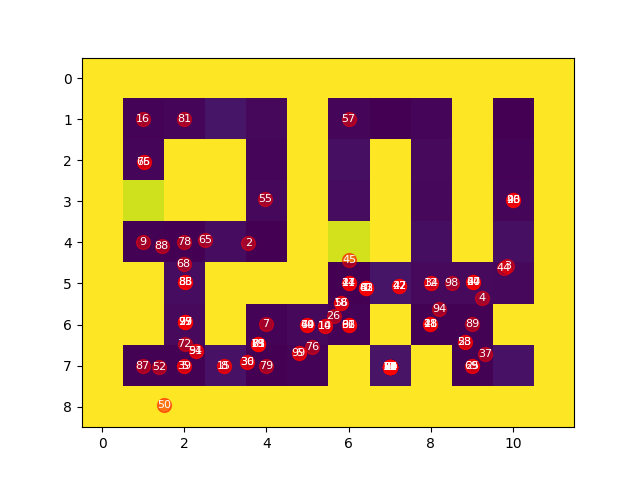}
            \caption{Teleport.}
            \label{fig:mapping-teleport}
    \end{subfigure}
\end{center}
\vspace{-0.5em}
\caption{
\small
Illustration of the learned latent mappings produced by \texttt{ProQ} on the four \textsc{PointMaze} tasks.\linebreak
\textbf{OOD} probabilities are shown from $0\sim$ \textit{yellow} to $1\sim$ \textit{magenta} ; The learned keypoints are represented in red.
}
\label{fig:mapping}
\end{figure}

\textbf{OOD Mapping :}
\textit{Figure \ref{fig:mapping}} shows that the classifier $\psi$ sharply separates traversable corridors (magenta) from walls and the exterior (yellow) across all mazes.
The boundary follows the wall geometry, confirming that the interpolation/extrapolation training method gives a tight approximation of the reachable set.
In the \texttt{teleport} variants, isolated green tiles appear. They correspond to portal entrance cells, which are therefore unstable, illustrating that $\psi$ accommodates non-trivial transitions.

\vspace{-0.5em}
\textbf{Keypoints Placement :}
The Coulomb repulsion seems to properly spread the keypoints
(red dots in \textit{Figure \ref{fig:mapping}}) almost uniformly along every corridor, while the OOD barrier keeps them centered.
Coverage remains dense even in the wide \texttt{Giant} maze, confirming size-invariance of the mechanism.
For \texttt{teleport}, we observe that in sparsely covered hallways, the equilibrium seems harder to attain.
This illustrates a current limitation of \texttt{ProQ} : when the environment features stochastic transitions, the quasimetric and the force field struggle to allocate landmarks evenly, reducing coverage quality.

\vspace{-0.5em}
\subsubsection{Path Planning}

\vspace{-0.5em}
\begin{wrapfigure}{r}{0.4\textwidth}
    \vspace{-2.9em}
    \begin{center}
        \centering
        \includegraphics[width=\linewidth]{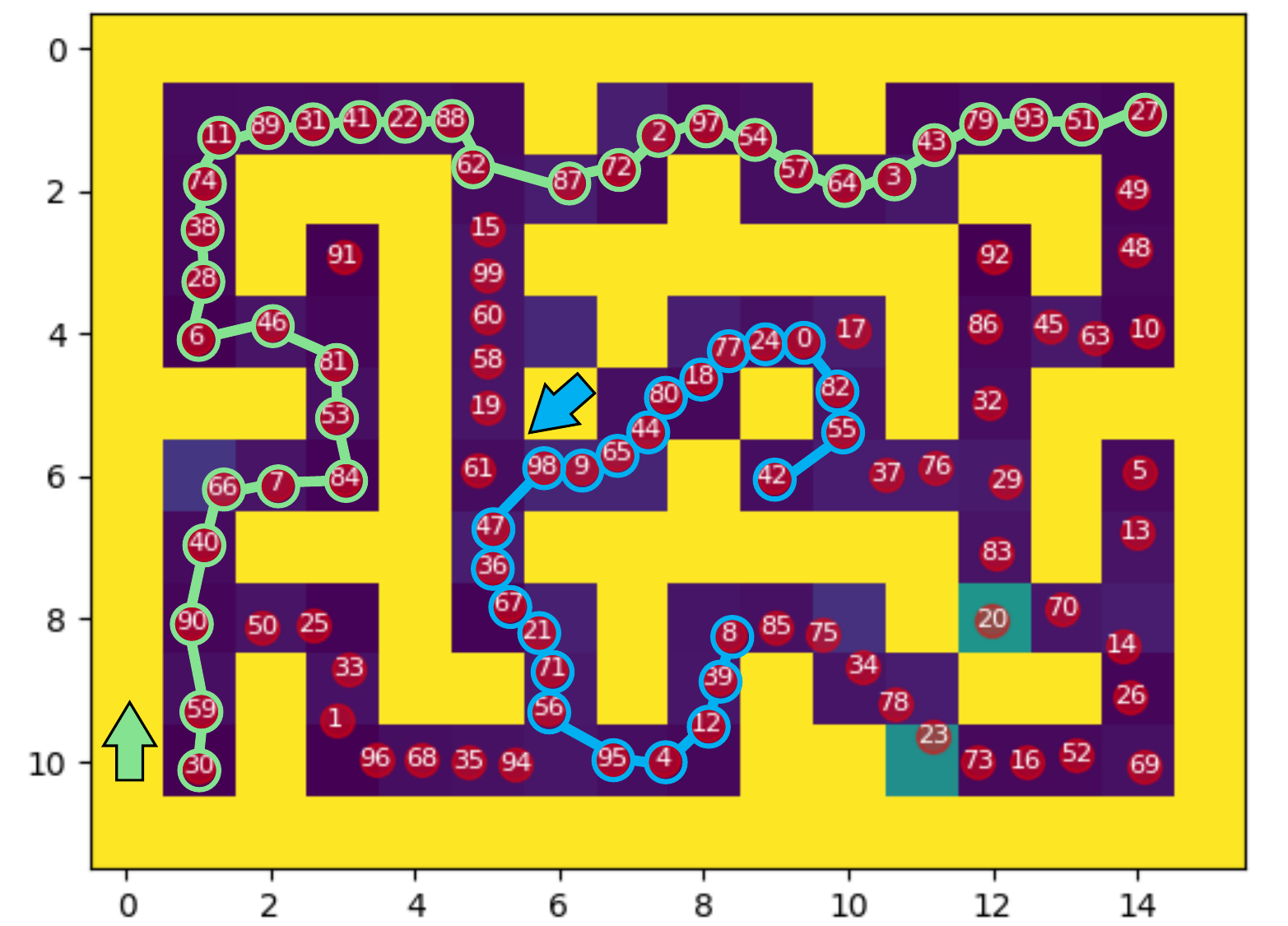}
    \end{center}
    \vspace{-0.5em}
    \caption{
    \small
    Two plans produced by \texttt{ProQ}\linebreak
    with Floyd–Warshall on the \texttt{giant} maze.}
    \label{fig:planning}
    \vspace{-1em}
\end{wrapfigure}

\textit{Figure \ref{fig:planning}} depicts two sets of keypoints selected for different start and goal locations.
The chosen keypoints trace the intuitive shortest corridors and never cut through walls, indicating that the learned quasimetric yields accurate one-step costs and that the offline Floyd–Warshall search exploits them effectively.
Even on the longest route, only a few hops are required, demonstrating how a \emph{sparse} graph can support near-optimal long-horizon navigation.

Moreover, when the goals are fixed, the all-pairs shortest-path matrices only need to be computed once, and after that, each step is a lightweight lookup over these matrices.

\subsubsection{Ablation: Why do we need an OOD detector ?}

\begin{wrapfigure}{l}{0.4\textwidth}
    \vspace{-2em}
    \begin{center}
        \centering
        \includegraphics[width=\linewidth]{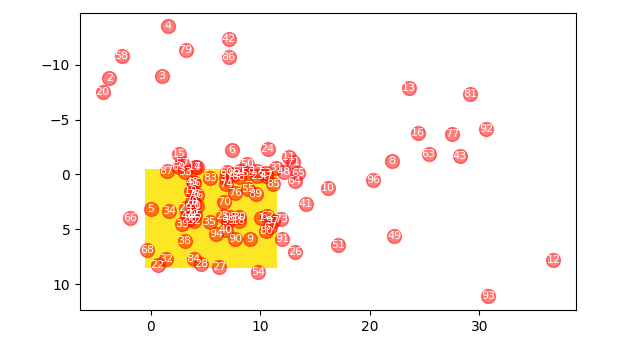}
    \end{center}
    \vspace{-0.75em}
    \caption{
    Key points learned without the OOD barrier on the \texttt{giant} maze; the yellow area is the reachable manifold.}
    \label{fig:no_ood}
    \vspace{-2em}
\end{wrapfigure}

Removing the OOD term leaves only Coulomb repulsion.
As \textit{Figure \ref{fig:no_ood}} shows, this drives many keypoints far outside the yellow support of the maze: indeed, repulsion keeps pushing points toward low-density regions where there is no counter-force.
Consequently, the navigation graph no longer only covers the maze, and the quasimetric can route through “phantom” regions, underestimating travel times and yielding infeasible plans.
This ablation confirms the role of the OOD barrier: it anchors the particle system to the data manifold, preserving both uniform coverage and the reliability of the learned distance.


    \vspace{-1.25em}
\section{Discussion}
\label{sec:discussion}

\vspace{-0.75em}
As takeaways, \texttt{ProQ} unifies metric learning, OOD detection, and keypoint coverage in a single latent geometry.
A few hundred landmarks and one Floyd-Warshall pass per goal are enough to reach \textsc{PointMaze} targets with state-of-the-art success, showing that simple graph search, when built on a well-shaped quasimetric, can rival heavier value-based pipelines.

\vspace{-0.33em}
Nevertheless, there remain some gaps :
(i) \emph{stochastic transitions} seem to break the learning of the keypoints, as they degrade their spread ;
(ii) our study is confined to \textsc{PointMaze}, hence validating on richer manipulation or vision-based tasks is required for stronger evidence ;
(iii) the work is mostly empirical, and formal guarantees that we learn a satisfiable latent space component are still missing.

\vspace{-0.33em}
Future work involves various themes.
First, the graph abstraction naturally supports \emph{at inference replanning} upon environment changes.
Extending this idea to lifelong \emph{Continual} learning could unlock efficient transfer across tasks. Second, adapting the energy function so that keypoints' density grows or shrinks with local action complexity would make \texttt{ProQ} scalable to high-DOF manipulation. Finally, a tighter theoretical analysis of convergence and optimality, and broader empirical studies, will further clarify when and why quasimetric planning is preferable to value-based alternatives.

\section{Acknowledgement}
\label{sec:thanks}
\vspace{0.3em}

This project was provided with computer and storage resources by GENCI at IDRIS thanks to the grant 2024-AD011015210 on the supercomputer Jean Zay's CSL partition.

    
    \bibliographystyle{abbrv}
    \bibliography{bib}

@misc{r3m,
      title={R3M: A Universal Visual Representation for Robot Manipulation}, 
      author={Suraj Nair and Aravind Rajeswaran and Vikash Kumar and Chelsea Finn and Abhinav Gupta},
      year={2022},
      eprint={2203.12601},
      archivePrefix={arXiv},
      primaryClass={cs.RO},
      url={https://arxiv.org/abs/2203.12601}, 
}

@misc{vmg,
      title={Value Memory Graph: A Graph-Structured World Model for Offline Reinforcement Learning}, 
      author={Deyao Zhu and Li Erran Li and Mohamed Elhoseiny},
      year={2023},
      eprint={2206.04384},
      archivePrefix={arXiv},
      primaryClass={cs.LG},
      url={https://arxiv.org/abs/2206.04384}, 
}

@misc{minedojo,
      title={MineDojo: Building Open-Ended Embodied Agents with Internet-Scale Knowledge}, 
      author={Linxi Fan and Guanzhi Wang and Yunfan Jiang and Ajay Mandlekar and Yuncong Yang and Haoyi Zhu and Andrew Tang and De-An Huang and Yuke Zhu and Anima Anandkumar},
      year={2022},
      eprint={2206.08853},
      archivePrefix={arXiv},
      primaryClass={cs.LG},
      url={https://arxiv.org/abs/2206.08853}, 
}

@article{practice,
  title={The role of deliberate practice in the acquisition of expert performance.},
  author={Ericsson, K Anders and Krampe, Ralf T and Tesch-R{\"o}mer, Clemens},
  journal={Psychological review},
  volume={100},
  number={3},
  pages={363},
  year={1993},
  publisher={American Psychological Association}
}

@article{internal,
  title={Internal models for motor control and trajectory planning},
  author={Kawato, Mitsuo},
  journal={Current opinion in neurobiology},
  volume={9},
  number={6},
  pages={718--727},
  year={1999},
  publisher={Elsevier}
}

@misc{plan2vec,
      title={Plan2Vec: Unsupervised Representation Learning by Latent Plans}, 
      author={Ge Yang and Amy Zhang and Ari S. Morcos and Joelle Pineau and Pieter Abbeel and Roberto Calandra},
      year={2020},
      eprint={2005.03648},
      archivePrefix={arXiv},
      primaryClass={cs.LG},
      url={https://arxiv.org/abs/2005.03648}, 
}

@misc{crl,
      title={Contrastive Learning as Goal-Conditioned Reinforcement Learning}, 
      author={Benjamin Eysenbach and Tianjun Zhang and Ruslan Salakhutdinov and Sergey Levine},
      year={2023},
      eprint={2206.07568},
      archivePrefix={arXiv},
      primaryClass={cs.LG},
      url={https://arxiv.org/abs/2206.07568}, 
}

@misc{por,
      title={A Policy-Guided Imitation Approach for Offline Reinforcement Learning}, 
      author={Haoran Xu and Li Jiang and Jianxiong Li and Xianyuan Zhan},
      year={2023},
      eprint={2210.08323},
      archivePrefix={arXiv},
      primaryClass={cs.LG},
      url={https://arxiv.org/abs/2210.08323}, 
}

@misc{sbmbrl,
      title={Skill-based Model-based Reinforcement Learning}, 
      author={Lucy Xiaoyang Shi and Joseph J. Lim and Youngwoon Lee},
      year={2022},
      eprint={2207.07560},
      archivePrefix={arXiv},
      primaryClass={cs.LG},
      url={https://arxiv.org/abs/2207.07560}, 
}

@misc{tacorl,
      title={Latent Plans for Task-Agnostic Offline Reinforcement Learning}, 
      author={Erick Rosete-Beas and Oier Mees and Gabriel Kalweit and Joschka Boedecker and Wolfram Burgard},
      year={2022},
      eprint={2209.08959},
      archivePrefix={arXiv},
      primaryClass={cs.RO},
      url={https://arxiv.org/abs/2209.08959}, 
}

@misc{acrllsp,
      title={Accelerating Reinforcement Learning with Learned Skill Priors}, 
      author={Karl Pertsch and Youngwoon Lee and Joseph J. Lim},
      year={2020},
      eprint={2010.11944},
      archivePrefix={arXiv},
      primaryClass={cs.LG},
      url={https://arxiv.org/abs/2010.11944}, 
}

@misc{ddco,
      title={DDCO: Discovery of Deep Continuous Options for Robot Learning from Demonstrations}, 
      author={Sanjay Krishnan and Roy Fox and Ion Stoica and Ken Goldberg},
      year={2017},
      eprint={1710.05421},
      archivePrefix={arXiv},
      primaryClass={cs.RO},
      url={https://arxiv.org/abs/1710.05421}, 
}

@misc{tap,
      title={Efficient Planning in a Compact Latent Action Space}, 
      author={Zhengyao Jiang and Tianjun Zhang and Michael Janner and Yueying Li and Tim Rocktäschel and Edward Grefenstette and Yuandong Tian},
      year={2023},
      eprint={2208.10291},
      archivePrefix={arXiv},
      primaryClass={cs.LG},
      url={https://arxiv.org/abs/2208.10291}, 
}

@misc{opal,
      title={OPAL: Offline Primitive Discovery for Accelerating Offline Reinforcement Learning}, 
      author={Anurag Ajay and Aviral Kumar and Pulkit Agrawal and Sergey Levine and Ofir Nachum},
      year={2021},
      eprint={2010.13611},
      archivePrefix={arXiv},
      primaryClass={cs.LG},
      url={https://arxiv.org/abs/2010.13611}, 
}

@inproceedings{schmidhuber1991learning,
  title={Learning to generate sub-goals for action sequences},
  author={Schmidhuber, J{\"u}rgen},
  booktitle={Artificial neural networks},
  pages={967--972},
  year={1991}
}

@article{sutton1999between,
  title={Between MDPs and semi-MDPs: A framework for temporal abstraction in reinforcement learning},
  author={Sutton, Richard S and Precup, Doina and Singh, Satinder},
  journal={Artificial intelligence},
  volume={112},
  number={1-2},
  pages={181--211},
  year={1999},
  publisher={Elsevier}
}

@misc{actionablemodels,
      title={Actionable Models: Unsupervised Offline Reinforcement Learning of Robotic Skills}, 
      author={Yevgen Chebotar and Karol Hausman and Yao Lu and Ted Xiao and Dmitry Kalashnikov and Jake Varley and Alex Irpan and Benjamin Eysenbach and Ryan Julian and Chelsea Finn and Sergey Levine},
      year={2021},
      eprint={2104.07749},
      archivePrefix={arXiv},
      primaryClass={cs.RO},
      url={https://arxiv.org/abs/2104.07749}, 
}

@misc{cqm,
      title={CQM: Curriculum Reinforcement Learning with a Quantized World Model}, 
      author={Seungjae Lee and Daesol Cho and Jonghae Park and H. Jin Kim},
      year={2023},
      eprint={2310.17330},
      archivePrefix={arXiv},
      primaryClass={cs.LG},
      url={https://arxiv.org/abs/2310.17330}, 
}

@misc{dhrl,
      title={DHRL: A Graph-Based Approach for Long-Horizon and Sparse Hierarchical Reinforcement Learning}, 
      author={Seungjae Lee and Jigang Kim and Inkyu Jang and H. Jin Kim},
      year={2022},
      eprint={2210.05150},
      archivePrefix={arXiv},
      primaryClass={cs.LG},
      url={https://arxiv.org/abs/2210.05150}, 
}

@misc{tpe,
      title={Topological Experience Replay}, 
      author={Zhang-Wei Hong and Tao Chen and Yen-Chen Lin and Joni Pajarinen and Pulkit Agrawal},
      year={2023},
      eprint={2203.15845},
      archivePrefix={arXiv},
      primaryClass={cs.LG},
      url={https://arxiv.org/abs/2203.15845}, 
}

@misc{wmglllp,
      title={World Model as a Graph: Learning Latent Landmarks for Planning}, 
      author={Lunjun Zhang and Ge Yang and Bradly C. Stadie},
      year={2021},
      eprint={2011.12491},
      archivePrefix={arXiv},
      primaryClass={cs.AI},
      url={https://arxiv.org/abs/2011.12491}, 
}

@misc{sptmn,
      title={Semi-parametric Topological Memory for Navigation}, 
      author={Nikolay Savinov and Alexey Dosovitskiy and Vladlen Koltun},
      year={2018},
      eprint={1803.00653},
      archivePrefix={arXiv},
      primaryClass={cs.LG},
      url={https://arxiv.org/abs/1803.00653}, 
}

@misc{pgcp,
      title={Planning with Goal-Conditioned Policies}, 
      author={Soroush Nasiriany and Vitchyr H. Pong and Steven Lin and Sergey Levine},
      year={2019},
      eprint={1911.08453},
      archivePrefix={arXiv},
      primaryClass={cs.LG},
      url={https://arxiv.org/abs/1911.08453}, 
}

@misc{igbpgcp,
      title={Imitating Graph-Based Planning with Goal-Conditioned Policies}, 
      author={Junsu Kim and Younggyo Seo and Sungsoo Ahn and Kyunghwan Son and Jinwoo Shin},
      year={2023},
      eprint={2303.11166},
      archivePrefix={arXiv},
      primaryClass={cs.LG},
      url={https://arxiv.org/abs/2303.11166}, 
}

@misc{lgsghrl,
      title={Landmark-Guided Subgoal Generation in Hierarchical Reinforcement Learning}, 
      author={Junsu Kim and Younggyo Seo and Jinwoo Shin},
      year={2021},
      eprint={2110.13625},
      archivePrefix={arXiv},
      primaryClass={cs.LG},
      url={https://arxiv.org/abs/2110.13625}, 
}

@misc{msplugr,
      title={Mapping State Space using Landmarks for Universal Goal Reaching}, 
      author={Zhiao Huang and Fangchen Liu and Hao Su},
      year={2019},
      eprint={1908.05451},
      archivePrefix={arXiv},
      primaryClass={cs.LG},
      url={https://arxiv.org/abs/1908.05451}, 
}

@misc{sfllggcrl,
      title={Successor Feature Landmarks for Long-Horizon Goal-Conditioned Reinforcement Learning}, 
      author={Christopher Hoang and Sungryull Sohn and Jongwook Choi and Wilka Carvalho and Honglak Lee},
      year={2021},
      eprint={2111.09858},
      archivePrefix={arXiv},
      primaryClass={cs.LG},
      url={https://arxiv.org/abs/2111.09858}, 
}

@misc{sorb,
      title={Search on the Replay Buffer: Bridging Planning and Reinforcement Learning}, 
      author={Benjamin Eysenbach and Ruslan Salakhutdinov and Sergey Levine},
      year={2019},
      eprint={1906.05253},
      archivePrefix={arXiv},
      primaryClass={cs.AI},
      url={https://arxiv.org/abs/1906.05253}, 
}

@misc{qphil,
      title={Navigation with QPHIL: Quantizing Planner for Hierarchical Implicit Q-Learning}, 
      author={Alexi Canesse and Mathieu Petitbois and Ludovic Denoyer and Sylvain Lamprier and Rémy Portelas},
      year={2024},
      eprint={2411.07760},
      archivePrefix={arXiv},
      primaryClass={cs.LG},
      url={https://arxiv.org/abs/2411.07760}, 
}

@misc{gacshrl,
      title={Generating Adjacency-Constrained Subgoals in Hierarchical Reinforcement Learning}, 
      author={Tianren Zhang and Shangqi Guo and Tian Tan and Xiaolin Hu and Feng Chen},
      year={2021},
      eprint={2006.11485},
      archivePrefix={arXiv},
      primaryClass={cs.LG},
      url={https://arxiv.org/abs/2006.11485}, 
}

@misc{feudalhnets,
      title={FeUdal Networks for Hierarchical Reinforcement Learning}, 
      author={Alexander Sasha Vezhnevets and Simon Osindero and Tom Schaul and Nicolas Heess and Max Jaderberg and David Silver and Koray Kavukcuoglu},
      year={2017},
      eprint={1703.01161},
      archivePrefix={arXiv},
      primaryClass={cs.AI},
      url={https://arxiv.org/abs/1703.01161}, 
}

@misc{nearopt,
      title={Near-Optimal Representation Learning for Hierarchical Reinforcement Learning}, 
      author={Ofir Nachum and Shixiang Gu and Honglak Lee and Sergey Levine},
      year={2019},
      eprint={1810.01257},
      archivePrefix={arXiv},
      primaryClass={cs.AI},
      url={https://arxiv.org/abs/1810.01257}, 
}

@misc{dataeff,
      title={Data-Efficient Hierarchical Reinforcement Learning}, 
      author={Ofir Nachum and Shixiang Gu and Honglak Lee and Sergey Levine},
      year={2018},
      eprint={1805.08296},
      archivePrefix={arXiv},
      primaryClass={cs.LG},
      url={https://arxiv.org/abs/1805.08296}, 
}

@misc{lmlhh,
      title={Learning Multi-Level Hierarchies with Hindsight}, 
      author={Andrew Levy and George Konidaris and Robert Platt and Kate Saenko},
      year={2019},
      eprint={1712.00948},
      archivePrefix={arXiv},
      primaryClass={cs.AI},
      url={https://arxiv.org/abs/1712.00948}, 
}

@inproceedings{hdrl,
 author = {Kulkarni, Tejas D and Narasimhan, Karthik and Saeedi, Ardavan and Tenenbaum, Josh},
 booktitle = {Advances in Neural Information Processing Systems},
 editor = {D. Lee and M. Sugiyama and U. Luxburg and I. Guyon and R. Garnett},
 pages = {},
 publisher = {Curran Associates, Inc.},
 title = {Hierarchical Deep Reinforcement Learning: Integrating Temporal Abstraction and Intrinsic Motivation},
 url = {https://proceedings.neurips.cc/paper_files/paper/2016/file/f442d33fa06832082290ad8544a8da27-Paper.pdf},
 volume = {29},
 year = {2016}
}

@misc{ris,
      title={Goal-Conditioned Reinforcement Learning with Imagined Subgoals}, 
      author={Elliot Chane-Sane and Cordelia Schmid and Ivan Laptev},
      year={2021},
      eprint={2107.00541},
      archivePrefix={arXiv},
      primaryClass={cs.LG},
      url={https://arxiv.org/abs/2107.00541}, 
}

@inproceedings{gcrl1993,
  author       = {Leslie Pack Kaelbling},
  editor       = {Ruzena Bajcsy},
  title        = {Learning to Achieve Goals},
  booktitle    = {Proceedings of the 13th International Joint Conference on Artificial
                  Intelligence. Chamb{\'{e}}ry, France, August 28 - September 3,
                  1993},
  pages        = {1094--1099},
  publisher    = {Morgan Kaufmann},
  year         = {1993},
  timestamp    = {Tue, 20 Aug 2019 16:18:33 +0200},
  biburl       = {https://dblp.org/rec/conf/ijcai/Kaelbling93.bib},
  bibsource    = {dblp computer science bibliography, https://dblp.org}
}

@misc{offlinerl,
      title={Offline Reinforcement Learning: Tutorial, Review, and Perspectives on Open Problems}, 
      author={Sergey Levine and Aviral Kumar and George Tucker and Justin Fu},
      year={2020},
      eprint={2005.01643},
      archivePrefix={arXiv},
      primaryClass={cs.LG},
      url={https://arxiv.org/abs/2005.01643}, 
}

@Inbook{batchrl,
author="Lange, Sascha
and Gabel, Thomas
and Riedmiller, Martin",
editor="Wiering, Marco
and van Otterlo, Martijn",
title="Batch Reinforcement Learning",
bookTitle="Reinforcement Learning: State-of-the-Art",
year="2012",
publisher="Springer Berlin Heidelberg",
address="Berlin, Heidelberg",
pages="45--73",
abstract="Batch reinforcement learning is a subfield of dynamic programming-based reinforcement learning. Originally defined as the task of learning the best possible policy from a fixed set of a priori-known transition samples, the (batch) algorithms developed in this field can be easily adapted to the classical online case, where the agent interacts with the environment while learning. Due to the efficient use of collected data and the stability of the learning process, this research area has attracted a lot of attention recently. In this chapter, we introduce the basic principles and the theory behind batch reinforcement learning, describe the most important algorithms, exemplarily discuss ongoing research within this field, and briefly survey real-world applications of batch reinforcement learning.",
isbn="978-3-642-27645-3",
doi="10.1007/978-3-642-27645-3_2",
url="https://doi.org/10.1007/978-3-642-27645-3_2"
}

@misc{iql,
      title={Offline Reinforcement Learning with Implicit Q-Learning}, 
      author={Ilya Kostrikov and Ashvin Nair and Sergey Levine},
      year={2021},
      eprint={2110.06169},
      archivePrefix={arXiv},
      primaryClass={cs.LG},
      url={https://arxiv.org/abs/2110.06169}, 
}

@article{gcsl,
title={Learning To Reach Goals Without Reinforcement Learning},
author={Dibya Ghosh and Abhishek Gupta and Justin Fu and Ashwin Reddy and Coline Devin and Benjamin Eysenbach and Sergey Levine},
journal={ArXiv},
year={2019},
volume={abs/1912.06088}
}

@misc{wgcsl,
      title={Rethinking Goal-conditioned Supervised Learning and Its Connection to Offline RL}, 
      author={Rui Yang and Yiming Lu and Wenzhe Li and Hao Sun and Meng Fang and Yali Du and Xiu Li and Lei Han and Chongjie Zhang},
      year={2022},
      eprint={2202.04478},
      archivePrefix={arXiv},
      primaryClass={cs.LG},
      url={https://arxiv.org/abs/2202.04478}, 
}

@book{rl,
    title={Reinforcement Learning : An Introduction, Second Edition},
    author={Sutton, Richard S and Barto, Andrew G},
    publisher={MIT Press},
    year={2018}
}

@article{muzero,
  title={Mastering atari, go, chess and shogi by planning with a learned model},
  author={Schrittwieser, Julian and Antonoglou, Ioannis and Hubert, Thomas and Simonyan, Karen and Sifre, Laurent and Schmitt, Simon and Guez, Arthur and Lockhart, Edward and Hassabis, Demis and Graepel, Thore and others},
  journal={Nature},
}

@article{deeprl,
    title={Deep Reinforcement Learning : A Brief Survey},
    author={Arulkumaran, Kai and Deisenroth, Marc Peter and Brundage, Miles and Bharath, Anil Anthony},
    journal={IEEE Signal Processing Magazine},
    publisher={IEEE},
    year={2017},
}

@article{gcbc,
    author  = {Yiming Ding and Carlos Florensa and Mariano Phielipp and Pieter Abbeel},
    title   = {Goal-Conditioned Imitation Learning},
    journal = {Advances in Neural Information Processing Systems},
    year    = {2019},
}

@article{gcrlsurvey,
    title={Goal-Conditioned Reinforcement Learning : Problems and Solutions},
    author={Liu, Minghuan and Zhu, Menghui and Zhang, Weinan},
    journal={IJCAI},
    year={2022}
}

@inproceedings{h_iql,
    author      = {Park, Seohong and Ghosh, Dibya and Eysenbach, Benjamin and Levine, Sergey},
    booktitle   = {Advances in Neural Information Processing Systems},
    title = {HIQL: Offline Goal-Conditioned RL with Latent States as Actions},
    year = {2023},
}

@article{h_gcoff,
  title={Hierarchical planning through goal-conditioned offline reinforcement learning},
  author={Li, Jinning and Tang, Chen and Tomizuka, Masayoshi and Zhan, Wei},
  journal={IEEE Robotics and Automation Letters},
  year={2022},
  publisher={IEEE}
}

@inproceedings{qrl,
  title={Optimal goal-reaching reinforcement learning via quasimetric learning},
  author={Wang, Tongzhou and Torralba, Antonio and Isola, Phillip and Zhang, Amy},
  booktitle={International Conference on Machine Learning},
  year={2023},
}

@article{ogbench,
    title={Ogbench: Benchmarking offline goal-conditioned rl},
    author={Park, Seohong and Frans, Kevin and Eysenbach, Benjamin and Levine, Sergey},
    booktitle={International Conference of Learning Representation},
    year={2024}
}

@article{d4rl,
  title={D4rl: Datasets for deep data-driven reinforcement learning},
  author={Fu, Justin and Kumar, Aviral and Nachum, Ofir and Tucker, George and Levine, Sergey},
  journal={arXiv preprint arXiv:2004.07219},
  year={2020}
}

@article{her,
  title={Hindsight experience replay},
  author={Andrychowicz, Marcin and Wolski, Filip and Ray, Alex and Schneider, Jonas and Fong, Rachel and Welinder, Peter and McGrew, Bob and Tobin, Josh and Pieter Abbeel, OpenAI and Zaremba, Wojciech},
  journal={Advances in neural information processing systems},
  year={2017}
}

@inproceedings{curl,
  title={Curl: Contrastive unsupervised representations for reinforcement learning},
  author={Laskin, Michael and Srinivas, Aravind and Abbeel, Pieter},
  booktitle={International conference on machine learning},
  organization={PMLR}
}

@article{bisim,
  title={Bisimulation metric for Model Predictive Control},
  author={Shimizu, Yutaka and Tomizuka, Masayoshi},
  journal={arXiv preprint arXiv:2410.04553},
  year={2024}
}

@inproceedings{deepmdp,
  title={Deepmdp: Learning continuous latent space models for representation learning},
  author={Gelada, Carles and Kumar, Saurabh and Buckman, Jacob and Nachum, Ofir and Bellemare, Marc G},
  booktitle={International conference on machine learning},
  organization={PMLR}
}

@article{mapping,
  title={Mapping state space using landmarks for universal goal reaching},
  author={Huang, Zhiao and Liu, Fangchen and Su, Hao},
  journal={Advances in Neural Information Processing Systems},
  year={2019}
}

@article{hilp,
  title={Foundation policies with hilbert representations},
  author={Park, Seohong and Kreiman, Tobias and Levine, Sergey},
  journal={arXiv preprint arXiv:2402.15567},
  year={2024}
}

@article{ssgcrl,
  title={Self-supervised learning of distance functions for goal-conditioned reinforcement learning},
  author={Venkattaramanujam, Srinivas and Crawford, Eric and Doan, Thang and Precup, Doina},
  journal={arXiv preprint arXiv:1907.02998},
  year={2019}
}

@article{iqe,
  title={Improved representation of asymmetrical distances with interval quasimetric embeddings},
  author={Wang, Tongzhou and Isola, Phillip},
  journal={arXiv preprint arXiv:2211.15120},
  year={2022}
}

@article{vicreg,
  title={Vicreg: Variance-invariance-covariance regularization for self-supervised learning},
  author={Bardes, Adrien and Ponce, Jean and LeCun, Yann},
  journal={arXiv preprint arXiv:2105.04906},
  year={2021}
}

@article{awr,
  title={Advantage-weighted regression: Simple and scalable off-policy reinforcement learning},
  author={Peng, Xue Bin and Kumar, Aviral and Zhang, Grace and Levine, Sergey},
  journal={arXiv preprint arXiv:1910.00177},
  year={2019}
}

@article{onquasi,
  title={On quasi-metric spaces},
  author={Wilson, Wallace Alvin},
  journal={American Journal of Mathematics},
  volume={53},
  number={3},
  pages={675--684},
  year={1931},
  publisher={JSTOR}
}

@article{floyd,
  title={Floyd-warshall algorithm},
  author={Weisstein, Eric W},
  journal={Wolfram},
  year={2008},
  publisher={Wolfram Research, Inc.}
}

@book{aigamesoverview,
  title={Artificial and computational intelligence in games},
  author={Lucas, Simon M and Mateas, Michael and Preuss, Mike and Spronck, Pieter and Togelius, Julian and Cowling, Peter I and Buro, Michael and Bida, Michal and Botea, Adi and Bouzy, Bruno and others},
  volume={6},
  year={2013},
  publisher={Citeseer}
}

@incollection{gametelemetry,
  title={Game development telemetry in production},
  author={Zoeller, Georg},
  booktitle={Game analytics: Maximizing the value of player data},
  year={2013},
  publisher={Springer}
}

@phdthesis{npc_believe,
  title={The Non-Player Character: Exploring the believability of NPC presentation and behavior},
  author={Warpefelt, Henrik},
  year={2016},
  school={Department of Computer and Systems Sciences, Stockholm University}
}

@article{npc_tree,
  title={Behavior tree design of intelligent behavior of non-player character (NPC) based on Unity3D},
  author={Zhu, Xianwen},
  journal={Journal of Intelligent \& Fuzzy Systems},
  year={2019},
  publisher={SAGE Publications Sage UK: London, England}
}

@article{dota,
  title={Dota 2 with large scale deep reinforcement learning},
  author={Berner, Christopher and Brockman, Greg and Chan, Brooke and Cheung, Vicki and D{\k{e}}biak, Przemys{\l}aw and Dennison, Christy and Farhi, David and Fischer, Quirin and Hashme, Shariq and Hesse, Chris and others},
  journal={arXiv preprint arXiv:1912.06680},
  year={2019}
}

@article{calql,
  title={Cal-ql: Calibrated offline rl pre-training for efficient online fine-tuning},
  author={Nakamoto, Mitsuhiko and Zhai, Simon and Singh, Anikait and Sobol Mark, Max and Ma, Yi and Finn, Chelsea and Kumar, Aviral and Levine, Sergey},
  journal={Advances in Neural Information Processing Systems},
  volume={36},
  pages={62244--62269},
  year={2023}
}

@article{graphaug,
  title={Graph augmented deep reinforcement learning in the gamerland3d environment},
  author={Beeching, Edward and Peter, Maxim and Marcotte, Philippe and Debangoye, Jilles and Simonin, Olivier and Romoff, Joshua and Wolf, Christian},
  journal={arXiv preprint arXiv:2112.11731},
  year={2021}
}

@article{astarnavmesh,
  title={A*-based pathfinding in modern computer games},
  author={Cui, Xiao and Shi, Hao},
  journal={International Journal of Computer Science and Network Security},
  volume={11},
  number={1},
  pages={125--130},
  year={2011},
  publisher={International Journal of Computer Science and Network Security (IJCSNS)}
}

@article{nnlipschitz,
  title={Lipschitz regularity of deep neural networks: analysis and efficient estimation},
  author={Virmaux, Aladin and Scaman, Kevin},
  journal={Advances in Neural Information Processing Systems},
  volume={31},
  year={2018}
}

@article{surveyreg,
  title={A survey of regularization strategies for deep models},
  author={Moradi, Reza and Berangi, Reza and Minaei, Behrouz},
  journal={Artificial Intelligence Review},
  volume={53},
  number={6},
  pages={3947--3986},
  year={2020},
  publisher={Springer}
}

@article{thompsonatomic,
  title={Correspondences between the classical electrostatic Thomson problem and atomic electronic structure},
  author={LaFave Jr, Tim},
  journal={Journal of Electrostatics},
  volume={71},
  number={6},
  pages={1029--1035},
  year={2013},
  publisher={Elsevier}
}

@article{thompsonsphere,
  title={Min-energy configurations of electrons on a sphere},
  author={Brown, Kevin},
  journal={URL: http://mathpages. com/home/kmath005/kmath005. htm},
  year={2020}
}

@article{pitis,
  title={An inductive bias for distances: Neural nets that respect the triangle inequality},
  author={Pitis, Silviu and Chan, Harris and Jamali, Kiarash and Ba, Jimmy},
  journal={arXiv preprint arXiv:2002.05825},
  year={2020}
}

@article{hilow,
  title={Hierarchical Subspaces of Policies for Continual Offline Reinforcement Learning},
  author={Kobanda, Anthony and Portelas, R{\'e}my and Maillard, Odalric-Ambrym and Denoyer, Ludovic},
  journal={arXiv preprint arXiv:2412.14865},
  year={2024}
}
    
    \appendix

    \newpage
\section{Reinforcement Learning \& Video Games}
\label{app:rl_games}

\subsection{Virtual Worlds, Players, and Data}
\label{app:rl_games:worlds}

Modern video games are persistent, high-fidelity simulations that run at $30$ to $240$ FPS and may track every interactive entity in real time, from characters and projectiles to used widgets or commands \cite{aigamesoverview,gametelemetry}.
Worldwide platforms such as \emph{Steam} or \emph{Roblox} routinely serve $10^{5}$ to $10^{7}$ concurrent users, generating petabytes of play-logs per month through automatic telemetry pipelines.

These logs are archived for anti-cheat, matchmaking, and long-tail analytics.  
Because the underlying engines already enforce consistent physics and rendering, the recorded trajectories form an unusually clean, richly annotated dataset, in orders of magnitude larger than typical robotics corpora. Thus, this makes video games an attractive, low-risk domain for Offline Reinforcement Learning (RL) \cite{offlinerl}.

However, key challenges remain in this field of resarch : state spaces can be partially observable (fog-of-war), multi-modal (text chat, voice) and strongly stochastic (loot tables, human behavior).
Furthermore, commercial constraints impose strict inference budgets and hard guarantees of player safety, pushing research toward data-efficient, controllable training settings and methods.

\subsection{From Scripted NPCs to Deep Agents}
\label{app:rl_games:bots}
Traditional Non-Player Characters (NPC) rely on finite-state machines or behavior trees \cite{npc_tree,npc_believe}, which are robust for systematic encounters yet brittle when mechanics change.
Deep Learning  (DL) entered the scene with convolutional policies for Atari \cite{muzero}, then self-play agents that mastered \emph{Dota-2} \cite{dota} by running billions of simulation steps on massive clusters.  
While impressive, such training is impractical for most studios, due to server costs and the need for large-scale exploration.  

Hybrid approaches emerged, such imitation pre-training followed by limited on-policy finetuning \cite{calql,vmg}, but still require intrusive engine instrumentation.  
Consequently, interest is shifting to \emph{pure offline} methods to learn from existing replays to create bots or tutorial companions.

\subsection{Offline Goal-Conditioned RL in Games}
\label{app:rl_games:offline}
Offline Goal-Conditioned RL (GCRL) \cite{offlinerl} is a natural fit for modern games : every quest, checkpoint, or capture flag supplies an explicit goal signal, while replay archives furnish millions of feasible, human-level trajectories on which to train. Yet three practical challenges still block adoption :

\textbf{Coverage :} Even the largest replay buffers may sample only a thin manifold of the reachable world ; an agent must extrapolate reliable distances and value estimates to unexplored corners of the available maps, caves, or secret rooms.

\textbf{Directionality :} Many mechanics are inherently one-way, notably for ledge drops, conveyor belts, or one-shot teleporters. 
Hence, symmetric distances underestimate the true return time.

\textbf{Stochasticity :} Random critical damage, loot tables, or portal destinations may break the deterministic assumptions of classic graph planners ; consequently policies must be robust to distributional change.

\subsection{\texttt{ProQ} : Offline Goal-Conditioned RL with Navigation Graphs}

Within our rsearch, our aim is to adapt the idea of \emph{nav-meshes} \cite{astarnavmesh,graphaug}, the hand-authored graphs used in commercial engines, into data-driven learning algorithms. 
\texttt{ProQ} (see \textit{Section \ref{sec:method}}) turns trajectory datasets into a navigation graph, whose edge costs are backed by quasimetric distance weights.

Because of its structure, we infer for future work that the same agent can be fine-tuned on new maps or game modes simply by relearning the key-point set, enabling rapid transfer or continual learning. 
The pre-computed distances should allow fast on-line re-planning : when dynamic events reshape the layout, only the affected edges should be removed, letting the policy adapt without retraining.

    \newpage
\section{Environment Details}
\label{app:env}

\subsection{Learning Agent \& Mazes}
\label{app:env:agent&maze}

OGBench is a standardized suite of offline goal‐conditioned learning tasks. It includes \textsc{PointMaze}, a simple 2D navigation environment in which a point‐mass agent (with continuous ($x,y$) position and bounded velocity inputs) must reach arbitrary goals within a bounded maze.

Four maze variants appear in OGBench (see \textit{Figure \ref{fig:app:env:mazes}}) : \texttt{medium} whose moderate trajectory lengths tests basic navigation ; \texttt{large} with longer corridor lengths increase the decision complexity ; 
\texttt{giant} scales the maze again,
yielding an even larger layouts that force an agent to compose many short hops into a long‐horizon path.\texttt{teleport} is of the same size as the \texttt{large} with stochastic portals 
that randomly chose exits (potentially a dead end).

\begin{figure}[h]
\begin{center}
    \begin{subfigure}
        {0.24\textwidth}        
            \centering
            \includegraphics[width=0.9\linewidth]{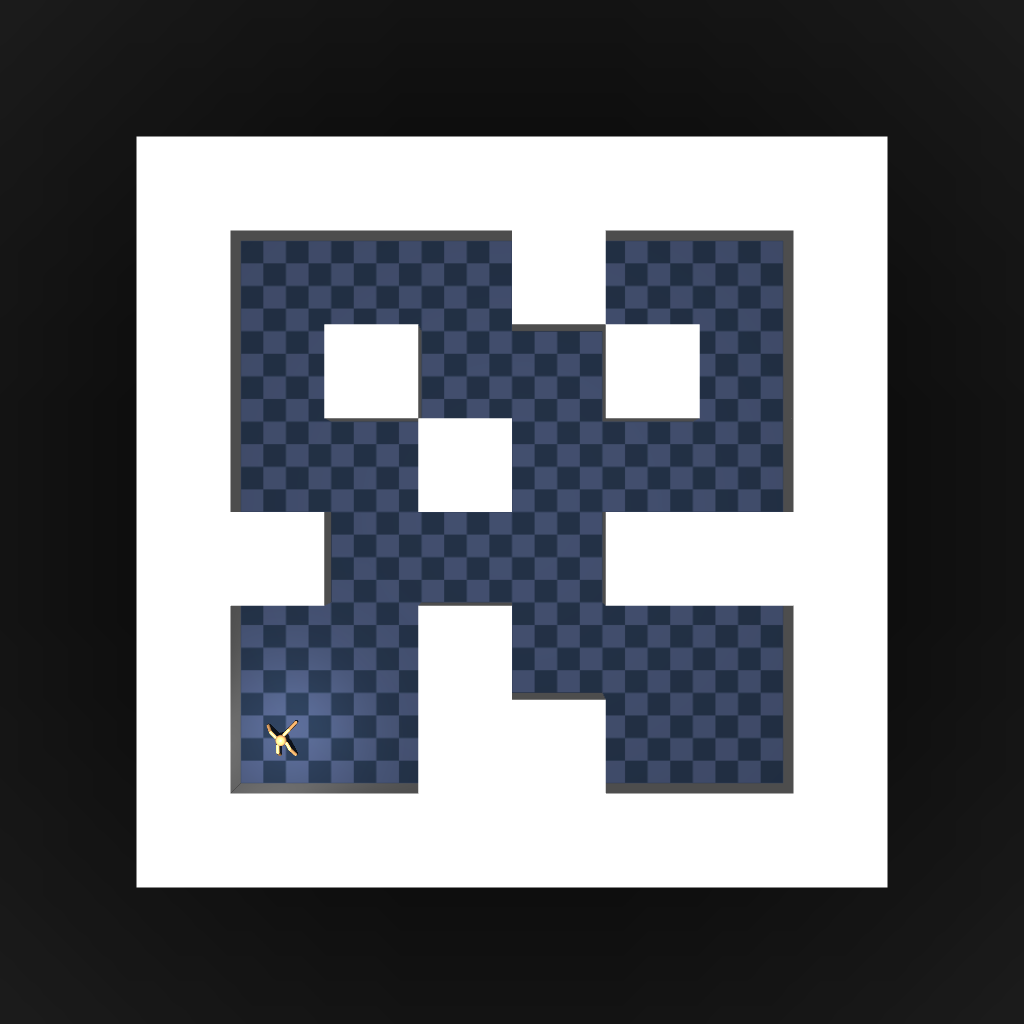}
            \caption{Medium.}
    \end{subfigure}
    \hfill
    \begin{subfigure}
        {0.24\textwidth}        
            \centering
            \includegraphics[width=0.9\linewidth]{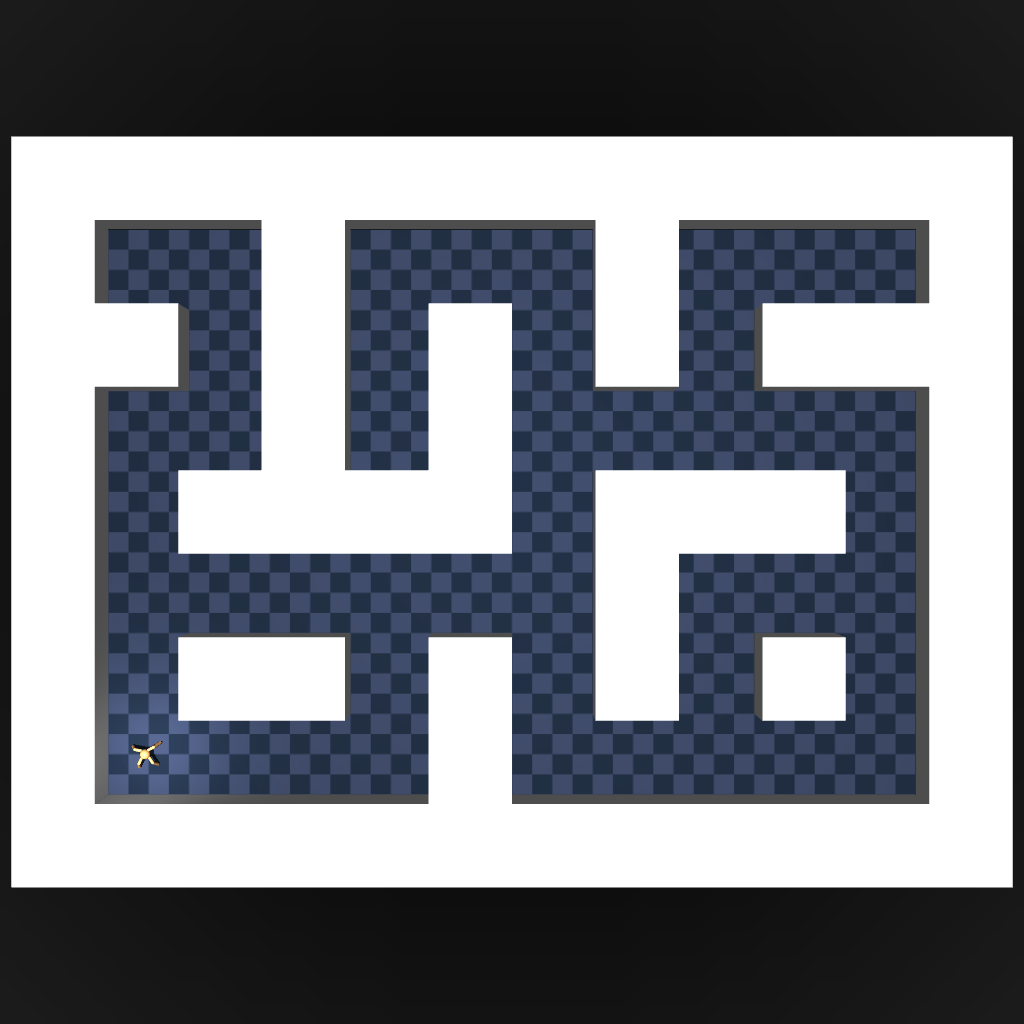}
            \caption{Large.}
    \end{subfigure}
    \hfill
    \begin{subfigure}
        {0.24\textwidth}        
            \centering
            \includegraphics[width=0.9\linewidth]{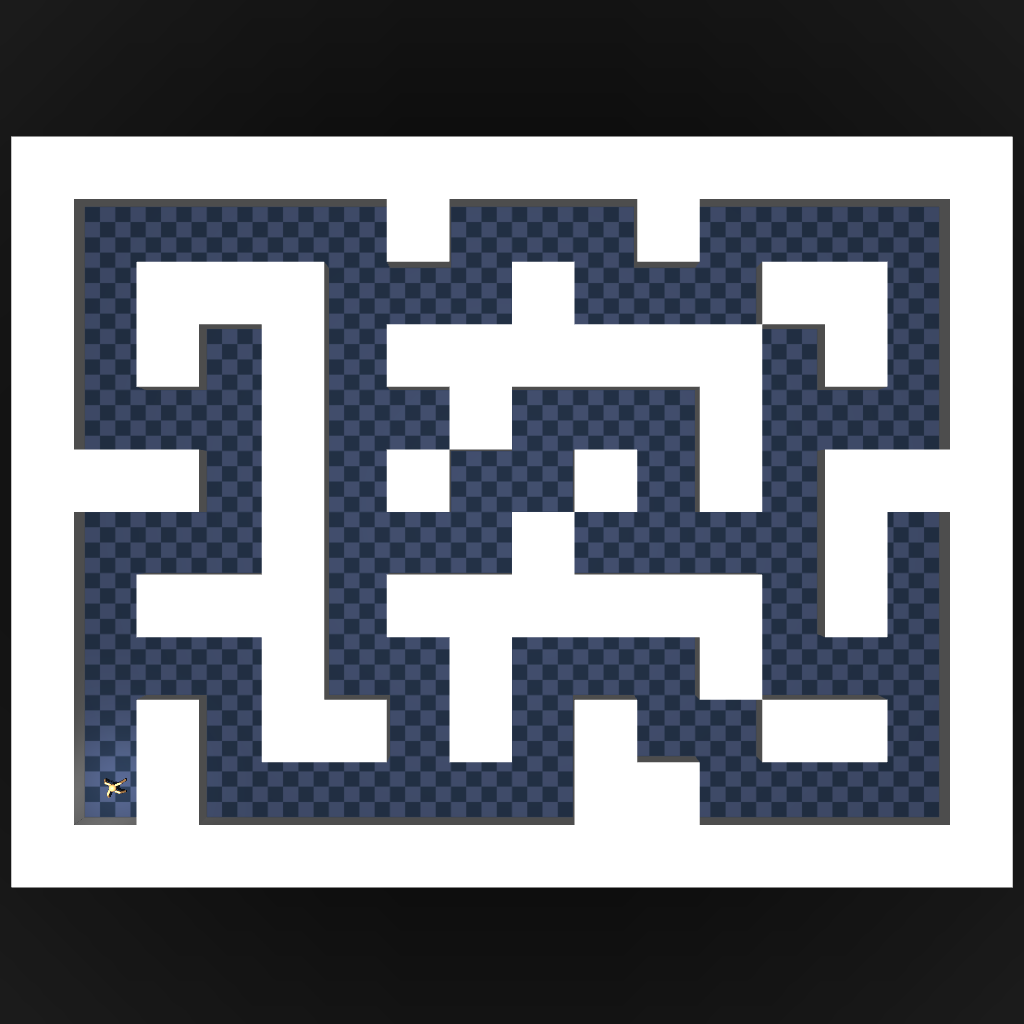}
            \caption{Giant.}
    \end{subfigure}
    \hfill
    \begin{subfigure}
        {0.24\textwidth}        
            \centering
            \includegraphics[width=0.9\linewidth]{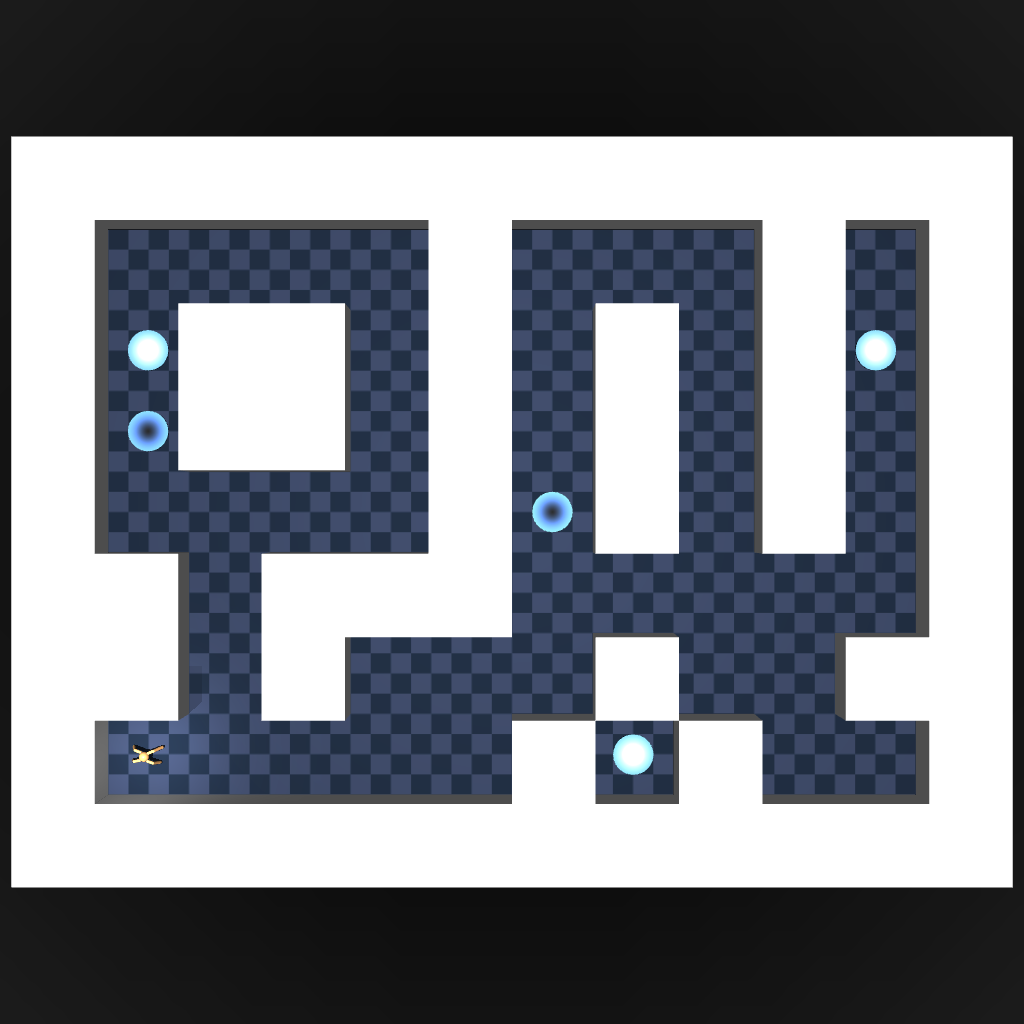}
            \caption{Teleport.}
    \end{subfigure}
\end{center}
\vspace{-0.5em}
\caption{
\small
Layouts of the four \textsc{PointMaze} tasks used in our experiments.
All images are rendered on the same grid resolution
(blue squares have identical side length), so the overall
arena grows from \textit{Medium} to \textit{Giant}.
The \textit{Teleport} task adds blue portal cells that stochastically move the agent to another non-local location.
}
\label{fig:app:env:mazes}
\end{figure}

\subsection{Datasets}
\label{app:env:data}

Each layout provides two offline datasets of one million transitions. \texttt{navigate} contains long expert rollouts ($1000$ steps each) where a noisy expert wanders toward random goals ; these traces densely cover corridors but compound low‐level control noise. \texttt{stitch} breaks those same trajectories into many shorter ones ($200$ steps each), forcing the agent to \textit{stitch} discontiguous segments.

\textit{Figure \ref{fig:app:data-navigate}} show illustrations of percentages of the available trajectories for the two dataset types.
We notably see how densely the trajectories cover each maze, for both \texttt{navigate} and \texttt{stitch}.

\begin{figure}[h]
\begin{center}
    \begin{subfigure}
            {\textwidth}        
            \centering
            \includegraphics[width=\linewidth]{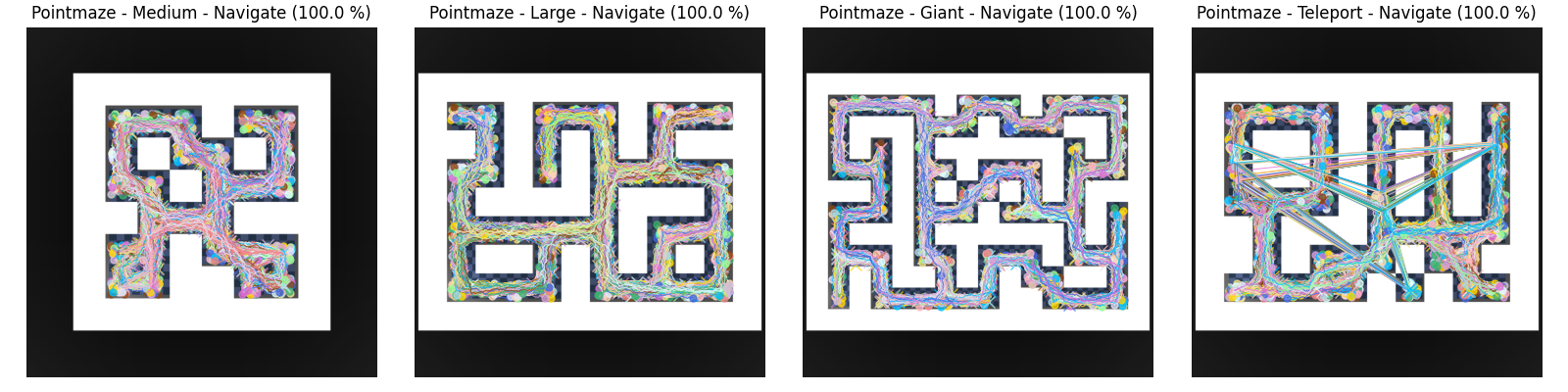}
    \end{subfigure}
    \begin{subfigure}
            {\textwidth}        
            \centering
            \includegraphics[width=\linewidth]{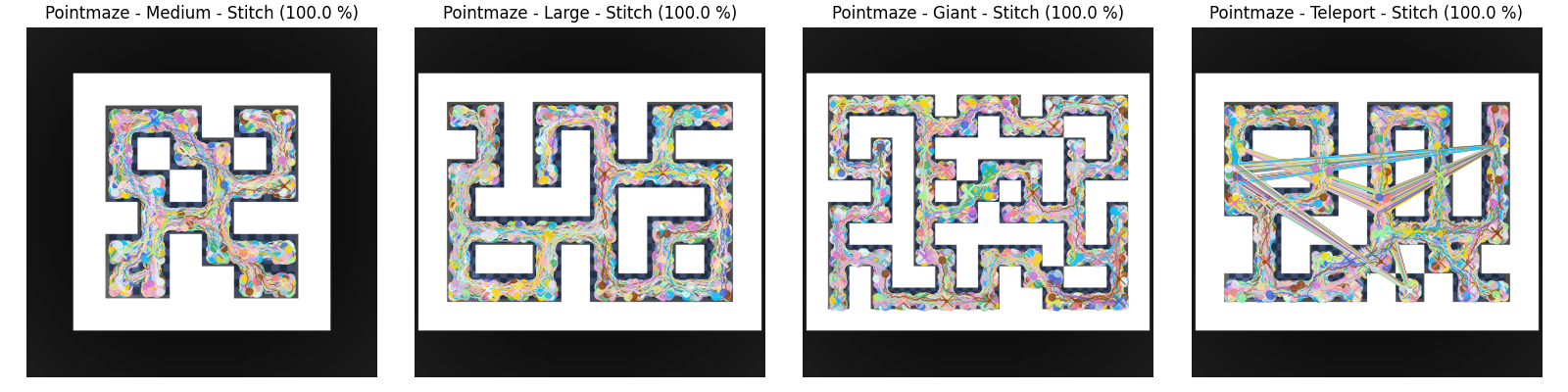}
    \end{subfigure}
\end{center}
\vspace{-0.5em}
\caption{
\textbf{Illustration of the trajectories within the \texttt{navigate} and \texttt{stitch} datasets.}
}
\label{fig:app:data-navigate}
\end{figure}


    \section{Implementation Details}
\label{app:impl}

In our experiments we use \textsc{Jax} as a learning framework, with \textsc{Flax} and \textsc{Optax} implementation tools. \textit{Table \ref{tab:impl:hp}}
groups the hyper-parameters that differ from \textsc{Flax} and \textsc{Optax} default settings ; everything else (such as weight initialiser methods, adam learning rate, \dots) keeps its library default.

\subsection{Network architecture}

If not otherwise stated, all our networks (state encoder, quasimetric embedder, odd detector, actor) are MLPs following the same structure : $3$ hidden layers of size $512$, GELU layer activation.\linebreak
\textbf{Encoder} $\phi$ maps a state $s\in\mathcal{S}$ into a latent space
$\mathbb{R}^{16}$, and uses a MLP with residual connections.
\textbf{Quasimetric} $d$ is an Interval Quasimetric Embedding (IQE) \cite{iqe} with
$64$ components, each of size $8$.\linebreak
\textbf{OOD head}~$\psi$ is a scalar‐output MLP with layer norm and sigmoid activation.
\textbf{Actor} uses $0.1$\linebreak
dropout layers. It receives observations $s\in\mathcal{S}$ concatenated with target latent
keypoints $z\in\mathbb{R}^{16}$ ; the outputs
are a mean action $\mu(a|s,z)$ and a state–independent log‐std for each action dimension.

\begin{table}[h]
\caption{Non-default network hyper-parameters used in all experiments.}
\vspace{0.5em}
\centering
\setlength{\tabcolsep}{6pt}
\begin{tabular}{ccccccc}
\toprule
MLP &  Hidden Sizes & Dropout & Residual & Layer Norm & Output Dim. & Output Act. \\
\midrule
$\phi$   & [ 512 , 512 , 512 ] & 0.0 & \checkmark & \ding{55}  & 16  & Identity \\
$d$          & [ 512 , 512 , 512 ] & 0.0 & \ding{55}  & \ding{55}  & 512 & Identity \\
$\psi$       & [ 512 , 512 , 512 ] & 0.0 & \checkmark & \checkmark  & 1   & Sigmoid \\
$\pi$      & [ 512 , 512 , 512 ] & 0.1 & \checkmark & \ding{55}  & 2   & Tanh mean \\
\bottomrule
\end{tabular}
\label{tab:impl:hp}
\end{table}

\subsection{Optimisation}

All modules use an \textsc{Adam} optimizer
with a learning rate $\eta=3\times10^{-4}$, and a batch size $B=1024$.
Experiments run for $10^{6}$ gradient steps, on three seeds $(100,200,300)$ per maze and dataset. In average each experiment took between $6$ and $8$ hours, on a single Nvidia GeForce GTX 1080 GPU.

We set $\lambda_{\text{OOD}}$ and $\lambda_{\text{dist}}$ as trainable scalar parameters (\texttt{Parameter} modules) and update them through the quadratic
penalty of their respective loss ; they quickly settle around $10^{3}$,
obviating manual tuning.
We also consider IQE margin as $\epsilon\!=\!0.25$, softplus scale as $0.01$, offset as $500$,
and a distance cap $\tau=100$. Moreover the Advantage Weighted Regression \citep{awr} temperature is set to $\alpha=5.0$.

\subsection{Keypoints}

We use $K=100$ keypoints. They were initialized as random latent state from the dataset considered, and remain frozen during the first
$10^{5}$ updates to let $\phi,d,\psi$ stabilise.
Thereafter they are optimised with the composite loss
$\mathcal{L}_{\text{kps}}$, with a repel strength set to $100$ and a repel range set $100$.

\subsection{Data pipeline}

At each gradient step, batches are sampled i.i.d. from the $1$ M‐transition replay buffer considered.
Goals for the value loss computation are
drawn uniformly from the dataset ; while goals for
the actor loss follow HER \citep{her} trajectory‐based geometric sampling, with a $\lambda=0.99$ parameter.

    \newpage
\section{Learning a State-Space Mapping}
\label{app:mapping}

\subsection{Learning an Out-of-Distribution Classifier}
\label{app:mapping:ood}

\subsubsection{Theoretical Analysis}
\label{app:mapping:ood:analysis}

In this appendix, we give a self-contained analysis showing that, under reasonable assumptions, the learned out-of-distribution function successfully separates reachable states from unreachable ones.

As a reminder, to keep the learned keypoints inside reachable areas, we train an encore $\phi:\mathbb{R}^n\rightarrow\mathbb{R}^d$, and an OOD classifier  
$\psi:\mathbb{R}^d\rightarrow[0,1]$. 
The difficulty is that the considered dataset $\mathcal{D}$ only supplies \emph{positive} samples (in-distribution).  
We therefore generate \emph{negative} samples (out-of-distribution) under the assumption that : 
\texttt{(i)} the state space $\mathcal{S}\subset\mathbb{R}^n$ is compact for the Euclidean topology ;\linebreak
\texttt{(ii)} generated states, noted $\mathcal{\widetilde D}$, cover any point at most at a Euclidean distance $Diam(\mathcal{D})$ from $\mathcal{D}$.

In our learning framework, any generated state should be unreachable according to the classifier $\psi$, unless it matches a positive state (\textit{Section \ref{sec:method:Learning-Latent-Space}}). Using these generated states, we learn $\psi$ with a binary cross-entropy loss with a dynamic Lagrange
multiplier to enforce high confidence on positives while pushing negatives in-distribution probability towards $0$ :

\begin{itemize}[leftmargin=*]

\item \textbf{\textit{Positive} samples :}
$z = \phi(s)$, $s\in\mathcal{D}$, 
with the target $\psi(z) = 1$ ;

\item \textbf{\textit{Negative} samples :}
$\tilde z = \phi(\tilde s)$, $\tilde s\in \mathcal{\widetilde D}$, 
with the target $\psi(\tilde z) = 0$ ;

\item \textbf{Loss :} $\mathcal{L}_{\text{ood}}(z,\tilde z) =
-\lambda_{\text{ood}}\cdot
\log\bigl(\psi(z)\bigr)
-\log~\bigl(1-\psi(\tilde z)\bigr)\ ,$

\end{itemize}
\vspace{-0.75em}

where $\lambda_{\text{ood}}$ is updated until satisfaction of a penalty $\Big((1-\delta)-\psi(z)\Big)^{2}$, with $0<\delta<1$ the compliance.\linebreak
\vspace{0.5em}

Throughout this analysis we consider (see \textit{Appendix \ref{app:mapping:ood:assumptions}} for justifications) :

\begin{itemize}[leftmargin=10pt]

    \item[--] A compact state space $\mathcal{S}\subset\mathbb{R}^n$ for the Euclidean topology : it is the set of all in-distribution states.
    \vspace{0.25em}
    
    \item[--] The dataset $\mathcal{D}\subset\mathcal{S}$, satisfies an $\epsilon$-density condition :
    $\exists~\epsilon>0,\forall~s\in\mathcal{S},
    \exists~s'\in\mathcal{D}:~\lVert s-s'\rVert_2~\le~\epsilon~.$
    \vspace{0.1em}
    
    \item[--] We note as $Diam(\mathcal{D})$ the diameter of the dataset $\mathcal{D}$ :
    $Diam(\mathcal{D})=
        \max_{s,s'\in\mathcal{D}}~
        \lVert s-s'\rVert_2
    $ .
    \vspace{0.75em}

    \item[--] The set of generated samples
    $\mathcal{\widetilde D}$ is $\epsilon$-dense in a \textit{shell} around the dataset $\mathcal{D}$, i.e. : 
    \begin{equation}
    \forall~ s\in\mathbb{R}^n~,~
    \min_{s'\in\mathcal{D}}~
    \lVert s-s'\rVert_2
    \le Diam(\mathcal{D})
    \Rightarrow 
    \exists~\tilde s\in\mathcal{\widetilde D}:~
    \lVert s-\tilde s\rVert_2\le\epsilon~.
    \end{equation}
    
    \item[--] The encoder network $\phi:\mathbb{R}^n\rightarrow\mathbb{R}^d$, is assumed $L_\phi$-Lipschitz \cite{nnlipschitz} :
    \begin{equation}
        \forall~s,s'\in\mathbb{R}^n:~\lVert\phi(s)-\phi(s')\rVert_2~\le~L_\phi\cdot\lVert s-s'\rVert_2~.
    \end{equation}
    
    \item[--] The ood classifier network $\psi:\mathbb{R}^d\rightarrow[0,1]$, assumed $L_\psi$-Lipschitz \cite{nnlipschitz} :
    \begin{equation}
        \forall~z,z'\in\mathbb{R}^d:~|\psi(z)-\phi(z')|~\le~L_\psi\cdot\lVert z-z'\rVert_2~.
    \end{equation}

    \item[--] After convergence of $\psi$, given the compliance $0<\delta<1$ , we have (\textit{Section \ref{sec:method:Learning-Latent-Space}}) :
    \begin{equation}
        \forall~s\in\mathcal{D}:\psi(\phi(s))\ge1-\delta~.
    \label{id-dataset}
    \end{equation}

    \item[--] After convergence of $\psi$, given a control margin $0<\eta$ , we note :
    \begin{equation}
        \bar\delta=
        \max
        \big\{\ \psi(\phi(\tilde s))\ \big|\ \tilde s\in \mathcal{\widetilde D}\ ,\
        \eta\le\min_{s\in\mathcal{D}}\ \lVert \tilde s-s\rVert_2\ 
        \big\}\ .
    \label{max-ood}
    \end{equation}
\end{itemize}

Concretely, in the assumptions above, the value $\bar\delta$ is the worst‐case prediction of $\psi$ over all generated points (through interpolation or extrapolation) at least $\eta$ away from any state within the dataset $\mathcal{D}$.

\newpage
\begin{lemma}
    \textbf{In-Distribution Generalization Guarantee :}
    
    For any $s\in\mathcal{S} : \psi(\phi(s))\ge1-\delta-(L_\psi L_\phi)\epsilon$ .
\label{lemma:near-id}
\end{lemma}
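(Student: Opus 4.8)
The plan is to chain together the $\epsilon$-density of the dataset with the two Lipschitz assumptions and the post-convergence guarantee \eqref{id-dataset} on $\mathcal{D}$. Concretely, fix an arbitrary $s\in\mathcal{S}$. By the $\epsilon$-density condition, there exists a sample $s'\in\mathcal{D}$ with $\lVert s-s'\rVert_2\le\epsilon$. The goal is to transport the guarantee $\psi(\phi(s'))\ge1-\delta$ from the nearby dataset point $s'$ to the target point $s$, controlling the error incurred by moving from $s'$ to $s$.

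The key steps, in order, are: (i) push the input distance through the encoder using $L_\phi$-Lipschitzness, giving $\lVert\phi(s)-\phi(s')\rVert_2\le L_\phi\cdot\lVert s-s'\rVert_2\le L_\phi\epsilon$; (ii) push the resulting latent distance through the classifier using $L_\psi$-Lipschitzness, giving $|\psi(\phi(s))-\psi(\phi(s'))|\le L_\psi\cdot\lVert\phi(s)-\phi(s')\rVert_2\le L_\psi L_\phi\epsilon$; (iii) combine with \eqref{id-dataset} via the reverse triangle inequality: since $\psi(\phi(s))\ge\psi(\phi(s'))-|\psi(\phi(s))-\psi(\phi(s'))|$, we obtain
\begin{equation}
\psi(\phi(s))\ \ge\ (1-\delta)-L_\psi L_\phi\epsilon\ ,
\end{equation}
which is exactly the claimed bound. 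The argument is a direct composition of Lipschitz estimates anchored at a nearby dataset point, and holds uniformly in $s$ because the density bound $\epsilon$ is uniform over $\mathcal{S}$.

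The one subtlety to flag is the domain on which the Lipschitz constant of $\psi$ is used: step (ii) requires $L_\psi$-Lipschitzness of $\psi$ along the segment (or at least at the two points $\phi(s)$ and $\phi(s')$) in the latent space $\mathbb{R}^d$, which is precisely what the global Lipschitz assumption on $\psi$ supplies. I would note in passing that the stated assumption \eqref{id-dataset} has the form $|\psi(z)-\phi(z')|\le L_\psi\lVert z-z'\rVert_2$, where the $\phi$ on the right is almost certainly a typo for $\psi$; the proof uses the intended $|\psi(z)-\psi(z')|$ form. Beyond this, there is no real obstacle — this In-Distribution lemma is the easy half of Theorem \ref{thm:ood}, and the genuine work lies in the companion Out-of-Distribution direction, where one must argue that any point sufficiently far from $\mathcal{D}$ (but within the generated shell) is $\epsilon$-close to some negative sample $\tilde s\in\widetilde{\mathcal{D}}$ with $\psi(\phi(\tilde s))\le\bar\delta$, and then run the identical Lipschitz-transport argument in the opposite direction to obtain $\psi(\phi(x))\le\bar\delta+L_\psi L_\phi\epsilon$. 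Setting $C=L_\psi L_\phi$ then reconciles the constant with the statement of Theorem \ref{thm:ood}.
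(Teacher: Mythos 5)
Your proof is correct and follows essentially the same route as the paper's: locate a dataset point $s'$ within $\epsilon$ of $s$ via density, compose the two Lipschitz bounds to control $|\psi(\phi(s))-\psi(\phi(s'))|$ by $L_\psi L_\phi\epsilon$, and anchor to the post-convergence guarantee $\psi(\phi(s'))\ge 1-\delta$. Your side remark about the typo is right in substance, though it occurs in the Lipschitz assumption on $\psi$ rather than in Equation~\eqref{id-dataset}.
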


\begin{proof}
    Let's consider $s\in\mathcal{S}$. 
    Since the dataset $\mathcal{D}$ is $\epsilon$-dense in $\mathcal{S}$, we know $\exists~s'\in\mathcal{D},~\lVert s-s'\rVert_2~\le~\epsilon$, and by Lipschitz continuity :
    $$|\psi(\phi(s))-\psi(\phi(s'))|\le L_\psi\cdot\lVert\phi(s)-\phi(s')\rVert_2\leq (L_\psi L_\phi)\cdot\lVert s-s'\rVert_2\le(L_\psi L_\phi)\epsilon$$
    
    In particular : $\psi(\phi(s))-\psi(\phi(s'))\ge-(L_\psi L_\phi)\epsilon~$.
    By considering this inequality and \textit{Equation \ref{id-dataset}} :
    $$\psi(\phi(s))\ge\psi(\phi(s'))-(L_\psi L_\phi)\epsilon\ge1-\delta-(L_\psi L_\phi)\epsilon~.$$

    This conclude our proof, and demonstrates the near in-distribution guarantee for reachable states.
\end{proof}
\vspace{1em}

\begin{lemma}
    \textbf{Out-of-Distribution Generalization Guarantee :} 
    
    For any $s\in\mathbb{R}^n$ :
    $\eta\le\min_{s'\in\mathcal{D}}
    \lVert s-s'\rVert_2
    \le Diam(\mathcal{D})
    ~\Rightarrow~
    \psi(\phi(s))\le\bar\delta+(L_\psi L_\phi)\epsilon~.$
\label{lemma:not-id}
\end{lemma}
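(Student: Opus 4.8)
The plan is to mirror the argument of Lemma~\ref{lemma:near-id}, but with the roles of the dataset $\mathcal{D}$ and its in-distribution guarantee replaced by the generated set $\mathcal{\widetilde D}$ and the worst-case quantity $\bar\delta$ from \textit{Equation~\ref{max-ood}}. Fix $s\in\mathbb{R}^n$ satisfying $\eta\le\min_{s'\in\mathcal{D}}\lVert s-s'\rVert_2\le Diam(\mathcal{D})$. The upper bound on the distance places $s$ inside the shell on which $\mathcal{\widetilde D}$ is assumed $\epsilon$-dense, so the shell density condition yields a generated point $\tilde s\in\mathcal{\widetilde D}$ with $\lVert s-\tilde s\rVert_2\le\epsilon$.

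Next I would transfer the classifier value from $s$ to $\tilde s$ using the Lipschitz bounds, exactly as in the proof of Lemma~\ref{lemma:near-id} but in the opposite direction. Composing the $L_\phi$-Lipschitzness of $\phi$ with the $L_\psi$-Lipschitzness of $\psi$ gives $|\psi(\phi(s))-\psi(\phi(\tilde s))|\le (L_\psi L_\phi)\lVert s-\tilde s\rVert_2\le (L_\psi L_\phi)\epsilon$, and in particular $\psi(\phi(s))\le\psi(\phi(\tilde s))+(L_\psi L_\phi)\epsilon$. It then remains to bound $\psi(\phi(\tilde s))$ by $\bar\delta$, which holds as soon as $\tilde s$ belongs to the set over which the maximum in \textit{Equation~\ref{max-ood}} is taken, i.e. as soon as $\eta\le\min_{s'\in\mathcal{D}}\lVert\tilde s-s'\rVert_2$. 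Substituting this into the previous inequality closes the estimate.

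The main obstacle is precisely verifying that the approximating point $\tilde s$ still lies in the margin-$\eta$ regime defining $\bar\delta$. A direct triangle inequality only gives $\min_{s'\in\mathcal{D}}\lVert\tilde s-s'\rVert_2\ge\min_{s'\in\mathcal{D}}\lVert s-s'\rVert_2-\lVert s-\tilde s\rVert_2\ge\eta-\epsilon$, which falls short of the required $\eta$ by the density radius $\epsilon$. I would resolve this by carrying the slack explicitly: either by stating the hypothesis with a margin $\eta+\epsilon$ (so that $\tilde s$ inherits a clean $\eta$-margin), or by defining $\bar\delta$ as the maximum over the slightly enlarged shell $\{\tilde s : \eta-\epsilon\le \min_{s'\in\mathcal{D}}\lVert\tilde s-s'\rVert_2\}$, which is the quantity actually controlled during training. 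Since $\epsilon$ is the small density radius and $\eta$ a fixed control margin with $\epsilon<\eta<Diam(\mathcal{D})$, this adjustment only perturbs the constants and leaves the stated form $\psi(\phi(s))\le\bar\delta+(L_\psi L_\phi)\epsilon$ intact up to this bookkeeping. Everything else is routine: the density step and the two Lipschitz compositions are immediate, so the bound follows once the margin accounting is pinned down.
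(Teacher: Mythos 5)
Your proposal follows essentially the same route as the paper: use the shell $\epsilon$-density of $\mathcal{\widetilde D}$ to find $\tilde s$ with $\lVert s-\tilde s\rVert_2\le\epsilon$, transfer via the composed Lipschitz bound $(L_\psi L_\phi)\epsilon$, and conclude with the definition of $\bar\delta$. The one point where you go beyond the paper is the margin accounting: the paper's proof simply asserts $\psi(\phi(\tilde s))\le\bar\delta$, silently assuming $\tilde s$ lies in the set over which the maximum in \textit{Equation~\ref{max-ood}} is taken, whereas the triangle inequality only guarantees $\min_{s'\in\mathcal{D}}\lVert\tilde s-s'\rVert_2\ge\eta-\epsilon$; you correctly flag this slack and propose the standard repair (shift the hypothesis to $\eta+\epsilon$ or enlarge the shell defining $\bar\delta$), so your version is, if anything, more careful than the paper's own argument.
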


\begin{proof}
    Let's consider $s\in\mathbb{R}^n$ with
    $\rho=min_{s'\in\mathcal{D}}\lVert s-s'\rVert$ that verifies $\eta\le\rho\le Diam(\mathcal{D})$.
    Since the set of generated states $\mathcal{\widetilde D}$ is $\epsilon$-dense in a shell around $\mathcal{D}$, up to a distance $Diam(\mathcal{D}$,
    we know $\exists~s'\in\mathcal{\widetilde D},~\lVert s-s'\rVert_2~\le~\epsilon$,
    and by Lipschitz continuity :
    $$|\psi(\phi(s))-\psi(\phi(s'))|\le L_\psi\cdot\lVert\phi(s)-\phi(s')\rVert_2\leq (L_\psi L_\phi)\cdot\lVert s-s'\rVert_2\le(L_\psi L_\phi)\epsilon$$
    
    In particular : $\psi(\phi(s))-\psi(\phi(s'))\le(L_\psi L_\phi)\epsilon~$.
    By considering this inequality and \textit{Equation \ref{max-ood}} :
    $$\psi(\phi(s))\le\psi(\phi(s'))+(L_\psi L_\phi)\epsilon\le\bar\delta+(L_\psi L_\phi)\epsilon~.$$

    This demonstrates in and out-of-distribution guarantees within a controlled shell (\textit{Theorem \ref{thm:ood}}).
\end{proof}
\vspace{0.5em}

\subsubsection{Discussion of Standing Assumptions}
\label{app:mapping:ood:assumptions}

We now expand on why the hypothesis in \textit{Appendix \ref{app:mapping:ood:analysis}} are both natural and reasonably attainable.

\paragraph{Compact State Space $\mathcal{S}\subset\mathbb{R}^n$ :}
To the best of our knowledge, most navigation tasks in the literature, and all in OGBench \cite{ogbench},
can be restricted to a bounded subset of $\mathbb{R}^n$. 
Enforcing that $\mathcal{S}$ is closed and bounded, and consequently compact in finite dimension, simply means there are well‐defined "walls” or “obstacles” beyond which the agent never ventures. In the majority of simulated mazes, for example, the reachable region is a finite polygonal domain.

\paragraph{Dataset $\epsilon$-Density in $\mathcal{S}$ :}
In Offline RL, one can typically records many (possibly millions of) trajectories covering the state space (see \textit{Appendix \ref{app:env:data}}). 
It is therefore reasonable to assume that for every reachable point $s\in\mathcal{S}$,
there is some state $s'\in\mathcal{D}$ within 
a small Euclidean radius $\epsilon$.
In practice, this radius can be made arbitrarily small by collecting more trajectories or, in maze navigation, by performing stratified sampling along the map's corridor.

\paragraph{Dataset Diameter $Diam(\mathcal{D})$ :}
We define $Diam(\mathcal{D})=
\min_{s,s'\in\mathcal{D}}
\lVert s-s'\rVert_2
$. This is simply the farthest distance between any two dataset points. In a bounded maze, $Diam(\mathcal{D})$ is finite. It is used to cap how far we need to sample negatives : beyond $Diam(\mathcal{D})$, we consider everything is out of reach.

\paragraph{Negative Sampling with $\mathcal{\widetilde D}$ :}
We assume that our negative sampler can produce, for every point $s\in\mathbb{R}^n$ whose distance to $\mathcal{D}$ is at most $Diam(\mathcal{D})$, at least one generated sample $\tilde s$ within $\epsilon$. 
In other words, the set $\mathcal{\widetilde D}$ is $\epsilon$-dense in the controlled shell
$\{~s~\big|~s\in\mathbb{R}^n~,~
\min_{s'\in\mathcal{D}}~\lVert s-s'\rVert_2
\le Diam(\mathcal{D})~
\}$.
Concretely, one can implement this by drawing uniform random points in the ball of radius $Diam(\mathcal{D})$ around each $s\in\mathcal{D}$.
In low dimension ($n=2$ or $n=3$), one can use a grid to guarantee $\epsilon$‐covering.

\newpage
\paragraph{Lipschitz Continuity :}
MLPs with Lipschitz activations (such as ReLU, GELU, or even Sigmoid), and eventually layer normalization, are provably Lipschitz \citep{nnlipschitz}.
For example in a MLP with K hidden layers, if every weight matrix $W_i$ has
$\lVert W_i \rVert\le\kappa$ and each activation is $1$-Lipschitz, then the MLP itself is L‐Lipschitz, with $L=\kappa^K$. In practice, one can enforce  weight clipping, weight regularization, spectral‐norm regularization, or to keep the Lipschitz coefficient relatively small \citep{surveyreg}.

\vspace{-0.33em}
\paragraph{Positive Samples Constraint :}
During training of the classifier $\psi$, each positive state $s\in\mathcal{D}$ incurs a loss
$-\lambda_{\text{ood}}\cdot
\log
\big(
\psi(\phi(s))
\big)$
with $\lambda_{\text{ood}}$ steadily increased until 
$\psi(\phi(s))\ge1-\delta$ for all $s$.
That is a simple, yet effective, Lagrangian strategy to force $\psi$ into $[1-\delta,1]$ for positive samples.

\vspace{-0.33em}
\paragraph{Negative Samples Worst-Case :}
By construction, any $\tilde s\in\mathcal{\widetilde D}$ incurs loss 
$-\log\big(1-\psi(\phi(\tilde s))\big)$
which pushes $\psi(\phi(\tilde s))$ toward $0$. 
We define 
$\bar\delta=
\max
\{~
\psi(\phi(\tilde s))~\big|~\tilde s\in \mathcal{\widetilde D}~,~
\eta\le\min_{s\in\mathcal{D}}~\lVert \tilde s-s\rVert_2
\}$.\linebreak
In practice, one cannot guarantee
$\bar\delta=0$ exactly, 
but empirical training drives $\bar\delta$ to a small value.

\vspace{-0.33em}
Because each assumption can be well approximated in navigation settings,
the \textit{Lemmas \ref{lemma:near-id}} and \textit{\ref{lemma:not-id}}
become meaningful guarantees : with $\epsilon$, $\delta$, and $\bar\delta$ small enough, $\psi$ \textit{carves out} the reachable regions.

\subsubsection{Discussion of Limitations}
\label{app:mapping:ood:limitations}

While the above analysis is mathematically sound, several practical limitations deserve attention.

\vspace{-0.33em}
\paragraph{High‐Dimensional State Spaces (large n) :}
In $\mathbb{R}^n$ with $n>2$,
uniformly covering the previously defined shell up to a fine $\epsilon$ becomes exponentially costly in $n$. 
Grid‐based sampling will require $O(\epsilon^{-n})$ points. In high‐dimensional pixel‐based observations (e.g. $n\simeq1000$),
one cannot hope to sample negatives points in that entire ball. Instead we should exploit structures such as known game map geometry, or pre‐trained world models to propose out‐of‐distribution samples more intelligently.

\vspace{-0.33em}
\paragraph{Interpolation \& Extrapolation vs. Uniform Sampling :}
Our proposed \texttt{ProQ} implementation use one‐step linear interpolation 
$(1-\alpha)\cdot s_1+\alpha\cdot s_2$
and extrapolation 
$(1+\beta)\cdot s_1-\beta\cdot s_2$,
with $\alpha,\beta\sim\mathcal{U}([0,1])$ and
$s_1,s_2\in\mathcal{D}$. 
These do not uniformly cover the entire negative shell in $\mathbb{R}^n$ for $n>2$, they only explore the convex hull of pairs and its immediate outward extensions. 
Hence, there may exist unreachable points $s$ within the admissible shell that never get generated,
leaving $\psi$ unconstrained. A uniform‐shell sampler would ensure no such gap, but is expensive when $n$ is large.

\vspace{-0.33em}
\subsection{Coulomb-like Energy \& Uniform Keypoint Spreading}
\label{app:mapping:kps}

\vspace{-0.33em}
In this appendix, we discuss and analyze why minimizing a Coulomb‐like repulsive energy over a finite set of learnable keypoints
$\{p_i\}_{i=1}^K\subset\mathbb{R}^d$ pushes those keypoints to spread \textit{uniformly} over a compact domain $\phi(\mathcal{S})$. 
We begin by recalling the classical Coulomb law in physics, then show how our loss in \texttt{ProQ} reduces to a pure repulsive energy in latent space. Finally, we prove under mild assumptions that any minimizer of this energy must maximize pairwise separation and, asymptotically, approaches an uniform distribution.

\vspace{-0.33em}
\subsubsection{From Coulomb’s Law to Our Energy Loss}

In electrostatics, two positively‐charged particles located at $x_i,x_j\in\mathbb{R}^3$
experience a potential :
\begin{equation}
V(x_i,x_j)=\frac{q_i q_j}{4\pi\epsilon_0}\cdot\frac{1}{\lVert x_i-x_j\rVert_2}~
\end{equation}
where $q_i,q_j>0$ are their charges, $\epsilon_0$ is the vacuum permittivity, and $\lVert\cdot\rVert_2$ is the Euclidean norm.

For K charges, when all charges are equal ($q_i=q_j$),
we often only study the total potential energy :
\begin{equation}
E(x_1,\ldots,x_K)=\sum_{1\leq i<j\leq K}\frac{1}{\lVert x_i-x_j\rVert_2}~.
\end{equation}
Minimizing this energy $E(x_1,\ldots,x_K)$ on a compact domain (e.g. the unit sphere $\mathbb{S}^2$) yields the classical Thomson Problem \citep{thompsonsphere,thompsonatomic} of finding well‐separated, nearly‐uniform point configurations.

In \texttt{ProQ}, we replace Euclidean distance by a learned latent \textit{quasimetric}
$d:\mathbb{R}^d\times\mathbb{R}^d\rightarrow\mathbb{R}^+$. 
Concretely, if $\{z_k\}_{k=1}^K$
are the embeddings of the latent learnable keypoints, then our Coulomb‐like repulsive loss becomes :
\begin{equation}
\mathcal{L}_{\mathrm{repel}}=
\sum_{i\neq j}\frac{1}{d(z_i,z_j)
+\epsilon}~,
\label{eq:repel-loss}
\end{equation}
where $\epsilon>0$ is a small constant that prevents numerical blow‐up when two points become very close. 

\vspace{0.5em}
Repulsion alone could push keypoints outside the region covered by the dataset.  
We therefore add a soft barrier derived from the OOD classifier $\psi$, penalising points that leave the in-distribution set :
\begin{equation}
\mathcal{L}_{\text{ood}}
  = -\lambda_{\text{ood}}\cdot\sum_{k=1}^{K}\log \psi(z_k),
\label{eq-kps-ood}
\end{equation}
with $\lambda_{\text{ood}}$ updated until satisfaction of a penalty $\big((1-\delta)-\sum_{k=1}^K\psi(z_k)\big)^{2}$, with $0<\delta<1$ the compliance. In other words, on average, all latent keypoints must stay in-distribution to some extent.

\vspace{0.5em}
The energy-based losses add up to $\mathcal{L}_{\text{kps}} = \mathcal{L}_{\text{rep}} + \mathcal{L}_{\text{ood\_kps}}$. At each gradient step, while optimizing this composed loss, $\phi$, $d$, and $\psi$ are frozen to avoid instabilities in the learning of the latent space.

\vspace{2em}
\subsubsection{Theoretical Analysis}
\label{app:mapping:kps:analysis}

In this section, we analyze the geometric properties of a set of $K$ keypoints $\{s_k\}_{k=1}^K$ within a compact state space $\mathcal{S}\subset\mathbb{R}^n$, for the Euclidean topology, that minimize a repulsive energy function $E:\mathcal{S}^K\rightarrow\mathbb{R}^+\cup\{+\infty\}$ based on a continuous quasimetric
$d:\mathcal{S}\times\mathcal{S}\rightarrow\mathbb{R}^+$.
The goal is to understand how effectively such a function can lead to keypoints that provide a good coverage of $\mathcal{S}$.

This analysis considers an idealized scenario where the learned keypoints have achieved a global minimum of the repulsive energy, abstracting away from the iterative optimization process and the influence of the OOD barrier while learning the keypoints (which keeps them in-distribution areas).

\vspace{1em}
Throughout this analysis we consider :

\begin{itemize}[leftmargin=10pt]

    \item[--] A compact state space $\mathcal{S}\subset\mathbb{R}^n$ for the Euclidean topology, not reduced to a single point if $K\in\mathbb{N}^*$.

    \vspace{1em}
    \item[--] A continuous quasimetric $d:\mathcal{S}\times\mathcal{S}\rightarrow\mathbb{R}^+$ uniformly equivalent to the Euclidean distance :
    \begin{equation}
        \exists~c_1,c_2>0,~\forall x,y\in\mathcal{S},~
        c_1\cdot\lVert x-y\rVert_2
        \le d(x,y)
        \le c_2\cdot\lVert x-y\rVert_2
        \label{ineqdist}
    \end{equation}

    \item[--] An energy function $E:\mathcal{S}^K\rightarrow\mathbb{R}^+\cup\{+\infty\}$ defined as :
    \begin{equation}
        E:\mathcal{S}^K\rightarrow\mathbb{R}^+\cup\{+\infty\}~,\ 
        s_1,\ldots,s_K\rightarrow\sum_{i\neq j}\frac{1}{d(s_i,s_j)}
    \end{equation}
    \vspace{-1em}

    \item[--] A metric $d':\mathcal{S}\times\mathcal{S}\rightarrow\mathbb{R}^+$ defined as : $\forall x,y\in\mathcal{S},~d'(x,y)=\min\big(d(x,y),d(y,x)\big)$ .

\end{itemize}
\vspace{1em}

\begin{lemma}
    \textbf{Compactness of $(\mathcal{S},d')$ and Equivalence of Topologies :}
    $(\mathcal{S},d')$ forms a compact metric space.
    The topology induced by $d'$ on $\mathcal{S}$ is identical to the Euclidean topology on $\mathcal{S}$.
\end{lemma}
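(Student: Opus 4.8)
The plan is to reduce every claim to a single two-sided estimate comparing $d'$ with the Euclidean distance, and to read off the metric axioms, the topology, and compactness from it. First I would establish
\begin{equation*}
c_1\,\lVert x-y\rVert_2 \;\le\; d'(x,y) \;\le\; c_2\,\lVert x-y\rVert_2 \qquad \forall\, x,y\in\mathcal{S}.
\end{equation*}
This is immediate from the standing equivalence \eqref{ineqdist}: because the Euclidean distance is symmetric, both $d(x,y)$ and $d(y,x)$ lie in the interval $[\,c_1\lVert x-y\rVert_2,\,c_2\lVert x-y\rVert_2\,]$, and hence so does their minimum $d'(x,y)=\min\bigl(d(x,y),d(y,x)\bigr)$. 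This sandwich is the workhorse for everything that follows.

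Next I would check the metric axioms for $d'$. Non-negativity is inherited from $d\ge 0$ (or from the lower bound); symmetry is automatic since $\min$ is symmetric in its two arguments, so $d'(y,x)=d'(x,y)$; and the identity of indiscernibles follows from the lower bound, since $d'(x,y)=0$ forces $\lVert x-y\rVert_2=0$, i.e.\ $x=y$, while $d'(x,x)=0$ is clear. The topology claim is then handled directly by the sandwich: it yields the nesting of balls
\begin{equation*}
B_{\lVert\cdot\rVert_2}\!\bigl(x,\,r/c_2\bigr) \;\subseteq\; B_{d'}(x,r) \;\subseteq\; B_{\lVert\cdot\rVert_2}\!\bigl(x,\,r/c_1\bigr),
\end{equation*}
so the $d'$-balls and the Euclidean balls generate exactly the same open sets; equivalently, the identity map $(\mathcal{S},\lVert\cdot\rVert_2)\to(\mathcal{S},d')$ is a homeomorphism by squeezing convergent sequences. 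Compactness then comes for free: $\mathcal{S}$ is closed and bounded in $\mathbb{R}^n$, hence Euclidean-compact by Heine--Borel, and since $d'$ induces the Euclidean topology, $(\mathcal{S},d')$ is compact as well.

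The step I expect to be the real obstacle is the triangle inequality for $d'$, the one axiom that does \emph{not} drop out of the sandwich. Symmetrising a quasimetric by a \emph{minimum} does not in general preserve subadditivity: from $d(x,z)\le d(x,y)+d(y,z)$ one only controls the forward direction, and taking minima on the right-hand side can destroy the bound, so the inequality genuinely requires either an additional structural assumption on $d$ or a separate argument. By contrast, the \emph{maximum} symmetrisation $\max\bigl(d(x,y),d(y,x)\bigr)$ satisfies the triangle inequality directly (applying $d$'s triangle inequality in both directions) and is sandwiched by the same constants, so it provides a safe fallback for the metric claim if the minimum proves delicate. Crucially, the two facts that the downstream existence-of-minimiser argument actually consumes — compactness of the domain and equivalence of the $d'$- and Euclidean topologies — depend only on the sandwich estimate, and therefore hold irrespective of how this last axiom is resolved.
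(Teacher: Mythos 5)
Your main line of argument is exactly the paper's: the paper's proof consists solely of deriving the two\nobreakdash-sided estimate $c_1\lVert x-y\rVert_2\le d'(x,y)\le c_2\lVert x-y\rVert_2$ (from the fact that both $d(x,y)$ and $d(y,x)$ lie in $[c_1\lVert x-y\rVert_2,\,c_2\lVert x-y\rVert_2]$, hence so does their minimum) and then invoking uniform equivalence of metrics to transfer the topology and compactness from the Euclidean structure. Your ball-nesting formulation and the Heine--Borel step are the same argument spelled out slightly more explicitly.

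The obstacle you flag, however, is a genuine gap --- and it is a gap in the paper's own proof, not just a difficulty you failed to overcome: the paper never verifies that $d'$ satisfies the triangle inequality, and in general the $\min$-symmetrisation of a quasimetric does not. A concrete witness consistent with all the standing assumptions: take three states at mutual Euclidean distance about $1$ and set $d(x,y)=10$, $d(y,x)=1$, $d(y,z)=1$, $d(z,y)=10$, $d(x,z)=d(z,x)=11$; one checks that every quasimetric triangle inequality holds (e.g.\ $d(x,z)\le d(x,y)+d(y,z)=11$) and that the sandwich holds with $c_1=1/2$, $c_2=20$, yet $d'(x,z)=11>2=d'(x,y)+d'(y,z)$. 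So the assertion that $(\mathcal{S},d')$ is a \emph{metric} space is unproved as stated. Your proposed remedies are the right ones: either replace $d'$ by the $\max$-symmetrisation, which is subadditive and obeys the same two-sided bounds, or weaken the lemma to claim only what is actually consumed downstream --- the well-separated-configuration construction and the minimiser-existence theorem use nothing beyond compactness and the equivalence of $d'$-convergence with Euclidean convergence, both of which follow from the sandwich alone. In that sense your write-up is at least as complete as the paper's proof and more candid about the one axiom that does not follow.
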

\begin{proof}

    Given the assumptions (\textit{Equation \ref{ineqdist}}), for any $x,y\in\mathcal{S}$ :
    $
    d'(x,y) \leq d(x,y) \leq c_2\cdot\lVert x-y\rVert_2
    $\linebreak
    and as :
    $
    c_1\cdot\lVert x-y \rVert_2\leq d(x,y)
    \ \text{,}\
    c_1\cdot\lVert y-x \rVert_2\leq d(x,y)
    \ \text{, and}\ \
    \lVert x-y\rVert_2 = \lVert y-x\rVert_2
    $
    we have :
    \begin{equation}
        c_1\cdot\lVert x-y\rVert_2
        \le d'(x,y)
        \le c_2\cdot\lVert x-y\rVert_2
    \end{equation}
    Therefore $d'$ is uniformly equivalent to the Euclidean distance, which implies topological equivalence. Thus, since $\mathcal{S}$ is compact in the Euclidean topology, it is also compact in the one induced by $d'$.
\end{proof}

\begin{theorem}
    \textbf{Existence and Distinctness of Energy Minimizers :} Let's consider the energy function
    $E:\mathcal{S}^K\rightarrow\mathbb{R}^+\cup\{+\infty\}$ . Under the previous assumptions of compactness and continuity we have :
    
    \vspace{-0.1em}
    (i) $E$
    has a finite minimum, and there exists a configuration
    $\{s^*_k\}_{k=1}^K\subset\mathcal{S}$
    that reaches it.\\
    (ii) For any such minimizer $\{s^*_k\}_{k=1}^K\subset\mathcal{S}~$, if $K > 1$ then all keypoints are distinct.
\end{theorem}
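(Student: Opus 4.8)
The plan is to treat $E$ as an extended-real-valued function on the compact product space $\mathcal{S}^K$ and invoke the Weierstrass theorem for lower semicontinuous functions; part (ii) will then follow almost immediately from the finiteness established in part (i).

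First, for part (i), I would record that $\mathcal{S}^K$ is compact, being a finite product of the compact space $\mathcal{S}$. Next I would establish that $E$ is lower semicontinuous as a map into $\mathbb{R}^+\cup\{+\infty\}$. The key is to analyze a single term $(s_i,s_j)\mapsto 1/d(s_i,s_j)$. Away from the diagonal, $d$ is continuous by assumption and strictly positive by the identity of indiscernibles together with the lower bound $c_1\lVert s_i-s_j\rVert_2\le d(s_i,s_j)$ from Equation \ref{ineqdist}, so $t\mapsto 1/t$ composed with $d$ is continuous there. On the diagonal, where $s_i=s_j$ forces $d(s_i,s_j)=0$ and hence the term equals $+\infty$, I would check lower semicontinuity directly: for any sequence approaching a diagonal point, continuity of $d$ gives $d\to 0$, so $1/d\to+\infty$, and the $\liminf$ therefore matches the value $+\infty$. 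Since a finite sum of nonnegative lower semicontinuous functions is lower semicontinuous, $E$ is lower semicontinuous on $\mathcal{S}^K$.

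I would then exhibit a configuration of finite energy to guarantee the minimum is not $+\infty$. Because $\mathcal{S}$ is compact and not reduced to a single point, it contains at least $K$ distinct points; choosing such a configuration $\{s_k\}$ makes every pairwise distance strictly positive, so each term $1/d(s_i,s_j)$ is finite and $E(\{s_k\})<+\infty$. Combined with $E\ge 0$, this shows $\inf_{\mathcal{S}^K}E$ is a finite nonnegative real. The Weierstrass theorem for lower semicontinuous functions on a compact set then yields a configuration $\{s^*_k\}$ attaining this infimum, which is exactly statement (i); the attained value is finite precisely because the infimum is.

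Part (ii) follows by contradiction from the finiteness just proved: if a minimizer $\{s^*_k\}$ with $K>1$ had two coinciding keypoints $s^*_a=s^*_b$ with $a\neq b$, then $d(s^*_a,s^*_b)=0$ would make the corresponding term $1/d(s^*_a,s^*_b)=+\infty$, so $E(\{s^*_k\})=+\infty$, contradicting that $\{s^*_k\}$ realizes the finite minimum. Hence all keypoints at any minimizer must be distinct. The main obstacle is the lower-semicontinuity verification at the diagonal and the careful bookkeeping of the extended-real value $+\infty$; the existence of a finite-energy configuration is where the hypothesis that $\mathcal{S}$ is not a single point, and implicitly contains at least $K$ points as holds for any nondegenerate continuum, is essential.
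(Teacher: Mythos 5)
Your proof is correct, and it takes a genuinely different route from the paper's. The paper does not invoke lower semicontinuity: it fixes a reference configuration of $K$ distinct points with finite energy $M_0$, restricts attention to the sublevel set $\mathcal{C}=\{E\le M_0\}$, derives the uniform separation bound $d(s_i,s_j)\ge 1/M_0$ for every configuration in $\mathcal{C}$, shows $\mathcal{C}$ is closed (hence compact) and that $E$ is \emph{continuous} there because all denominators are bounded away from zero, and then applies the ordinary extreme value theorem; distinctness of the minimizer falls out of the same separation bound. You instead treat $E$ as an extended-real-valued function on all of $\mathcal{S}^K$, verify lower semicontinuity at the diagonal (correctly, using $d\to 0$ together with nonnegativity so that $1/d\to+\infty$), and apply the generalized Weierstrass theorem, obtaining distinctness by the immediate contradiction with finiteness. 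Your argument is shorter and avoids the sublevel-set bookkeeping, at the cost of the lsc verification; the paper's argument is more elementary (no extended-real-valued semicontinuity machinery) and yields as a byproduct the quantitative estimate $d(s^*_i,s^*_j)\ge 1/M_0$ on the minimizer's pairwise separations, which is strictly more information than bare distinctness. Both proofs share the same implicit hypothesis that $\mathcal{S}$ contains at least $K$ distinct points, which you flag explicitly and the paper glosses over when it asserts that a configuration of $K$ distinct points can be selected whenever $\mathcal{S}$ is not a singleton.
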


\begin{proof}

    If $\mathcal{S}\subset\mathbb{R}^n$ is reduced to a single point, the proof is direct (the energy is $0$ as the empty sum).

    If $\mathcal{S}$ is not reduced to a single point, we can select a configuration 
    $\{u_k\}_{k=1}^K\subset\mathcal{S}$ of $K$ distinct points.
    Then $\forall~i\neq j~,~u_i\neq u_j$, we have $\forall~i\neq j~,~d(u_i,u_j)\neq 0$ and
    $M_o=E(u_1,\ldots,u_K)>0$ is finite.

    We consider the set $\mathcal{C}=\big\{
    \{s_k\}_{k=1}^K\in\mathcal{S}^K~\big|~
    E(s_1,\ldots,s_K)\leq M_0\big\}$.
    
    If $\{s_k\}_{k=1}^K\in\mathcal{C}$ then all its points are distinct, as otherwise there exist $s_i=s_j$ and $d(s_i,s_j)=0$ which would lead to $E(s_1,\ldots,s_K)=+\infty$ contradicting $E(s_1,\ldots,s_K)\leq M_0$. In particular :
    \begin{equation}
        \forall~i\neq j
        ~,~
        \frac{1}{d(s_i,s_j)}
        \leq\sum_{i'\neq j'}\frac{1}{d(s_{i'},s_{j'})}
        \leq M_0
    \end{equation}
    Which leads to :
    \begin{equation}
    \forall~\{s_k\}_{k=1}^K\in\mathcal{C}
    ~,~\forall~i\neq j
        ~,~
    d(s_i,s_j)\geq\frac{1}{M_0}
    \label{minoptimdist}
    \end{equation}
    Let's consider $\big(\{s_k^{(n)}\}_{k=1}^K\big)_{n\in\mathbb{N}}$
    be a converging sequence in $\mathcal{C}$
    with a limit $\{s_k^*\}_{k=1}^K\in\mathcal{S}^K$.
    
    By continuity of $d$ we have : 
    $$
    \forall~i\neq j
    ~,~
    d(s_i^{(n)},s_j^{(n)})\geq\frac{1}{M_0}
    \Rightarrow
    \lim_{n\rightarrow+\infty}
    d(s_i^{(n)},s_j^{(n)})=
    d(s_i^*,s_j^*)
    \geq\frac{1}{M_0}
    $$
    As $d$ is continuous, each term $d(s_i,s_j)^{-1}$ is a continuous function of $s_i$ and $s_j$ in a neighborhood of $(s_i^*,s_j^*)$ where the denominator is non-zero. Thus $E$, as a finite sum, is continuous at $\{s_k^*\}_{k=1}^K$ and :
    $$
    E(s_1^{(n)},\ldots,s_K^{(n)})\leq M_0\Rightarrow\lim_{n\rightarrow+\infty}
    E(s_1^{(n)},\ldots,s_K^{(n)})=
    E(s_1^{*},\ldots,s_K^{*})\leq M_0
    $$
    Thus $\mathcal{C}$ contains all its limit points and therefore is closed. Since $\mathcal{S}^K$ is compact, as a product of compact set, $\mathcal{C}$ is also compact.
    Moreover, the function $E$ is continuous on the non-empty compact set $\mathcal{C}$
    (as shown, all points in $\mathcal{C}$ have bounded distance away from zero, ensuring continuity of $E$).
    By the extreme value theorem, $E$ attains its minimum on $\mathcal{C}$ at some configuration
    $\{s_k^*\}_{k=1}^K$. This minimum value
    $E(s_1^*,\ldots,s_K^*)$ is finite and at most $M_0$ and is the global minimum over $\mathcal{S}^K$. Indeed, any configurations $\{u_k\}_{k=1}^K$ outside $\mathcal{C}$ either have $E(u_1,\ldots,u_K)>M_0$ or $E(u_1,\ldots,u_K)=+\infty$.
    Hence, recalling \textit{Equation \ref{minoptimdist}}, it comes that every points of the minimizer $\{s_k^*\}_{k=1}^K$ are distinct.
\end{proof}

\newpage
\subsection{Learned State-Space Mappings}

\vspace{-0.33em}
\textit{Figure \ref{fig:all_mapping}} shows the learned state mappings.
In the Medium maze, corridors are consistently identified as in‐distribution, and the red keypoints uniformly occupy each hallway. We notably notice that varying the seed does not significantly alter the learned latent coverage. 
Similarly, in the Large maze, the OOD boundary tightly traces the walls, and keypoints remain evenly spaced along identical corridors for all seeds, confirming robustness. 
In the Teleport environment, the portal squares introduce stochastic transitions ; the OOD classifier occasionally colors portal cells green (intermediate confidence), and keypoints cluster variably near these portals for different seeds, indicating sensitivity to random initialization in regions with non‐deterministic dynamics.
Finally, in the Giant maze, the expansive corridor network is consistently recognized as in‐distribution, and keypoints again spread uniformly across all three seeds, illustrating that ProQ’s mapping remains stable even at larger scale.
\vspace{-0.25em}

\begin{figure}[h]
\begin{center}
    \begin{subfigure}
        {0.3\textwidth}        
            \centering
            \includegraphics[width=\linewidth]{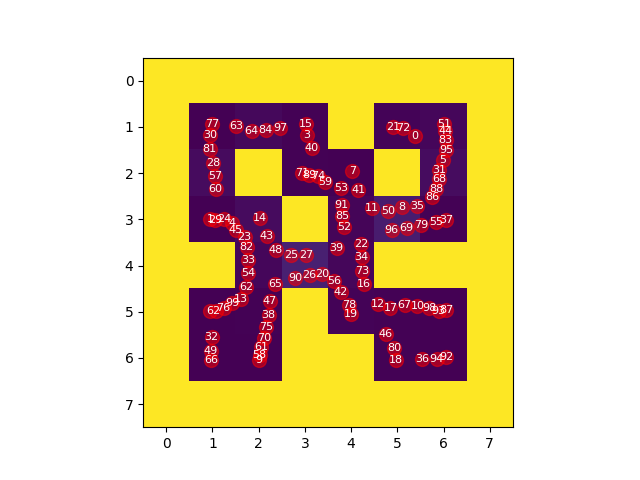}
            \caption{Medium {\small(seed 100)}.}
            \label{fig:mapping-medium-100}
    \end{subfigure}
    \hfill
    \begin{subfigure}
        {0.3\textwidth}         
            \centering
            \includegraphics[width=\linewidth]{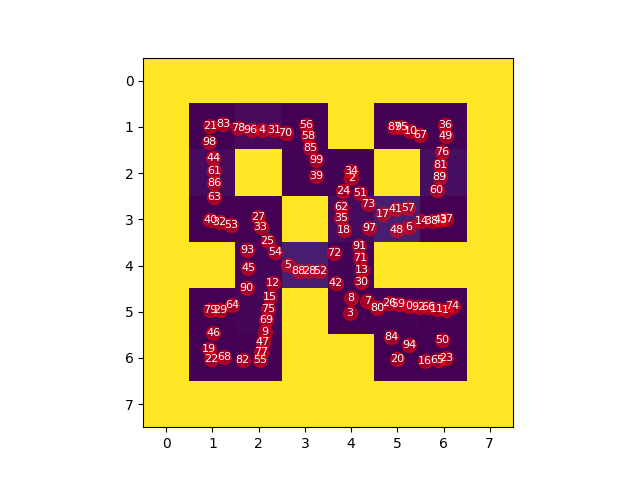}
            \caption{Medium {\small(seed 200)}.}
            \label{fig:mapping-medium-200}
    \end{subfigure}
    \hfill
    \begin{subfigure}
        {0.3\textwidth}         
            \centering
            \includegraphics[width=\linewidth]{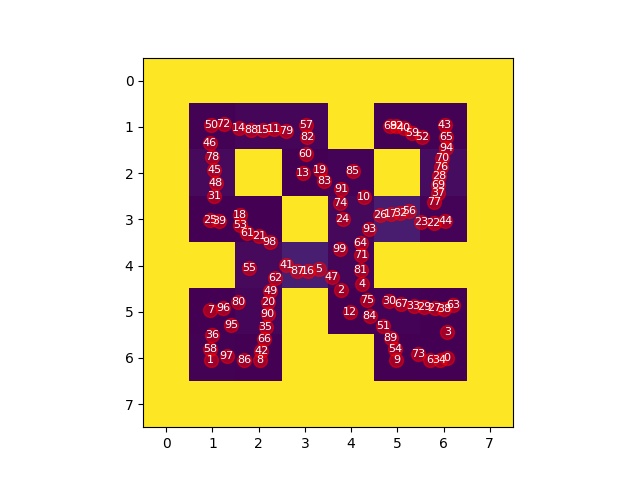}
            \caption{Medium {\small(seed 300)}.}
            \label{fig:mapping-medium-300}
    \end{subfigure}
    \begin{subfigure}
        {0.3\textwidth}          
            \centering
            \includegraphics[width=\linewidth]{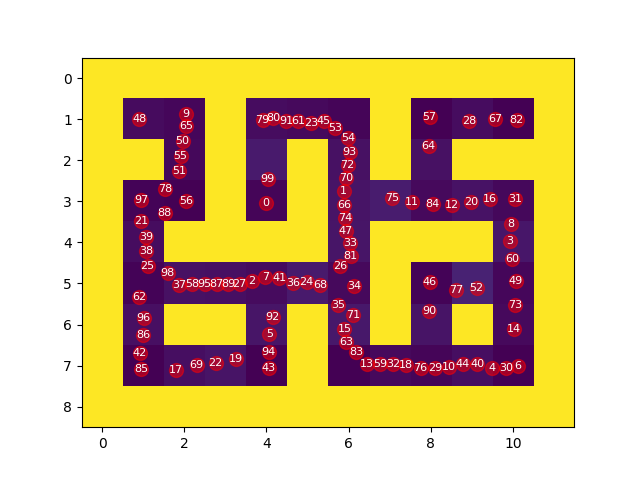}
            \caption{Large {\small(seed 100)}.}
            \label{fig:mapping-large-100}
    \end{subfigure}
    \hfill
    \begin{subfigure}
        {0.3\textwidth}          
            \centering
            \includegraphics[width=\linewidth]{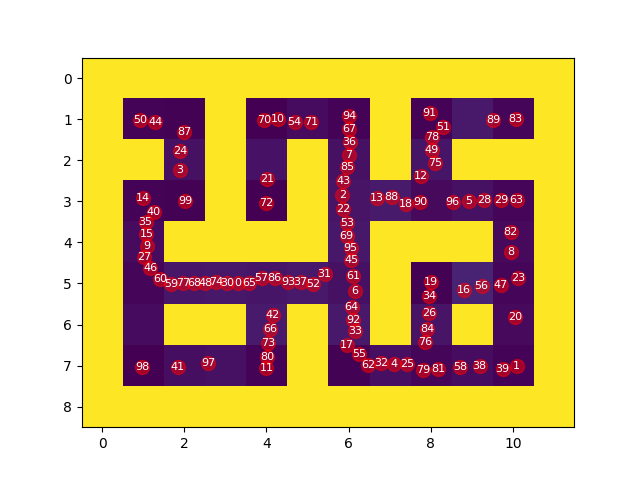}
            \caption{Large {\small(seed 200)}.}
            \label{fig:mapping-large-200}
    \end{subfigure}
    \hfill
    \begin{subfigure}
        {0.32\textwidth}        
            \centering
            \includegraphics[width=\linewidth]{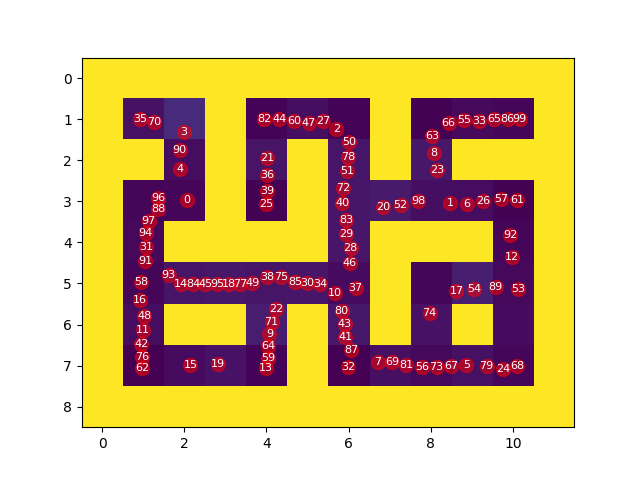}
            \caption{Large {\small(seed 300)}.}
            \label{fig:mapping-large-300}
    \end{subfigure}
    \begin{subfigure}
        {0.3\textwidth}        
            \centering
            \includegraphics[width=\linewidth]{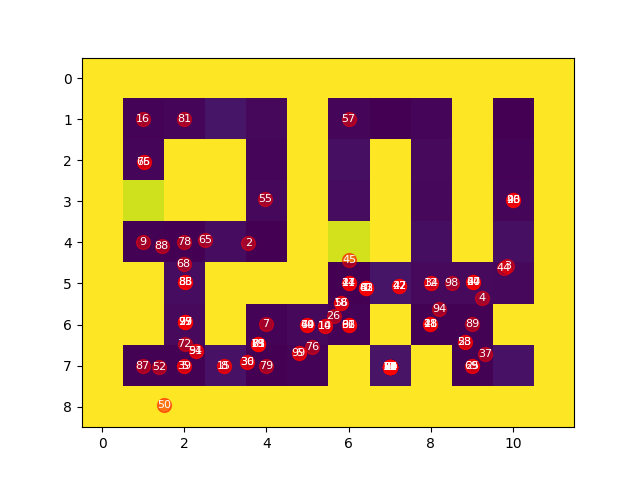}
            \caption{Teleport {\small(seed 100)}.}
            \label{fig:mapping-teleport-100}
    \end{subfigure}
    \hfill
    \begin{subfigure}
        {0.3\textwidth}         
            \centering
            \includegraphics[width=\linewidth]{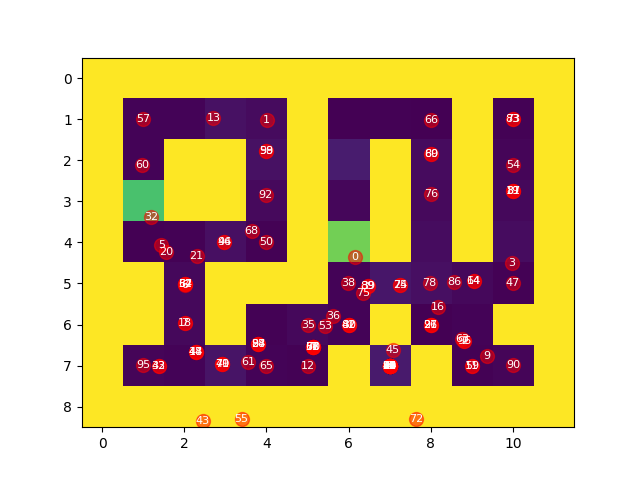}
            \caption{Teleport {\small(seed 200)}.}
            \label{fig:mapping-teleport-200}
    \end{subfigure}
    \hfill
    \begin{subfigure}
        {0.3\textwidth}          
            \centering
            \includegraphics[width=\linewidth]{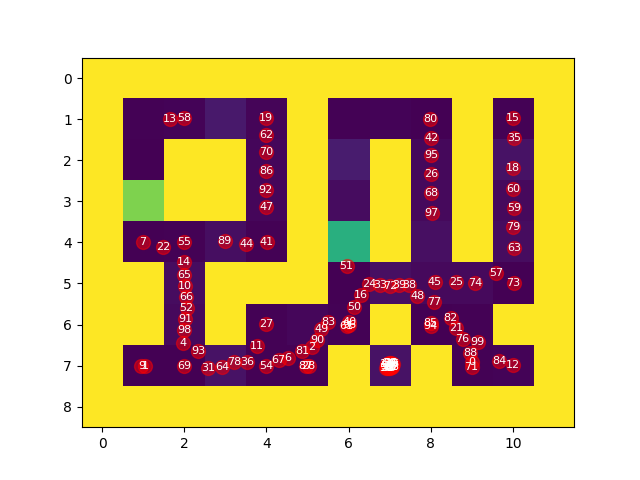}
            \caption{Teleport {\small(seed 300)}.}
            \label{fig:mapping-teleport-300}
    \end{subfigure}
    \begin{subfigure}
        {0.3\textwidth}         
            \centering
            \includegraphics[width=\linewidth]{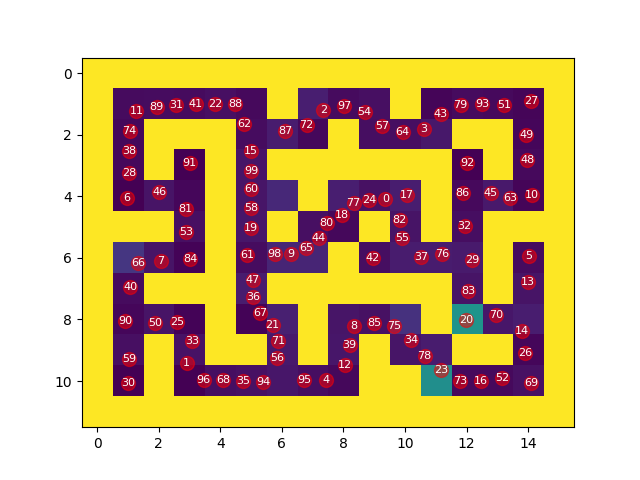}
            \caption{Giant {\small(seed 100)}.}
            \label{fig:mapping-giant-100}
            \vspace{0.25em}
    \end{subfigure}
    \hfill
    \begin{subfigure}
        {0.3\textwidth}          
            \centering
            \includegraphics[width=\linewidth]{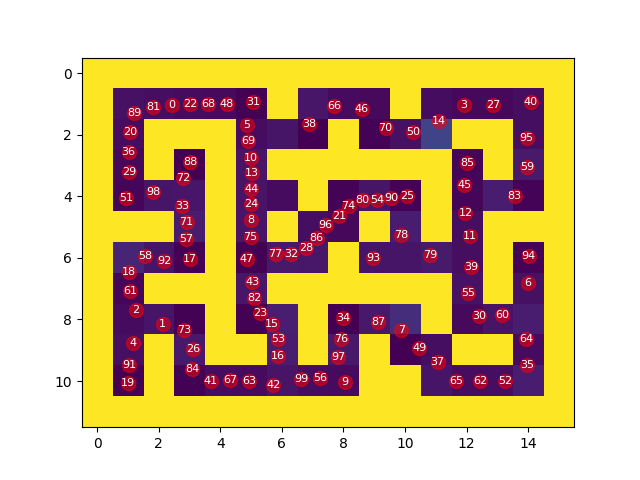}
            \caption{Giant {\small(seed 200)}.}
            \label{fig:mapping-giant-200}
            \vspace{0.25em}
    \end{subfigure}
    \hfill
    \begin{subfigure}
        {0.3\textwidth}         
            \centering
            \includegraphics[width=\linewidth]{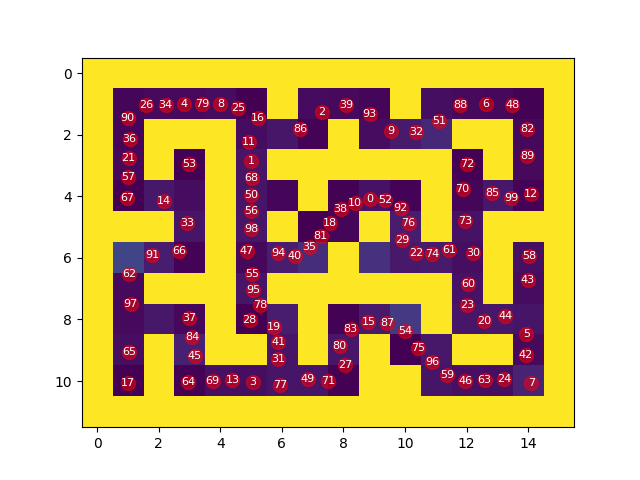}
            \caption{Giant {\small(seed 300)}.}
            \label{fig:mapping-giant-300}
            \vspace{0.25em}
    \end{subfigure}
\end{center}
\vspace{-0.5em}
\caption{
\textbf{Learned State-Space Mappings.}
Each subplot displays the OOD classifier’s output (from \textit{yellow} for high OOD probability, to \textit{purple} for high in‐distribution probability) along with the learned keypoints as red dots. Each of the rows correspond to a different environment (Medium, Large, Teleport, and Giant), and each of the columns correspond to a different seeding ($100$, $200$, and $300$).
}
\label{fig:all_mapping}
\end{figure}


\end{document}